\newcommand{\qjt}{Q_{\operatorname{jt}}}
\newcommand{\vjt}{V_{\operatorname{jt}}}
\newcommand{\ujt}{\boldsymbol{a}}
\newcommand{\pai}{a_{<i}}
\newcommand{\pijto}{\pi_{\operatorname{jt}}^*}
\newcommand{\qjtdep}{Q_{\operatorname{jt}}^{\operatorname{dep}}}
\newcommand{\qjtnodep}{Q_{\operatorname{jt}}^{\operatorname{ind}}}
\newcommand{\pinodep}{\pi^{\operatorname{ind}}_i(a_i|\tauproof;\theta_i)}
\newcommand{\pidep}{\pi^{\operatorname{dep}}_i(a_i|\tauproof, \pai)}
\newcommand{\bdep}{b^{\operatorname{dep}}_i(a_i|\tauproof, \pai; \phi_i)}
\newcommand{\qnodep}{Q^{\operatorname{ind}}_i(a_i|\tauproof;\psi_i)}
\newcommand{\qdep}{Q^{\operatorname{dep}}_i(a_i|\tauproof, \pai)}
\newcommand{\cdep}{c^{\operatorname{dep}}_i(a_i|\tauproof, \pai; \omega _i)}
\newcommand{\bjtdep}{b_{\operatorname{jt}}^{\operatorname{dep}}}
\newcommand{\pinew}{\pi^{\operatorname{new}}_i}
\newcommand{\pipanew}{\pi^{\operatorname{new}}_{< i}}
\newcommand{\pinewfull}{\pi^{\operatorname{new}}_i(a_i|\tauproof, \pai)}
\newcommand{\piold}{\pi^{\operatorname{old}}_i}
\newcommand{\pioldfull}{\pi^{\operatorname{old}}_i(a_i|\tauproof, \pai)}
\newcommand{\pijtnew}{\pi_{\operatorname{jt}}^{\operatorname{new}}}
\newcommand{\pijtold}{\pi_{\operatorname{jt}}^{\operatorname{old}}}
\newcommand{\ujtset}{A}
\DeclareMathOperator{\lossdep}{\mathcal{L}^{\operatorname{dep}}}
\DeclareMathOperator{\lossnodep}{\mathcal{L}^{\operatorname{ind}}}
\DeclareMathOperator{\losspidep}{\mathcal{J}^{\operatorname{dep}}}
\DeclareMathOperator{\losspinodep}{\mathcal{J}^{\operatorname{ind}}}
\DeclareMathOperator{\E}{\mathbb{E}}
\DeclareMathOperator{\D}{\mathcal{D}}
\DeclareMathOperator{\entropy}{\mathcal{H}}
\DeclareMathOperator{\pijt}{\pi_{jt}}
\DeclareMathOperator{\taujtproof}{s}
\DeclareMathOperator{\tauproof}{s}
\DeclareMathOperator{\taujtsetproof}{S}
\DeclareMathOperator*{\argmax}{arg\,max}
\DeclareMathOperator*{\argmin}{arg\,min}
\DeclareMathOperator{\op}{\Gamma}
\DeclareMathOperator{\kl}{D_{\operatorname{KL}}}
\DeclareMathOperator{\pijtdep}{\pi_{jt}^{dep}}
\DeclareMathOperator{\pijtnodep}{\pi_{jt}^{ind}}
\DeclareMathOperator{\mixer}{\mathrm{Mixer}}
\newtheorem{lemma}{Lemma}[section]
\newtheorem{theorem}{Theorem}
\def\@fnsymbol#1{\ensuremath{\ifcase#1\or \dagger\or \ddagger\or
   \mathsection\or \mathparagraph\or \|\or **\or \dagger\dagger
   \or \ddagger\ddagger \else\@ctrerr\fi}}
\title{More Centralized Training, Still Decentralized Execution: Multi-Agent Conditional Policy Factorization}
\author{%
  Jiangxing Wang\\
  School of Computer Science \\
  Peking University \\
  \texttt{\small jiangxiw@stu.pku.edu.cn}\\
  \And
  Deheng Ye\\
  Tencent AI Lab\\
  \texttt{\small dericye@tencent.com}\\
  \And
  Zongqing Lu\thanks{Corresponding Author}\\
  Peking University \\
  BAAI \\
  \texttt{\small zongqing.lu@pku.edu.cn}\\
}
\begin{document}
\maketitle
\begin{abstract}
In cooperative multi-agent reinforcement learning (MARL), combining value decomposition with actor-critic enables agents to learn stochastic policies, which are more suitable for the partially observable environment. Given the goal of learning local policies that enable decentralized execution, agents are commonly assumed to be independent of each other, even in centralized training. However, such an assumption may prohibit agents from learning the optimal joint policy. To address this problem, we explicitly take the dependency among agents into centralized training. Although this leads to the optimal joint policy, it may not be factorized for decentralized execution. Nevertheless, we theoretically show that from such a joint policy, we can always derive another joint policy that achieves the same optimality but can be factorized for decentralized execution. To this end, we propose \textit{multi-agent conditional policy factorization} (MACPF), which takes more centralized training but still enables decentralized execution. We empirically verify MACPF in various cooperative MARL tasks and demonstrate that MACPF achieves better performance or faster convergence than baselines. Our code is available at \href{https://github.com/PKU-RL/FOP-DMAC-MACPF}{https://github.com/PKU-RL/FOP-DMAC-MACPF}.

\end{abstract}

\section{Introduction}

The cooperative multi-agent reinforcement learning (MARL) problem has attracted the attention of many researchers as it is a well-abstracted model for many real-world problems, such as traffic signal control~\citep{wang2021adaptive} and autonomous warehouse~\citep{zhou2021multi}. In a cooperative MARL problem, we aim to train a group of agents that can cooperate to achieve a common goal. Such a common goal is often defined by a global reward function that is shared among all agents. If centralized control is allowed, such a problem can be viewed as a single-agent reinforcement learning problem with an enormous action space. Based on this intuition, \citet{kraemer2016multi} proposed the centralized training with decentralized execution (CTDE) framework to overcome the non-stationarity of MARL. In the CTDE framework, a centralized value function is learned to guide the update of each agent's local policy, which enables decentralized execution.

With a centralized value function, there are different ways to guide the learning of the local policy of each agent. One line of research, called value decomposition \citep{sunehag2018value}, obtains local policy by factorizing this centralized value function into the utility function of each agent. In order to ensure that the update of local policies can indeed bring the improvement of joint policy, Individual-Global-Max (IGM) is introduced to guarantee the consistency between joint and local policies. Based on the different interpretations of IGM, various MARL algorithms have been proposed, such as VDN~\citep{sunehag2018value}, QMIX~\citep{rashid2018qmix}, QTRAN~\citep{son2019qtran}, and QPLEX~\citep{wang2020qplex}. IGM only specifies the relationship between optimal local actions and optimal joint action, which is often used to learn deterministic policies. In order to learn stochastic policies, which are more suitable for the partially observable environment, recent studies \citep{su2021value,wang2020off,zhang2021fop,su2022divergence} combine the idea of value decomposition with actor-critic. While most of these decomposed actor-critic methods do not guarantee optimality, FOP~\citep{zhang2021fop} introduces Individual-Global-Optimal (IGO) for the optimal joint policy learning in terms of maximum-entropy objective and derives the corresponding way of value decomposition. It is proved that factorized local policies of FOP converge to the global optimum, given that IGO is satisfied.

The essence of IGO is for all agents to be independent of each other during both training and execution. However, we find this requirement dramatically reduces the expressiveness of the joint policy, making the learning algorithm fail to converge to the global optimal joint policy, even in some simple scenarios. As centralized training is allowed, a natural way to address this issue is to factorize the joint policy based on the chain rule~\citep{schum2001evidential}, such that the dependency among agents' policies is explicitly considered, and the full expressiveness of the joint policy can be achieved. By incorporating such a joint policy factorization into the soft policy iteration~\citep{haarnoja2018soft}, we can obtain an optimal joint policy without the IGO condition. Though optimal, a joint policy induced by such a learning method may not be decomposed into independent local policies, thus decentralized execution is not fulfilled, which is the limitation of many previous works that consider dependency among agents~\citep{bertsekas2019multiagent, fu2022revisiting}. 

To fulfill decentralized execution, we first theoretically show that for such a \textit{dependent} joint policy, there always exists another \textit{independent} joint policy that achieves the same expected return but can be decomposed into independent local policies. To learn the optimal joint policy while preserving decentralized execution, we propose {\textit{multi-agent conditional policy factorization}} ({MACPF}), where we represent the dependent local policy by combining an independent local policy and a dependency policy correction. The dependent local policies factorize the optimal joint policy, while the independent local policies constitute their independent counterpart that enables decentralized execution. We evaluate MACPF in several tasks, including matrix game ~\citep{rashid2020weighted}, SMAC~\citep{samvelyan19smac}, and MPE~\citep{lowe2017multi}. Empirically, MACPF consistently outperforms its base method, \textit{i.e.}, FOP, and achieves better performance or faster convergence than other baselines. By ablation, we verify that the independent local policies can indeed obtain the same level of performance as the dependent local policies.

\section{Preliminaries}
\subsection{Multi-Agent Markov Decision Process}
In cooperative MARL, we often formulate the problem as a multi-agent Markov decision process (MDP)~\citep{boutilier1996planning}. A multi-agent MDP can be defined by a tuple $\left< I, S, A, P, r, \gamma, N \right>$. $N$ is the number of agents, $I=\{1,2\ldots, N\}$ is the set of agents, $S$ is the set of states, and $A=A_1\times\cdots\times A_N$ is the joint action space, where $A_i$ is the individual action space for each agent $i$. For the rigorousness of proof, we assume full observability such that at each state $s \in S$, each agent $i$ receives state $s$, chooses an action $a_i \in A_i$, and all actions form a joint action $\bm{a} \in A$. The state transitions to the next state $s'$ upon $\bm{a}$ according to the transition function $P(s'|s,\bm{a}):S \times A \times S\to [0,1]$, and all agents receive a shared reward $r(s,\bm{a}):S\times A \to \mathbb{R}$. The objective is to learn a local policy $\pi_i(a_i|\tauproof)$ for each agent such that they can cooperate to maximize the expected cumulative discounted return, $\mathbb{E}[\sum_{t=0}^\infty \gamma^t r_t]$, where $\gamma \in [0, 1)$ is the discount factor. In CTDE, from a centralized perspective, a group of local policies can be viewed as a joint policy $\pijt(\ujt|\taujtproof)$. For this joint policy, we can define the joint state-action value function $\qjt(\taujtproof_t, \ujt_t) = \E_{\taujtproof_{t+1:\infty}, \ujt_{t+1:\infty}}[\sum_{k=0}^{\infty}\gamma^{t}r_{t+k}|\taujtproof_t, \ujt_t]$. Note that although we assume full observability for the rigorousness of proof, we use the trajectory of each agent $\tau_i \in \mathcal{T}_i:({Y \times A_i})^*$ to replace state $s$ as its policy input to settle the partial observability in practice, where $Y$ is the observation space.

\subsection{FOP}

FOP \citep{zhang2021fop} is one of the state-of-the-art CTDE methods for cooperative MARL, which extends value decomposition to learning stochastic policy. In FOP, the joint policy is decomposed into independent local policies based on Individual-Global-Optimal (IGO), which can be stated as:
\begin{align}
    \pijt(\ujt|\taujtproof) =  \prod_{i=1}^{N}\pi_{i}(a_i|\tauproof).
    \label{FOP}
\end{align}
As all policies are learned by maximum-entropy RL \citep{haarnoja2018soft}, \textit{i.e.}, $\pi_i(a_i|\tauproof) =  \exp(\frac{1}{\alpha_i}(Q_i(\tauproof, a_i) - V_i(\tauproof)))$, IGO immediately implies a specific way of value decomposition:
\begin{align}
    \qjt(\taujtproof, \ujt) = \sum_{i=1}^{N} \frac{\alpha}{\alpha_i}[Q_i(\tauproof, a_i) - V_i(\tauproof)] + \vjt(\taujtproof).
\end{align}
Unlike IGM, which is used to learn deterministic local policies and naturally avoids the dependency of agents, IGO assumes agents are independent of each other in both training and execution. Although IGO advances FOP to learn stochastic policies, such an assumption can be problematic even in some simple scenarios and prevent learning the optimal joint policy.

\subsection{Problematic IGO}
\label{Intuition}

\begin{figure}[htb]
\setlength{\abovecaptionskip}{5pt}
    \vspace{-2mm}
    \centering
    \begin{subfigure}[b]{0.32\linewidth}
        \setlength{\abovecaptionskip}{2pt}
        \centering
        \includegraphics[width=\textwidth]{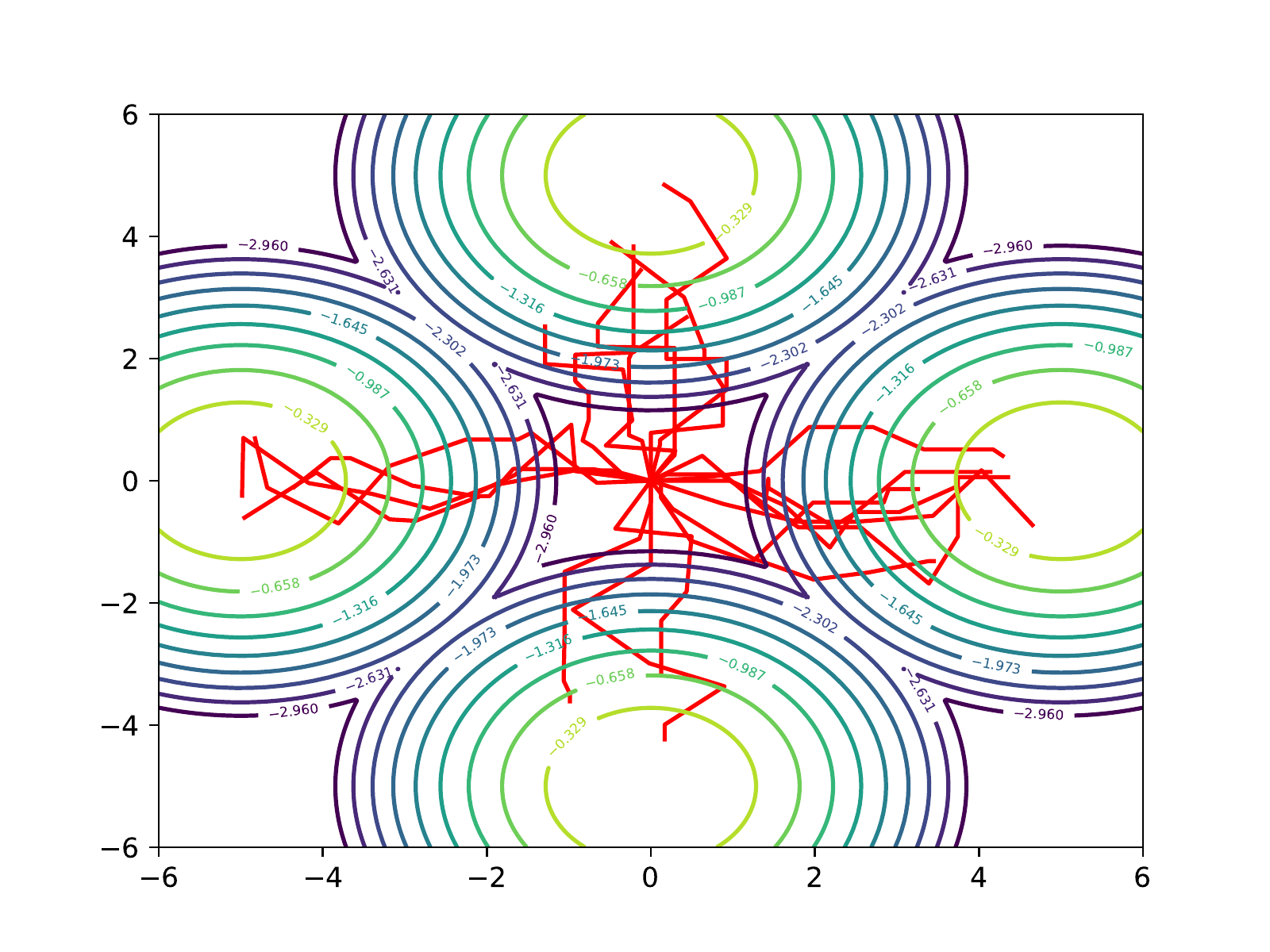}
  		\captionof{figure}{centralized control}
       \label{fig:centralized}
    \end{subfigure}
    \begin{subfigure}[b]{0.32\linewidth}
        \setlength{\abovecaptionskip}{2pt}
        \centering
        \includegraphics[width=\textwidth]{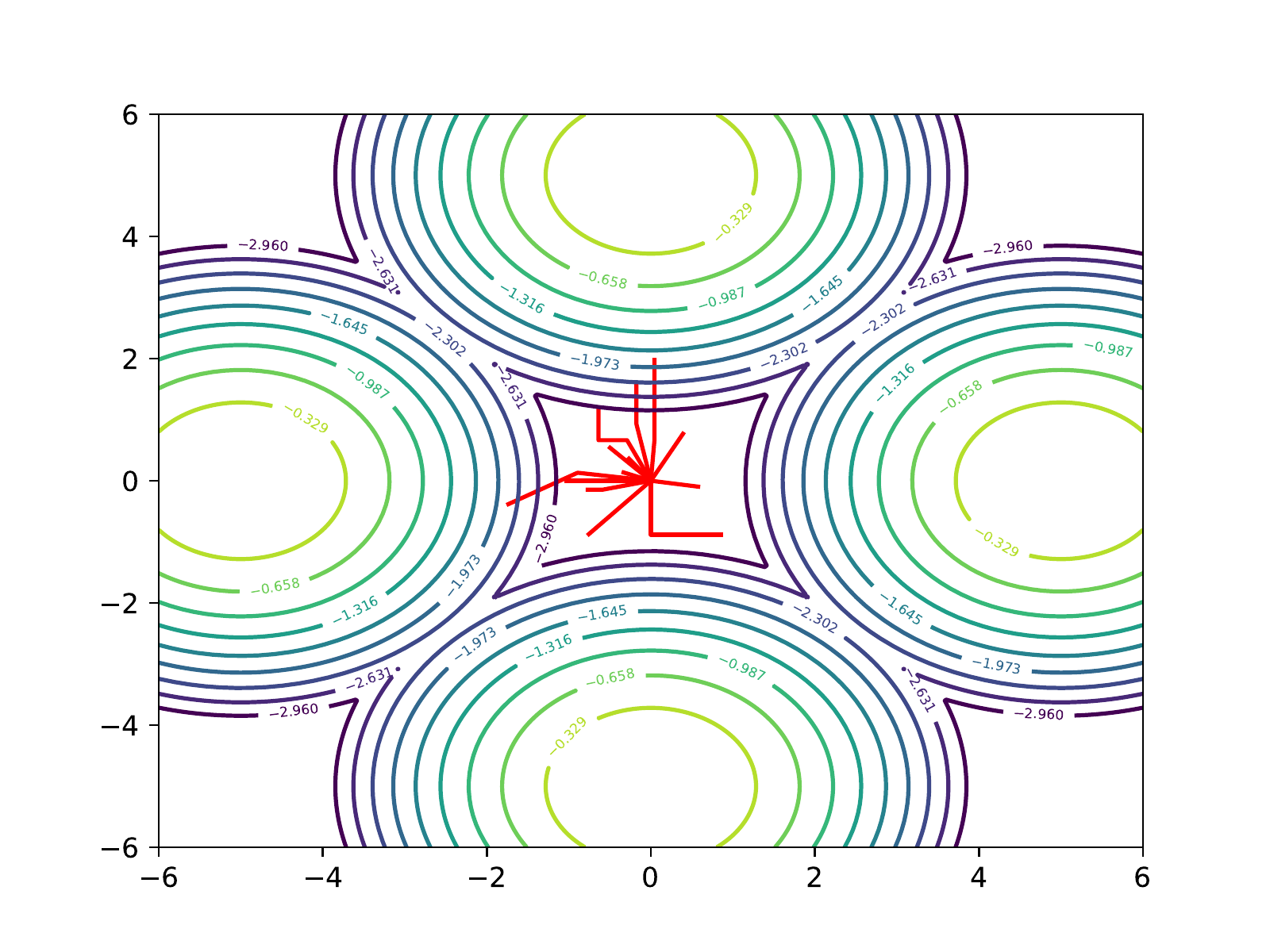}
  		\captionof{figure}{FOP}
        \label{Fig:fop}
    \end{subfigure}
    \begin{subfigure}[b]{0.32\linewidth}
        \setlength{\abovecaptionskip}{2pt}
        \centering
        \includegraphics[width=\textwidth]{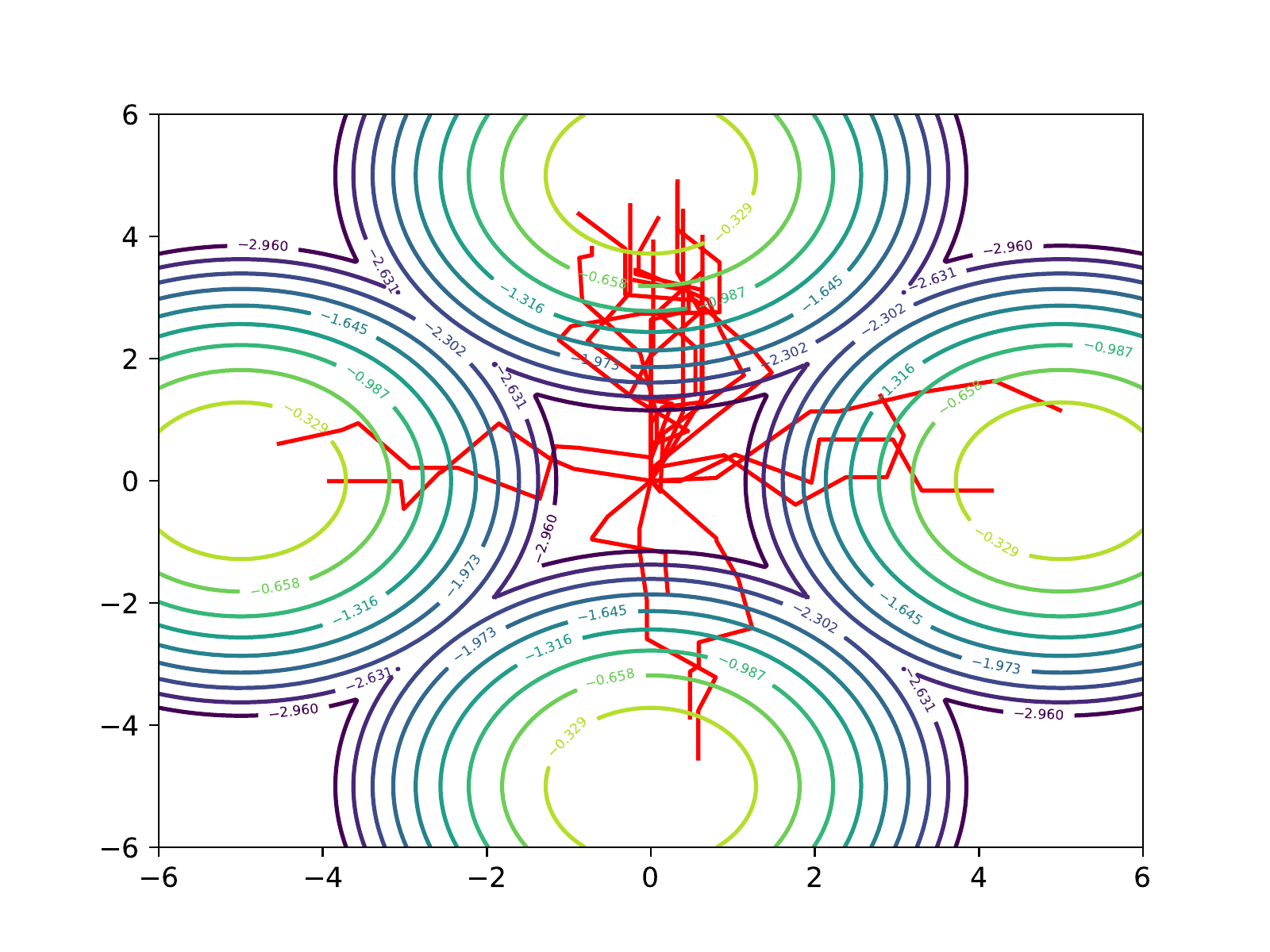}
  		\captionof{figure}{dependency}
        \label{Fig:macpf}
    \end{subfigure}
    \caption{Sampled trajectories from the learned policy: (a) centralized control; (b) FOP, where IGO is assumed; (c) considering dependency during training.}
    \label{fig:motivation}
\end{figure}

As stated in soft Q-learning~\citep{haarnoja2018soft}, one goal of maximum-entropy RL is to learn an optimal maximum-entropy policy that captures multiple modes of near-optimal behavior. Since FOP can be seen as the extension of maximum-entropy RL in multi-agent settings, it is natural to assume that FOP can also learn a multi-modal joint policy in multi-agent settings. However, as shown in the following example, such a desired property of maximum-entropy RL is not inherited in FOP due to the IGO condition.

We extend the single-agent multi-goal environment used in soft Q-learning~\citep{haarnoja2018soft} to its multi-agent variant to illustrate the problem of IGO. In this environment, we want to control a 2D point mass to reach one of four symmetrically placed goals, as illustrated in Figure~\ref{fig:motivation}. The reward is defined as a mixture of Gaussians, with means placed at the goal positions. Unlike the original environment, this 2D point mass is now jointly controlled by two agents, and it can only move when these two agents select the same moving direction; otherwise, it will stay where it is. As shown in Figure \ref{fig:centralized}, when centralized control is allowed, multi-agent training degenerates to single-agent training, and the desired multi-modal policy can be learned. However, as shown in Figure \ref{Fig:fop}, FOP struggles to learn any meaningful joint policy for the multi-agent setting. One possible explanation is that, since IGO is assumed in FOP, the local policy of each agent is always independent of each other during training, and the expressiveness of joint policy is dramatically reduced. Therefore, when two agents have to coordinate to make decisions, they may fail to reach an agreement and eventually behave in a less meaningful way due to the limited expressiveness of joint policy. To solve this problem, we propose to consider dependency among agents in MARL algorithms to enrich the expressiveness of joint policy. As shown in Figure \ref{Fig:macpf}, the learned joint policy can once again capture multiple modes of near-optimal behavior when the dependency is considered. Details of this algorithm will be discussed in the next section.

\section{Method}

To overcome the aforementioned problem of IGO, we propose \textit{\textbf{multi-agent conditional policy factorization (MACPF)}}. In MACPF, we introduce dependency among agents during centralized training to ensure the optimality of the joint policy without the need for IGO. This joint policy consists of \textit{dependent} local policies, which take the actions of other agents as input, and we use this joint policy as the behavior policy to interact with the environment during training. In order to fulfill decentralized execution, \textit{independent} local policies are obtained from these dependent local policies such that the joint policy resulting from these \textit{independent} local policies is equivalent to the behavior policy in terms of expected return.

\subsection{Conditional Factorized Soft Policy Iteration}

Like FOP, we also use maximum-entropy RL~\citep{ziebart2010modeling} to bridge policy and state-action value function for each agent. Additionally, it will also be used to introduce dependency among agents. For each local policy, we take the actions of other agents as its input and define it as follows: 
\begin{align}
\label{p_i}
    &\pi_{i}(a_i|\tauproof,\pai) = \exp(\frac{1}{\alpha_i}(Q_{i}(\tauproof, \pai, a_i) - V_{i}(\tauproof, \pai)))\\
    &V_{i}(\tauproof, \pai) := \alpha_i \sum_{a_i}\exp(\frac{1}{\alpha_i} Q_{i}(\tauproof, \pai, a_i)),
\end{align}
where $\pai$ represents the joint action of all agents whose indices are smaller than agent $i$. We then can get the relationship between the joint policy and local policies based on the chain rule factorization of joint probability:
\begin{align}
    \pijt(\ujt|\taujtproof) =  \prod_{i=1}^{N}\pi_{i}(a_i|\tauproof, \pai).
    \label{chain rule}
\end{align}

The full expressiveness of the joint policy can be guaranteed by \eqref{chain rule} as it is no longer restricted by the IGO condition. From \eqref{chain rule}, together with $\pijt(\ujt|\taujtproof) = \exp(\frac{1}{\alpha}(\qjt(\taujtproof, \ujt) - \vjt(\taujtproof)))$, we have the $\qjt$ factorization as:
\begin{align}
\label{value decomposition}
\qjt(\taujtproof, \ujt) = \sum_{i=1}^{N} \frac{\alpha}{\alpha_i}[Q_{i}(\taujtproof, \pai, a_i) - V_{i}(\taujtproof, \pai)] + \vjt(\taujtproof).
\end{align}
Note that in maximum-entropy RL, we can easily compute $V$ by $Q$. From \eqref{value decomposition}, we introduce \emph{conditional factorized soft policy iteration} and prove its convergence to the optimal joint policy in the following theorem.
\begin{theorem}[\textbf{Conditional Factorized Soft Policy Iteration}]
\label{Soft Policy Iteration}
For any joint policy $\pijt$, if we repeatedly apply joint soft policy evaluation and individual conditional soft policy improvement from $\pi_i \in \Pi_{i}$. Then the joint policy $\pijt(\ujt|\taujtproof) =  \prod_{i=1}^{N}\pi_{i}(a_i|\tauproof, \pai)$ converges to $\pijto$, such that $\qjt^{\pijto}(\taujtproof, \ujt) \geq \qjt^{\pijt}(\taujtproof, \ujt)$ for all $\pijt$, assuming $|A| < \infty$.
\end{theorem}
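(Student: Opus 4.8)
The plan is to mirror the single-agent soft policy iteration proof of Haarnoja et al.\ (2018), lifting it to the joint level while being careful that the ``individual conditional soft policy improvement'' step genuinely improves the \emph{joint} soft Q-function. The argument splits into the two classical halves: a soft policy evaluation lemma and a soft policy improvement lemma, followed by the usual monotone-convergence wrap-up. Since $|A|<\infty$ and $\gamma\in[0,1)$, all the relevant operators are contractions or bounded, so convergence of the value sequence is not the delicate part.

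\textbf{Step 1 (Joint soft policy evaluation).} First I would define the joint soft Bellman operator $\mathcal{T}^{\pijt}\qjt(\taujtproof,\ujt) = r(\taujtproof,\ujt) + \gamma\,\E_{\taujtproof'}[\vjt(\taujtproof')]$ with $\vjt(\taujtproof') = \E_{\ujt'\sim\pijt}[\qjt(\taujtproof',\ujt') - \alpha\log\pijt(\ujt'|\taujtproof')]$, and invoke the standard contraction argument: in the sup-norm $\mathcal{T}^{\pijt}$ is a $\gamma$-contraction on the space of bounded functions (finite action space makes the entropy term bounded), so repeated application converges to the unique soft value $\qjt^{\pijt}$ of the current joint policy. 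This part is essentially a citation to \citet{haarnoja2018soft} applied to the ``meta'' single agent whose action is $\ujt$.

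\textbf{Step 2 (Individual conditional soft policy improvement).} This is the crux. I would show that the update $\pi_i^{\mathrm{new}}$ defined through \eqref{p_i} with $Q_i$ tied to the current $\qjt^{\pijtold}$ via the decomposition \eqref{value decomposition} yields a new joint policy $\pijtnew(\ujt|\taujtproof)=\prod_i\pi_i^{\mathrm{new}}(a_i|\taujtproof,\pai)$ satisfying $\qjt^{\pijtnew}\ge\qjt^{\pijtold}$ pointwise. The key observation is that the chain-rule factorization \eqref{chain rule} makes the joint policy itself of Boltzmann form, $\pijtnew \propto \exp(\frac1\alpha \qjt^{\pijtold})$ (this is exactly what \eqref{value decomposition} encodes once one sums the per-agent log-policies and telescopes the $V_i$ terms against $\vjt$); hence $\pijtnew$ is precisely the minimizer of $\kl(\,\cdot\,\|\,\exp(\tfrac1\alpha \qjt^{\pijtold})/Z)$ over the class of chain-rule-factorizable policies, and since the old policy lies in that class, $\E_{\ujt\sim\pijtnew}[\qjt^{\pijtold} - \alpha\log\pijtnew] \ge \E_{\ujt\sim\pijtold}[\qjt^{\pijtold} - \alpha\log\pijtold] = \vjt^{\pijtold}$. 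Plugging this inequality into the soft Bellman equation for $\qjt^{\pijtold}$ and iterating $\mathcal{T}^{\pijtnew}$ gives $\qjt^{\pijtnew}\ge\qjt^{\pijtold}$ by monotonicity of the Bellman operator. The subtlety I must handle carefully is that the per-agent improvements are performed with $\pai$ as conditioning input, so I need to verify that sequentially (or jointly) optimizing the $\pi_i(\cdot|\taujtproof,\pai)$ is equivalent to optimizing the joint $\pijt$ over all factorizable distributions — this follows because the chain rule is a \emph{bijection} between arbitrary joint distributions on $A$ (finite) and tuples of conditionals, so there is no loss of expressiveness and the joint optimum is attained within the factorized class.

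\textbf{Step 3 (Convergence).} Finally I would combine the two lemmas: the sequence $\qjt^{\pijt^{(k)}}$ is monotone nondecreasing and bounded above (rewards and entropies are bounded since $|A|<\infty$, $\gamma<1$), hence converges to some $\qjt^*$; at the fixed point the improvement step can make no further progress, which forces $\E_{\ujt\sim\pijto}[\qjt^*-\alpha\log\pijto]\ge\E_{\ujt\sim\pijt}[\qjt^*-\alpha\log\pijt]$ for every factorizable $\pijt$, and by the same Boltzmann-minimizer argument this implies $\qjt^{\pijto}(\taujtproof,\ujt)\ge\qjt^{\pijt}(\taujtproof,\ujt)$ for all $\pijt$ and all $(\taujtproof,\ujt)$ — here I use once more that the factorizable class equals the full class of joint policies. \textbf{The main obstacle} I anticipate is Step 2: making rigorous the claim that the independently-parameterized, conditionally-defined per-agent Boltzmann policies \eqref{p_i} reassemble exactly into the joint Boltzmann policy $\exp(\tfrac1\alpha(\qjt-\vjt))$ — i.e.\ that the telescoping of the $V_i(\taujtproof,\pai)$ normalizers against $\vjt(\taujtproof)$ in \eqref{value decomposition} is consistent — and hence that the improvement step is a genuine KL projection onto the (full) policy class rather than onto a strict subset. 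Everything else is a routine adaptation of maximum-entropy soft policy iteration.
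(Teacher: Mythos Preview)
Your proposal is correct and follows the paper's three-part structure: joint soft policy evaluation (Lemma~\ref{soft policy evaluation}), individual conditional soft policy improvement (Lemma~\ref{soft policy improvement}), and the monotone-convergence wrap-up. The one tactical difference is in Step~2: the paper argues bottom-up, first establishing the per-agent inequality $\E_{a_i\sim\pinew,\,\pai\sim\pipanew}[Q_i^{\piold}-\alpha_i\log\pinew]\ge\E_{\pai\sim\pipanew}[V_i^{\piold}]$ from the per-agent KL minimization and then summing these through the decomposition \eqref{value decomposition} to reach $\E_{\ujt\sim\pijtnew}[\qjt^{\pijtold}-\alpha\log\pijtnew]\ge\vjt^{\pijtold}$, whereas you go top-down by observing directly that the chain-rule product \emph{is} the joint Boltzmann policy for $\qjt^{\pijtold}$ and reducing the improvement step to single-agent SAC for the meta-agent. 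The two routes are equivalent; yours is slightly more direct and makes the ``chain rule $=$ full expressiveness'' insight carry the argument up front, while the paper's per-agent route tracks the actual algorithmic update more closely. Your identification of the $V_i$/$\vjt$ telescoping in \eqref{value decomposition} as the only genuine verification point is exactly right.
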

\begin{proof}
\vspace{-2mm}
See Appendix \ref{SPI proof}.
\end{proof}

\subsection{\textit{Independent} Joint Policy}
\label{policy extraction}

Using the conditional factorized soft policy iteration, we are able to get the optimal joint policy. However, such a joint policy requires dependent local policies, which are incapable of decentralized execution. To fulfill decentralized execution, we have to obtain independent local policies.

\begin{figure}[htb]
    \begin{subtable}[htb]{0.32\textwidth}
        \centering\small
        \renewcommand{\arraystretch}{1.1}
        \begin{tabular}{c | c | c}
        \diagbox{$a_1$}{$a_2$} & $A$ & $B$ \\
        \hline
        $A$ & $0.5$ & $0$\\
        \hline
        $B$ & $0$ & $0.5$\\
       \end{tabular}
       \caption{dependent joint policy $\pijtdep$}
       \label{tab:joint policy}
    \end{subtable}
    \begin{subtable}[htb]{0.32\textwidth}
        \centering\small
                \renewcommand{\arraystretch}{1.1}
        \begin{tabular}{c | c | c}
        \diagbox{$a_1$}{$a_2$} & $A$ & $B$ \\
        \hline
        $A$ & $1$ & $0$\\
        \hline
        $B$ & $0$ & $0$\\
       \end{tabular}
       \caption{independent joint policy $\pijtnodep$}
       \label{tab:indepdent counterpart}
    \end{subtable}
    \begin{subtable}[htb]{0.32\textwidth}
        \centering  \small      \renewcommand{\arraystretch}{1.1}
        \begin{tabular}{c | c | c}
        \diagbox{$a_1$}{$a_2$} & $A$ & $B$ \\
        \hline
        $A$ & $-0.5$ & $0$\\
        \hline
        $B$ & $0$ & $0.5$\\
       \end{tabular}
       \caption{dependency correction $\bjtdep$}
       \label{tab:Dependency Correction}
    \end{subtable}
    \vspace{-2mm}
     \caption{A dependent joint policy and its independent counterpart}
     \label{tab:policy extraction}
\end{figure}

Consider the joint policy shown in Figure \ref{tab:joint policy}. This joint policy, called \textit{dependent joint policy} $\pijtdep$, involves dependency among agents and thus cannot be factorized into two independent local policies. However, one may notice that this policy can be decomposed as the combination of an \textit{independent joint policy} $\pijtnodep$ that involves no dependency among agents, as shown in Figure \ref{tab:indepdent counterpart}, and a dependency policy correction $\bjtdep$, as shown in Figure \ref{tab:Dependency Correction}. More importantly, since we use the Boltzmann distribution of joint Q-values as the joint policy, the equivalence of probabilities of two joint actions also indicates that their joint Q-values are the same,
\begin{align}
    \pijtdep(A,A) = \pijtdep(B,B) \Rightarrow \qjt(A,A) = \qjt(B,B).
\end{align}
Therefore, in Table~\ref{tab:policy extraction}, the expected return of the independent joint policy $\pijtnodep$ will be the same as the dependent joint policy $\pijtdep$,
\begin{align}
    \E_{\pijtdep}[\qjt] &= \pijtdep(A,A) * \qjt(A,A) + \pijtdep(B,B) * \qjt(B,B) \\
                           &= \pijtnodep(A,A) * \qjt(A,A) = \E_{\pijtnodep}[\qjt].
\end{align}
Formally, we have the following theorem.
\begin{theorem}
For any dependent joint policy $\pijtdep$ that involves dependency among agents, there exists an independent joint policy $\pijtnodep$ that does not involve dependency among agents, such that $V_{\pijtdep}(s) = V_{\pijtnodep}$(s) for any state $s \in S$.
\label{IVT}
\end{theorem}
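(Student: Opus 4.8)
The label \textsc{IVT} is the hint: the plan is to prove the statement constructively, matching the value function of $\pijtdep$ state by state with an \emph{independent} one-step policy and then lifting to the full value function by Bellman uniqueness. Write $v(s):=V_{\pijtdep}(s)$ for the ordinary (expected discounted return) value function, and set $q(s,\ujt):=r(s,\ujt)+\gamma\,\E_{s'\sim P(\cdot|s,\ujt)}[v(s')]$. By Bellman consistency for $\pijtdep$ we have $\qjt^{\pijtdep}(s,\ujt)=q(s,\ujt)$, hence
\[ v(s)=\sum_{\ujt\in A}\pijtdep(\ujt|s)\,q(s,\ujt), \]
so $v(s)$ is a convex combination of the finitely many numbers $\{q(s,\ujt)\}_{\ujt\in A}$ and in particular $v(s)\in[\,\min_{\ujt}q(s,\ujt),\ \max_{\ujt}q(s,\ujt)\,]$ for every $s$. (The worked example is the special case where $\pijtdep$ is the Boltzmann distribution of $\qjt$, so equal probabilities force equal $q$-values; that is a convenient sanity check but is not logically needed.)

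Next I would, for each fixed $s$, realize $v(s)$ as the expected one-step value of some independent policy. Consider $F_s:\prod_i\Delta(A_i)\to\mathbb{R}$, $F_s\big((\pi_i)_i\big)=\sum_{\ujt}\big(\prod_i\pi_i(a_i|s)\big)q(s,\ujt)$. This is multilinear, hence continuous, and since for fixed values of the other factors it is linear in each $\pi_i$, its extrema over the product of simplices are attained at vertices, i.e.\ at deterministic joint actions; thus $\max F_s=\max_{\ujt}q(s,\ujt)$ and $\min F_s=\min_{\ujt}q(s,\ujt)$. Picking a deterministic joint action $\ujt^-$ attaining the min and $\ujt^+$ attaining the max and linearly interpolating each coordinate from $\ujt^-$ to $\ujt^+$ gives a continuous path in $\prod_i\Delta(A_i)$ along which $F_s$ runs from $\min_{\ujt}q(s,\ujt)$ to $\max_{\ujt}q(s,\ujt)$; by the scalar intermediate value theorem together with the interval bound above, it passes through $v(s)$. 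Fix any $\big(\pi_i^{\operatorname{ind}}(\cdot|s)\big)_i$ in $F_s^{-1}(v(s))$.

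Finally, set $\pijtnodep(\ujt|s):=\prod_i\pi_i^{\operatorname{ind}}(a_i|s)$, which is a product of local policies and hence involves no dependency among agents. By the previous step, for every $s$, $\sum_{\ujt}\pijtnodep(\ujt|s)\big(r(s,\ujt)+\gamma\,\E_{s'}[v(s')]\big)=v(s)$, which is exactly the statement that $v$ is a fixed point of the Bellman operator $T^{\pijtnodep}$. Because $\gamma<1$ and the reward is bounded, $T^{\pijtnodep}$ is a $\gamma$-contraction with a unique fixed point, namely $V_{\pijtnodep}$; therefore $V_{\pijtnodep}(s)=v(s)=V_{\pijtdep}(s)$ for all $s\in S$, as claimed.

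I expect two delicate points. First, the assertion that the image of the multilinear map $F_s$ is the \emph{whole} interval between its vertex values: connectedness alone only yields an interval containing the two endpoints, so this should be pinned down via the explicit interpolating path and the scalar intermediate value theorem as above. Second, the lift in the last step from ``expected one-step value matches'' to ``entire value function matches'': this is legitimate only because, once the continuation is frozen to be $v$ itself, the matching equations coincide with the Bellman fixed-point equations for $\pijtnodep$, whose solution is unique; without this observation one would face a vector-valued intermediate value problem across all states simultaneously, which need not be solvable. For uncountable $S$ one additionally needs a measurable selection $s\mapsto\big(\pi_i^{\operatorname{ind}}(\cdot|s)\big)_i$, which is routine since the sets $F_s^{-1}(v(s))$ vary measurably in $s$, and in the finite setting assumed for the preceding theorem it is automatic.
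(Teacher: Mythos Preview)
Your proposal is correct and follows essentially the same two-step strategy as the paper: first use an intermediate-value argument on the product of simplices to find, at each state, an independent policy whose one-step expected $q$-value matches $V_{\pijtdep}(s)$, then lift this to equality of full value functions via Bellman consistency. Where the paper simply cites the ``generalized intermediate value theorem'' (continuous image of a connected set is connected) and unrolls the Bellman equation by hand, you give the more explicit path-plus-scalar-IVT construction and invoke $\gamma$-contraction uniqueness; these are refinements in presentation rather than a different route, and your remarks on the multilinear-extrema step and on measurable selection fill in details the paper leaves implicit.
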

\begin{proof}
\vspace{-2mm}
See Appendix \ref{IVT proof}.
\end{proof}

\textbf{\textit{Note that the independent counterpart of the optimal dependent joint policy may not be directly learned by FOP, as shown in Figure~\ref{fig:motivation}. Therefore, we need to explicitly learn the optimal dependent joint policy to obtain its independent counterpart.}}

\subsection{MACPF Framework}

With Theorem \ref{Soft Policy Iteration} and \ref{IVT}, we are ready to present the learning framework of MACPF, as illustrated in Figure~\ref{Fig:traning_process}, for simultaneously learning the dependent joint policy and its independent counterpart. 

In MACPF, each agent $i$ has an independent local policy $\pinodep$ parameterized by $\theta_i$ and a dependency policy correction $\bdep$ parameterized by $\phi_i$, which together constitute a dependent local policy $\pidep$\footnote{The logit of $\pinodep$ is first added with $\bdep$ to get the logit of $\pidep$, then softmax is used over this combined logit to get $\pidep$.}. So, we have:
\begin{align}
    &\pidep = \pinodep + \bdep \\
    &\pijtdep(\taujtproof, \ujt) = \prod_{i=1}^{N}\pidep  \\
    &\pijtnodep(\taujtproof, \ujt) = \prod_{i=1}^{N}\pinodep.
\end{align}
Similarly, each agent $i$ also has an independent local critic $\qnodep$ parameterized by $\psi_i$ and a dependency critic correction $\cdep$ parameterized by $\omega_i$, which together constitute a dependent local critic $\qdep$. Given all $Q_i^{\operatorname{ind}}$ and $Q_i^{\operatorname{dep}}$, we use a mixer network, $\mixer(\cdot; \Theta)$ parameterized by $\Theta$, to get $\qjtdep$ and $\qjtnodep$ as follows,
\begin{align}
   &\qdep = \qnodep + \cdep\\
   &\qjtdep(\taujtproof, \ujt) = \mixer([\qdep]_{i=1}^{N}, \taujtproof ; \Theta) \\
    &\qjtnodep(\taujtproof, \ujt) = \mixer([\qnodep]_{i=1}^{N}, \taujtproof ; \Theta).
\end{align}

$Q_i^{\operatorname{dep}}$, $Q_i^{\operatorname{ind}}$, and $\mixer$ are learned by minimizing the TD error, 
\begin{align}
	&\lossdep([\omega_i]_{i=1}^{N}, \Theta) = \E_{\D} \left[\left(\qjtdep(\taujtproof, \ujt) - \left(r +  \gamma \big(\hat{Q}_{\operatorname{jt}}^{\operatorname{dep}}(\taujtproof', \ujt') - \alpha \log \pijtdep(\ujt'| \taujtproof')\big)\right)\right)^{2}\right] \\
	\hfill
	&\lossnodep([\psi_i]_{i=1}^{N}, \Theta) = \E_{\D} \left[\left(\qjtnodep(\taujtproof, \ujt) - \Big(r +  \gamma \big(\hat{Q}_{\operatorname{jt}}^{\operatorname{ind}}(\taujtproof', \ujt') - \alpha \log \pijtnodep(\ujt'| \taujtproof')\big)\Big)\right)^{2}\right],
\end{align}
where $\D$ is the replay buffer collected by $\pijtdep$, $\hat{Q}$ is the target network, and $\ujt'$ is sampled from the current $\pijtdep$ and $\pijtnodep$, respectively. To ensure the independent joint policy $\pijtnodep$ has the same performance as $\pijtdep$, the \textit{same batch} sampled from $\D$ is used to compute both $\lossdep$ and $\lossnodep$. It is worth noting that the gradient of $\lossdep$ only updates $[c_i^{\operatorname{dep}}]_{i=1}^N$, while the gradient of $\lossnodep$ only updates $[Q_i^{\operatorname{ind}}]_{i=1}^N$.  
Then,  $\pi_i^{\operatorname{dep}}$ and $\pi_i^{\operatorname{ind}}$ are updated by minimizing KL-divergence as follows,
\begin{align}
	&\losspidep(\phi_i) = \E_{\D, \pai \sim \pi_{<i}^{\operatorname{dep}}, a_i \sim \pi_i^{\operatorname{dep}}} [\alpha_i \log \pidep - \qdep] \\
	&\losspinodep(\theta_i) = \E_{\D, a_i \sim \pi_i^{\operatorname{ind}}} [\alpha_i \log \pinodep - \qnodep].
\end{align}
Similarly, the gradient of $\losspidep$ only updates $b_i^{\operatorname{dep}}$ and the gradient of $\losspinodep$ only updates $\pi_i^{\operatorname{ind}}$. For computing $\losspidep$, $\pai$ is sampled from their current policies $\pi_{<i}^{\operatorname{dep}}$.  

The purpose of learning $\pijtnodep$ is to enable decentralized execution while achieving the same performance as $\pijtdep$. Therefore, a certain level of coupling has to be assured between $\pijtnodep$ and $\pijtdep$. First, motivated by Figure \ref{tab:policy extraction}, we constitute the dependent policy as a combination of an independent policy and a dependency policy correction, similarly for the local critic. Second, as aforementioned, the replay buffer $\mathcal{D}$ is collected by $\pijtdep$, which implies $\pijtdep$ is the behavior policy and the learning of $\pijtnodep$ is offline. Third, we use the same $\mixer$ to compute $\qjtdep$ and $\qjtnodep$. The performance comparison between $\pijtdep$ and $\pijtnodep$ will be studied by experiments.

\begin{figure}[t]
	\hspace{-2mm}
	\includegraphics[width=.99\textwidth]{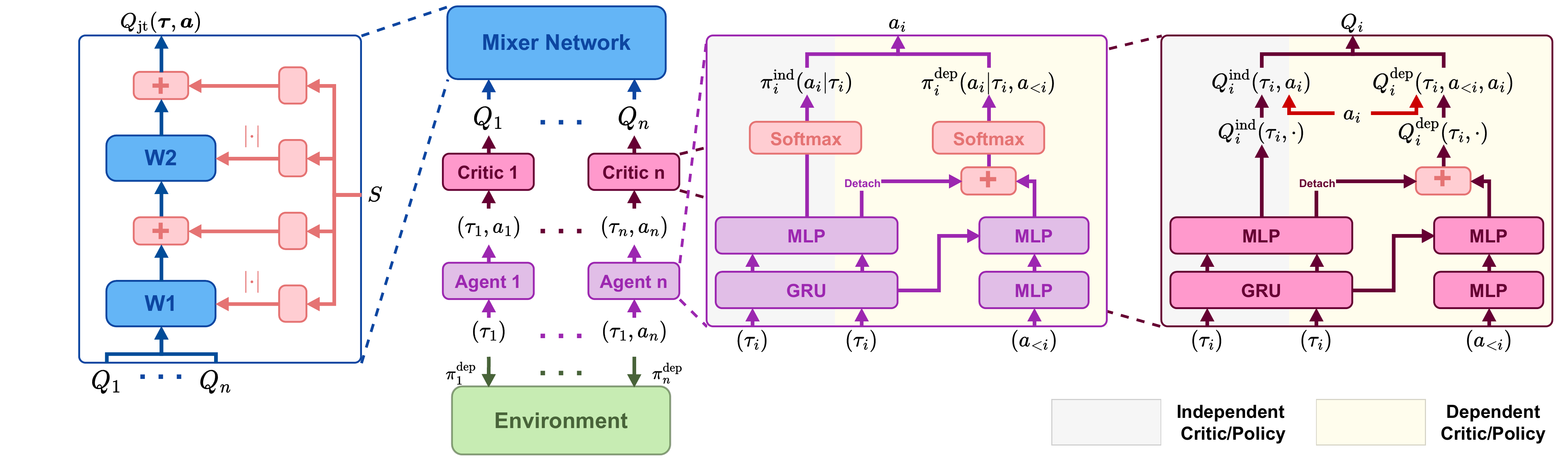}
	\vspace{-1mm}
	\caption{Learning framework of MACPF, where each agent $i$ has four modules: an independent local policy $\pi_i^{\operatorname{ind}}(\cdot;\theta_i)$, a dependency policy correction $b_i^{\operatorname{dep}}(\cdot;\phi_i)$, an independent local critic $Q_i^{\operatorname{ind}}(\cdot;\psi_i)$, and a dependency critic correction $c_i^{\operatorname{dep}}(\cdot,\omega_i)$.}
	\label{Fig:traning_process}
	\vspace{-2mm}
\end{figure}

\section{Related Work}

\textbf{Multi-agent policy gradient.} In multi-agent policy gradient, a centralized value function is usually learned to evaluate current joint policy and guide the update of each local policy. Most multi-agent policy gradient methods can be considered as an extension of policy gradient from RL to MARL. For example, MAPPDG \citep{lowe2017multi} extends DDPG \citep{lillicrap2015continuous}, PS-TRPO\citep{gupta2017cooperative} and MATRPO \citep{kuba2021trust} extend TRPO \citep{schulman2015trust}, and MAPPO \citep{yu2021surprising} extends PPO \citep{schulman2017proximal}. Some methods additionally address multi-agent credit assignment by policy gradient, \textit{e.g.}, counterfactual policy gradient \citep{foerster2018counterfactual} or difference rewards policy gradient \citep{castellini2021difference,li2022difference}.

\textbf{Value decomposition.} Instead of providing gradients for local policies, in value decomposition, the centralized value function, usually a joint Q-function, is directly decomposed into local utility functions. Many methods have been proposed as different interpretations of Individual-Global-Maximum (IGM), which indicates the consistency between optimal local actions and optimal joint action. VDN \citep{sunehag2018value} and QMIX \citep{rashid2018qmix} give sufficient conditions for IGM by additivity and monotonicity, respectively. QTRAN \citep{son2019qtran} transforms IGM into optimization constraints, while QPLEX \citep{wang2020qplex} takes advantage of duplex dueling architecture to guarantee IGM. Recent studies \citep{su2021value,wang2020off,zhang2021fop,su2022divergence} combine value decomposition with policy gradient to learn stochastic policies, which are more desirable in partially observable environments. However, most research in this category does not guarantee optimality, while our method enables agents to learn the optimal joint policy. 

\textbf{Coordination graph.} In coordination graph \citep{guestrin2002coordinated} methods \citep{bohmer2020deep, wang2021context, yang2022self}, the interactions between agents are considered as part of value decomposition. Specifically, the joint Q-function is decomposed into the combination of utility functions and payoff functions. The introduction of payoff functions increases the expressiveness of the joint Q-function and considers at least pair-wise dependency among agents, which is similar to our algorithm, where the complete dependency is considered. However, to get the joint action with the maximum Q-value, communication between agents is required in execution in coordination graph methods, while our method still fulfills fully decentralized execution.

\textbf{Coordinated exploration.} One of the benefits of considering dependency is coordinated exploration. From this perspective, our method might be seen as a relative of coordinated exploration methods \citep{mahajan2019maven,iqbal2019coordinated,zheng2021episodic}. In MAVEN \citep{mahajan2019maven}, a shared latent variable is used to promote committed, temporally extended exploration. In EMC \citep{zheng2021episodic}, the intrinsic reward based on the prediction error of individual Q-values is used to induce coordinated exploration. It is worth noting that our method does not conflict with coordinated exploration methods and can be used simultaneously as our method is a base cooperative MARL algorithm. However, such a combination is beyond the scope of this paper.

\section{Experiments}

In this section, we evaluate MACPF in three different scenarios. One is a simple yet challenging matrix game, which we use to verify whether MACPF can indeed converge to the optimal joint policy. Then, we evaluate MACPF on two popular cooperative MARL scenarios: StarCraft Multi-Agent Challenge (SMAC)~\citep{samvelyan19smac} and MPE~\citep{lowe2017multi}, comparing it against QMIX \citep{rashid2018qmix}, QPLEX~\citep{wang2020qplex}, FOP~\citep{zhang2021fop}, and MAPPO~\citep{yu2021surprising}. More details about experiments and hyperparameters are included in Appendix \ref{exp_detail}. All results are presented using the mean and standard deviation of five runs with different random seeds. In SMAC experiments, for visual clarity, we plot the curves with the moving average of a window size of five and half standard deviation.

\subsection{Matrix Game}

In this matrix game, we have two agents. Each can pick one of the four actions and get a reward based on the payoff matrix depicted in Figure~\ref{fig:payoff}. Unlike the non-monotonic matrix game in QTRAN~\citep{son2019qtran}, where there is only one optimal joint action, we have two optimal joint actions in this game, making this scenario much more challenging for many cooperative MARL algorithms.

\newcolumntype{P}[1]{>{\centering\arraybackslash}p{#1}}
\begin{figure}
\renewcommand{\arraystretch}{1.35}
\setlength{\tabcolsep}{-1pt}
    \centering
    \begin{subfigure}[b]{0.285\linewidth}
        \centering
        \begin{adjustbox}{width=.87\textwidth}
            \begin{tabular}{|P{1cm}||P{0.9cm}|P{0.9cm}|P{0.9cm}|P{0.9cm}|}
            \hline
            \diagbox{$a_1$}{$a_2$} & $A$ & $B$ & $C$ & $D$ \\ \hline\hline
            $A$ & $8$ & {$-20$} & {$-20$} & {$-20$} \\ \hline
            $B$ & {$-12$} & {$0$} & {$0$} & {$-20$}\\ \hline
            $C$ & {$-12$} & {$0$} & {$0$} & {$-20$}\\ \hline
            $D$ & {$-12$} & {$-12$} & {$-12$} & {$8$}\\ \hline
            \end{tabular}
       \end{adjustbox}
       \vspace{1.8mm}
       \caption{payoff matrix}
       \label{fig:payoff}
    \end{subfigure}
    \begin{subfigure}[b]{0.28\linewidth}
        \setlength{\abovecaptionskip}{2pt}
        \centering
        \includegraphics[width=\textwidth]{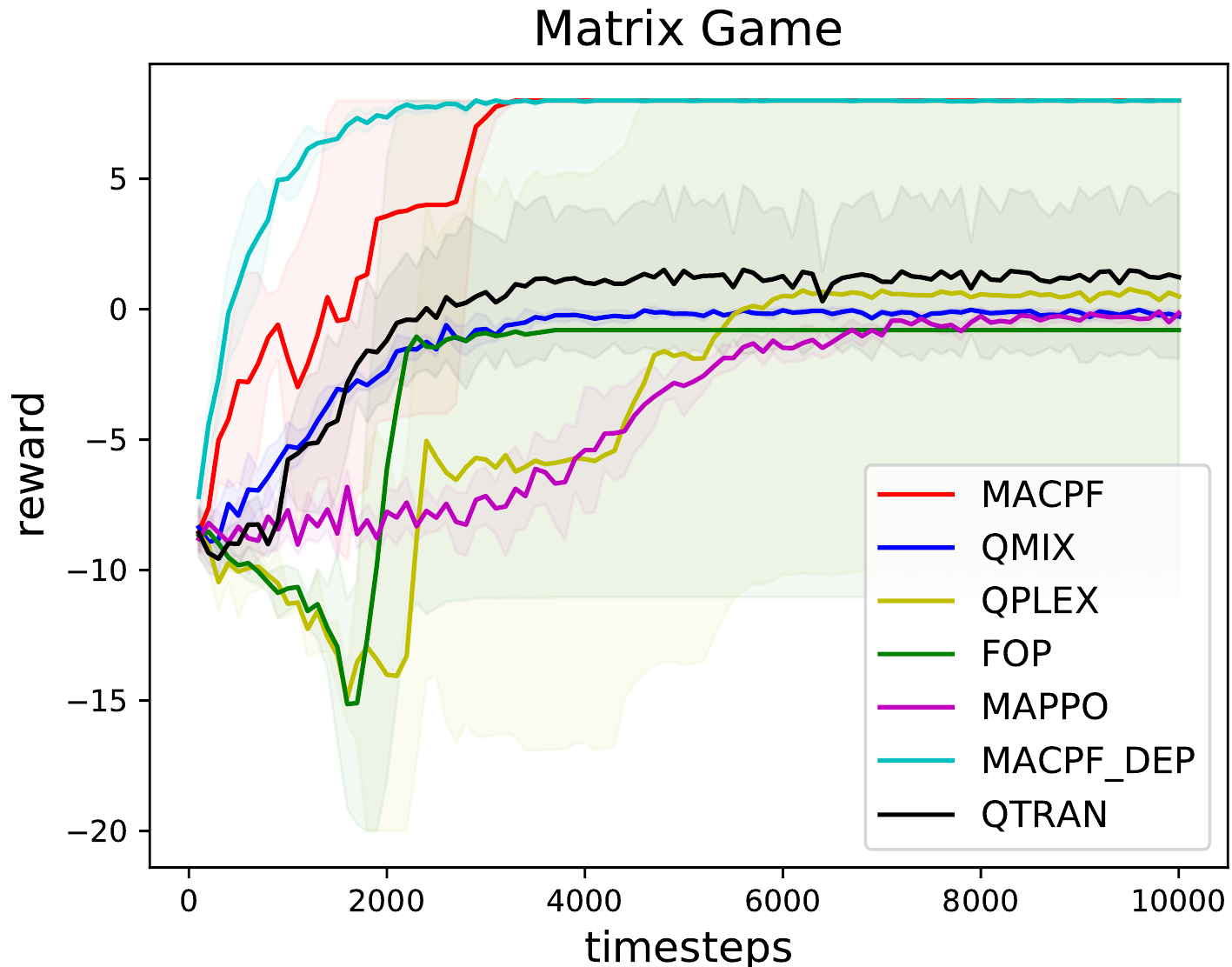}
  		\captionof{figure}{learning curves}
        \label{Fig:matrix_game}
    \end{subfigure}
    \hspace{0.2cm}
    \begin{subfigure}[b]{0.28\linewidth}
        \centering
        \begin{adjustbox}{width=.87\textwidth}
            \begin{tabular}{|P{1cm}||P{0.9cm}|P{0.9cm}|P{0.9cm}|P{0.9cm}|}
            \hline
            \diagbox{$a_1$}{$a_2$} & $A$ & $B$ & $C$ & $D$ \\ \hline \hline
            $A$ & {$\mathbf{0.5}$} & {$0$} & {$0$} & {$0$} \\ \hline
            $B$ & {$0$} & {$0$} & {$0$} & {$0$}\\ \hline
            $C$ & {$0$} & {$0$} & {$0$} & {$0$}\\ \hline
            $D$ & {$0$} & {$0$} & {$0$} & {$\mathbf{0.5}$}\\ \hline
            \end{tabular}
       \end{adjustbox}
        \vspace{1.6mm}
       \caption{$\pijtdep$ of MACPF}
       \label{tab:cpf joint policy}
    \end{subfigure}
    \vspace{-2mm}
    \caption{A matrix game that has two optimal joint actions: (a) payoff matrix; (b) learning curves of different methods; (c) the learned dependent joint policy of MACPF.}
    \label{fig:e_matrix_game}
    \vspace{-4mm}
\end{figure}

As shown in Figure \ref{Fig:matrix_game}, general value decomposition methods, QMIX, QPLEX, and FOP, fail to learn the optimal coordinated strategy in most cases. The same negative result can also be observed for MAPPO. For general MARL algorithms, since agents are fully independent of each other when making decisions, they may fail to converge to the optimal joint action, which eventually leads to a suboptimal joint policy. As shown in Figure \ref{Fig:matrix_game}, QMIX and MAPPO fail to converge to the optimal policy but find a suboptimal policy in all the seeds, while QPLEX, QTRAN, and FOP find the optima by chance (\textit{i.e.}, 60\% for QPLEX, 20\% for QTRAN, and 40\% for FOP). This is because, in QMIX, the mixer network is purely a function of state and the input utility functions that are fully independent of each other. Thus it considers no dependency at all and cannot solve this game where dependency has to be considered. For QPLEX and FOP, since the joint action is considered as the input of their mixer network, the dependency among agents may be implicitly considered, which leads to the case where they can find the optima by chance. However, since the dependency is not considered explicitly, there is also a possibility that the mixer network misinterprets the dependency, which makes QPLEX and FOP sometimes find even worse policies than QMIX (20\% for QPLEX and 40\% for FOP). For QTRAN, it always finds at least the suboptimal policy in all the seeds. However, its optimality largely relies on the learning of its $V_{\operatorname{jt}}$, which is very unstable, so it also only finds the optima by chance. 

For the dependent joint policy $\pijtdep$ of MACPF, the local policy of the second agent depends on the action of the first agent. As a result, we can see from Figure \ref{Fig:matrix_game} that $\pijtdep$ (denoted as MACPF\_DEP) always converges to the highest return. We also notice that in Figure~\ref{tab:cpf joint policy}, $\pijtdep$ indeed captures two optimal joint actions. Unlike QMIX, QPLEX, and FOP, the mixer network in MACPF is a function of state and the input utility functions $\qdep$ that are properly dependent on each other, so the dependency among agents is explicitly considered. More importantly, the learned independent joint policy $\pijtnodep$ of MACPF, denoted as MACPF in Figure~\ref{Fig:matrix_game}, always converges to the optimal joint policy. 
\textit{Note that in the rest of this section, the performance of MACPF is achieved by the learned $\pijtnodep$, unless stated otherwise. }

\subsection{SMAC}
\label{exp:smac}
\begin{figure*}[t!]
\setlength{\abovecaptionskip}{5pt}
        \centering
  		\includegraphics[width=0.29\linewidth]{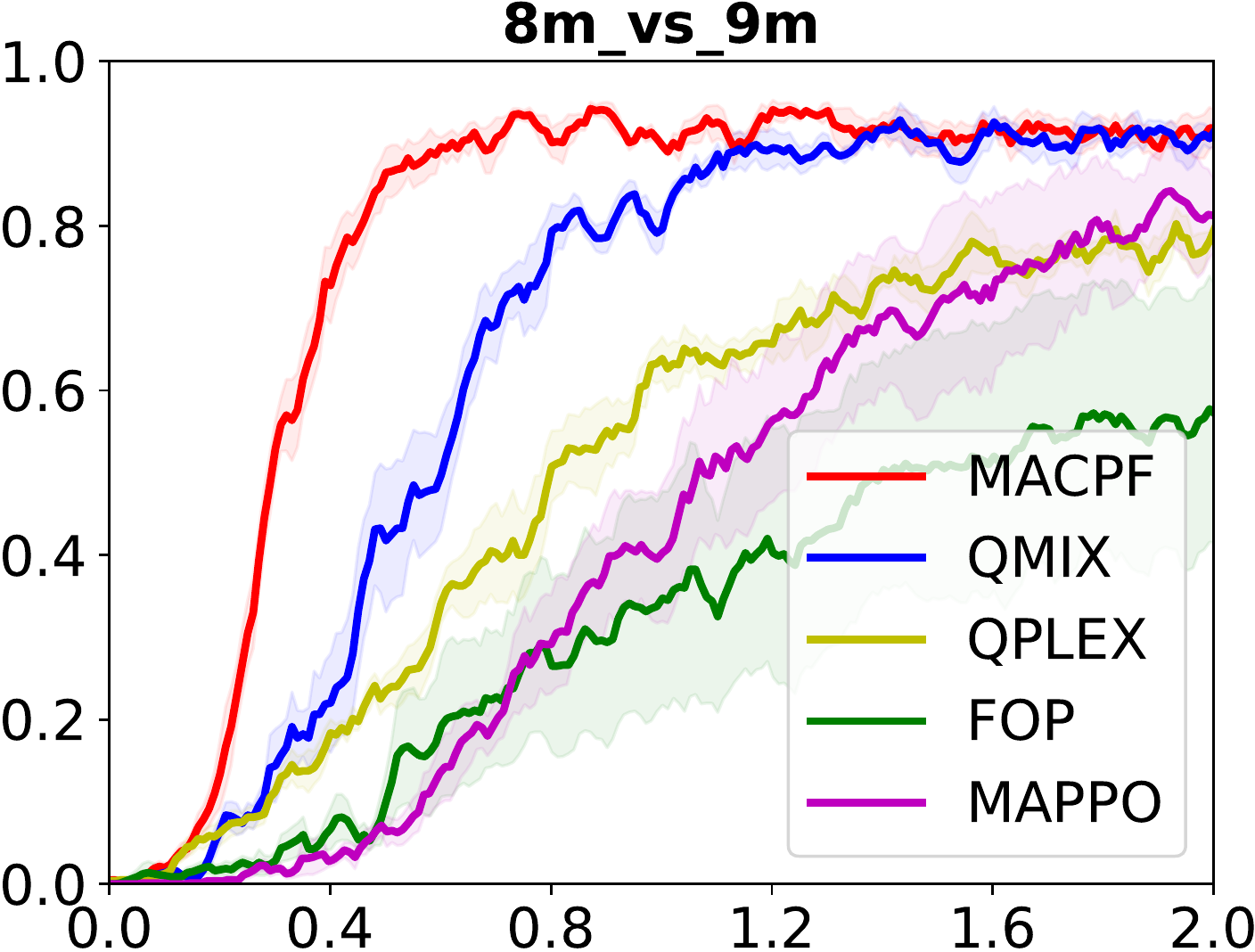}
  		\hspace{2mm}
  		\includegraphics[width=0.29\linewidth]{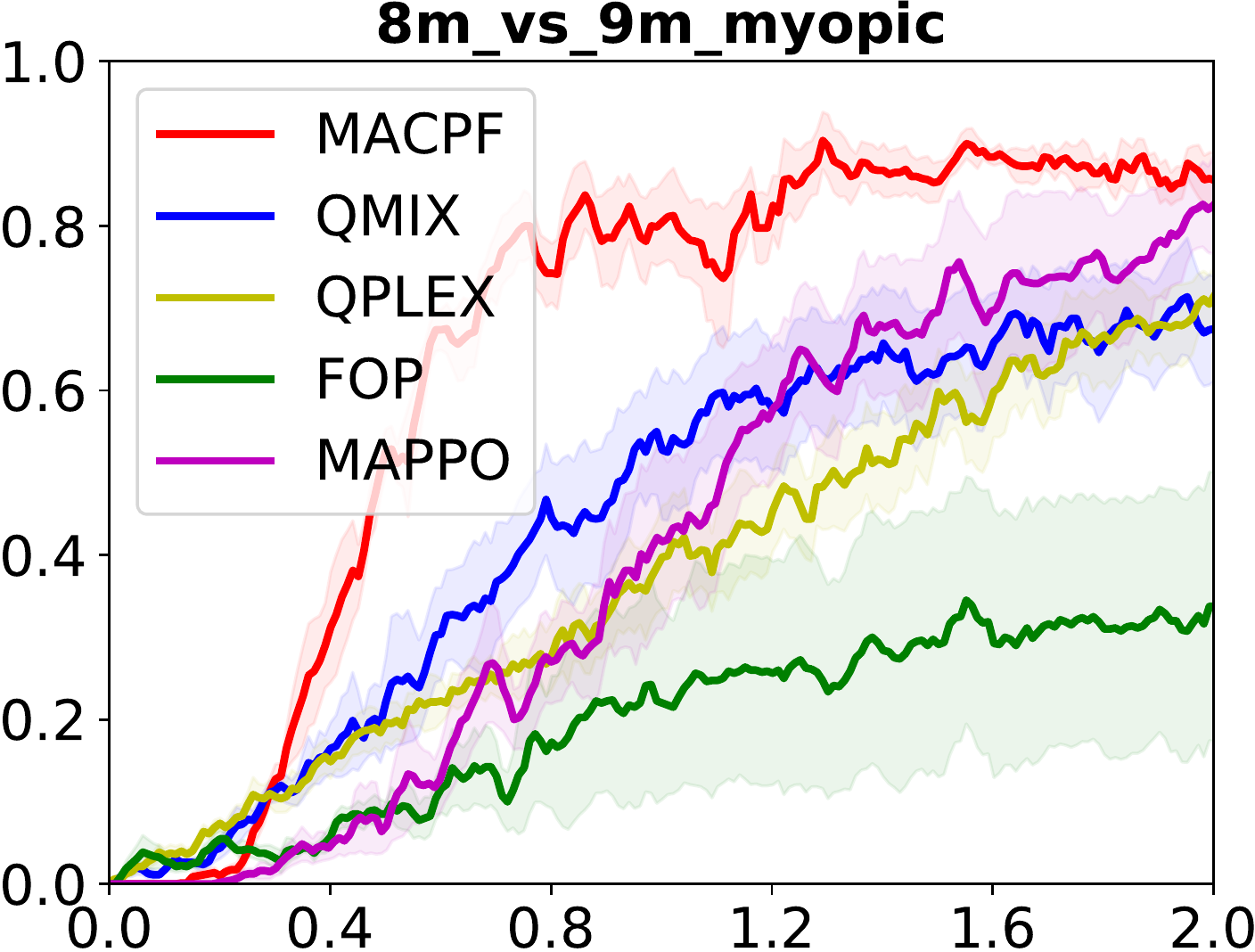}
  		\hspace{2mm}
  		\includegraphics[width=0.29\linewidth]{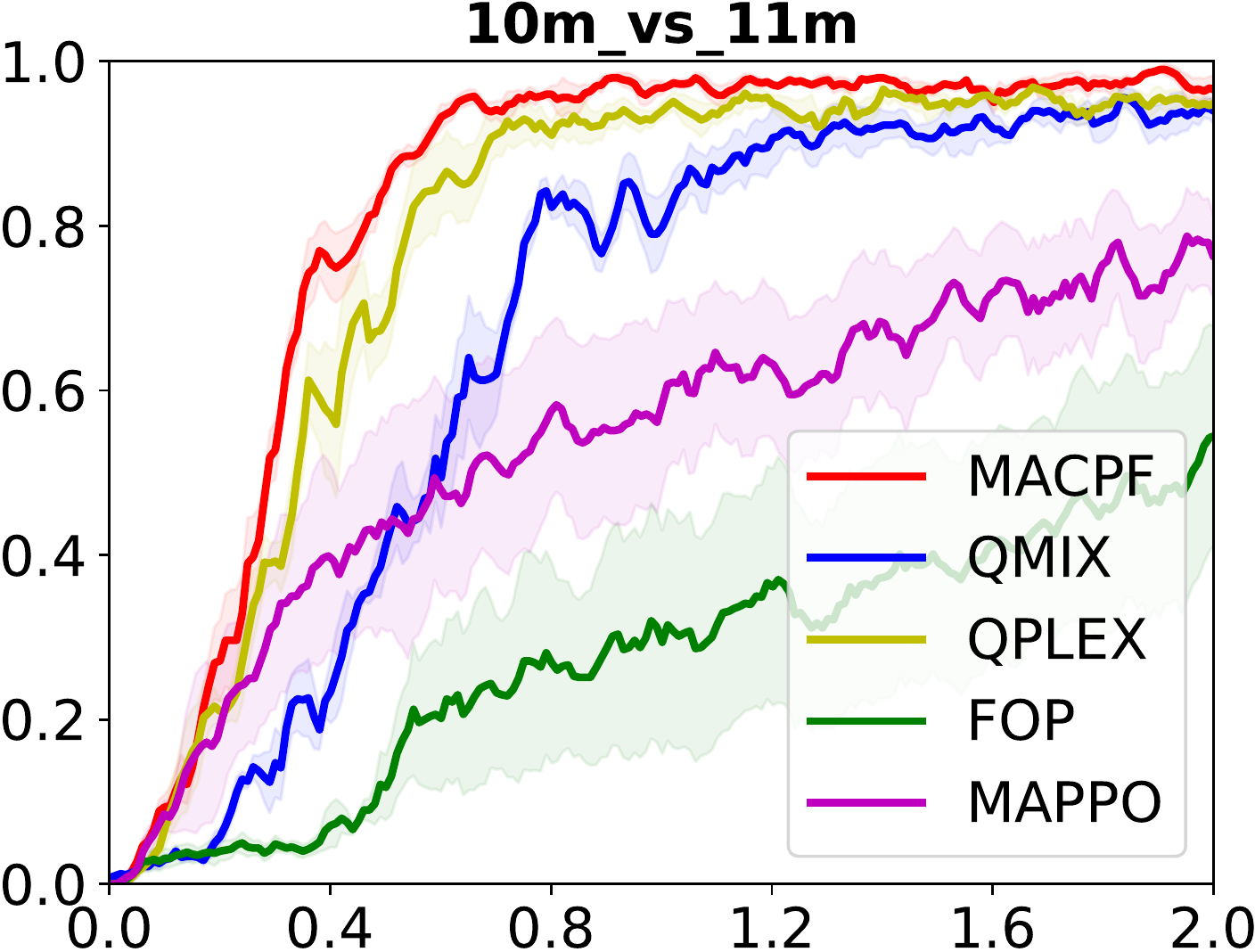} \\
  		\vspace{2mm}
  		\includegraphics[width=0.29\linewidth]{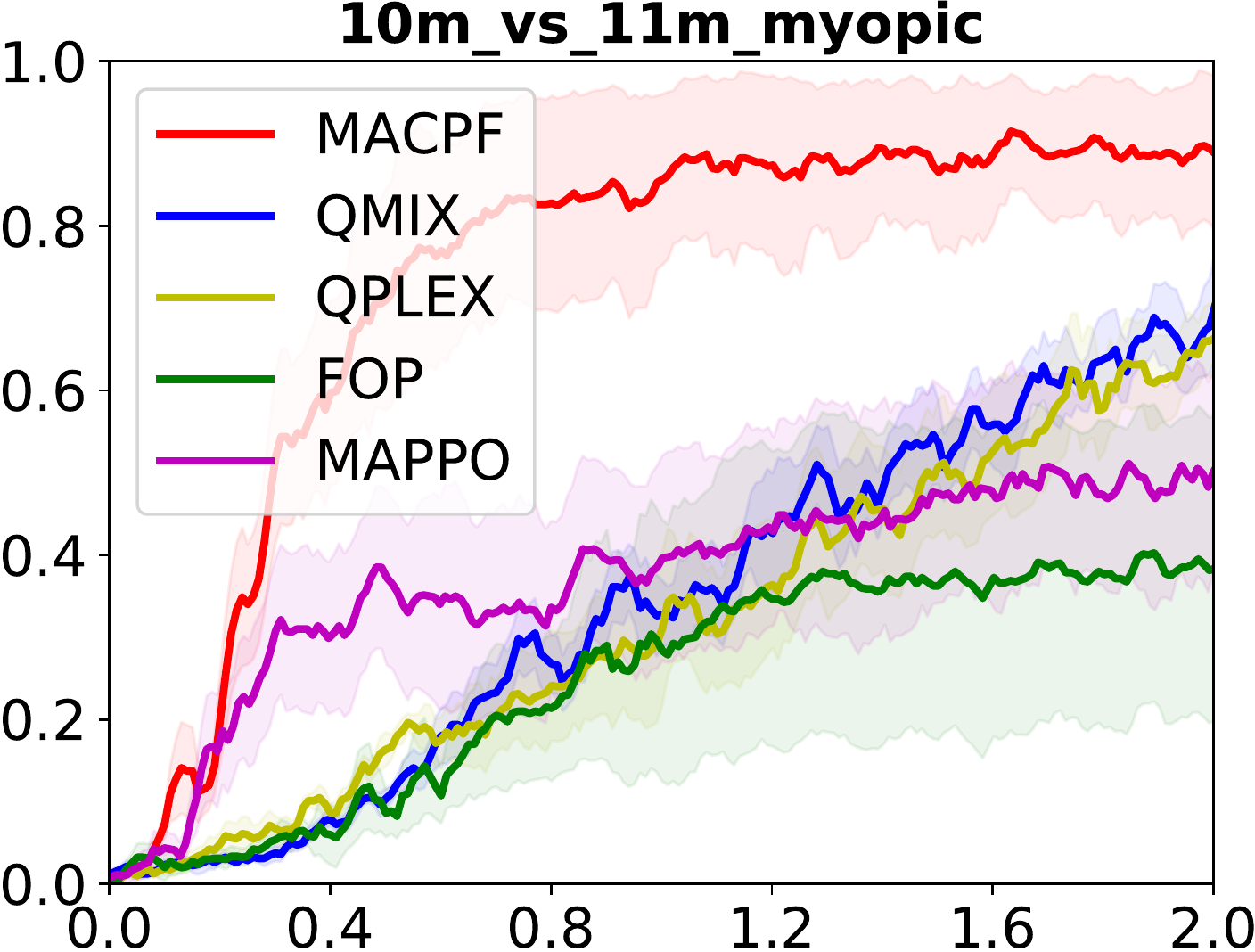}
  		\hspace{2mm}
  		\includegraphics[width=0.29\linewidth]{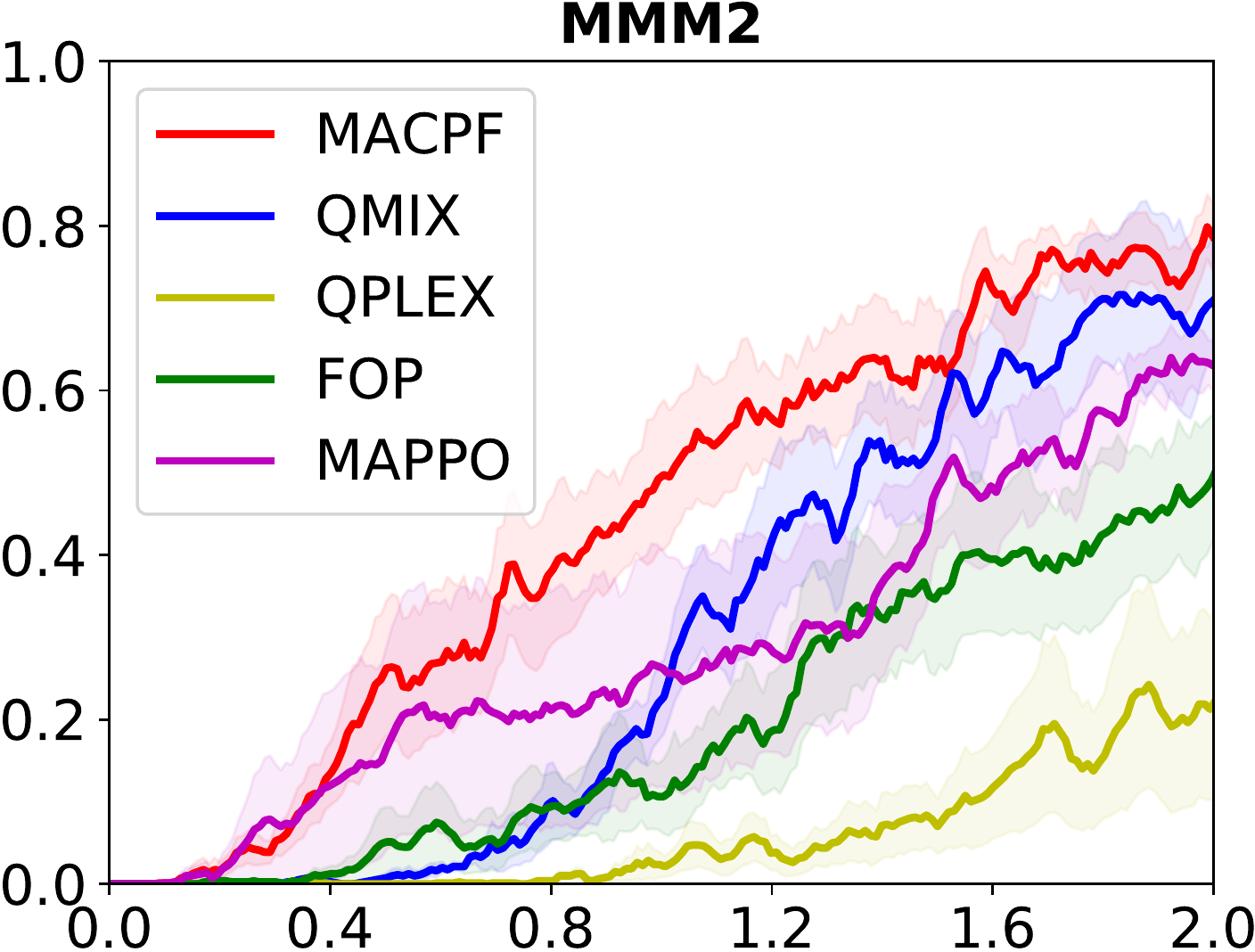}
  		\hspace{2mm}
  		\includegraphics[width=0.29\linewidth]{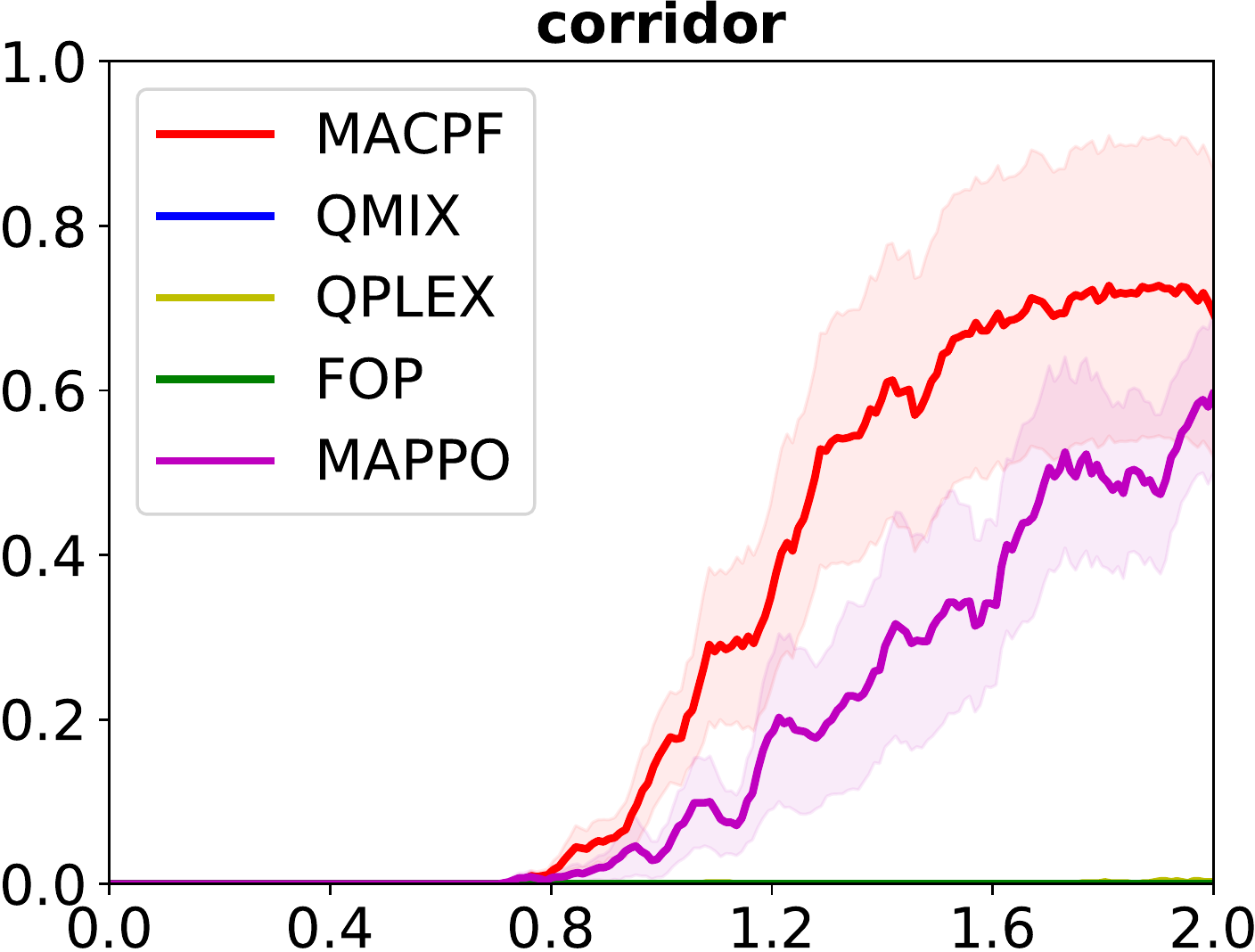}
  		\caption{Learning curves of all the methods in six maps of SMAC, where the unit of x-axis is 1M timesteps and y-axis represents the win rate of each map.}
        \label{Fig:smac_result}
\vspace{-3mm}
\end{figure*}

Further, we evaluate MACPF on SMAC. Maps used in our experiment include two hard maps (8m\_vs\_9m, 10m\_vs\_11m), and two super-hard maps (MMM2, corridor). We also consider two challenging customized maps (8m\_vs\_9m\_myopic, 10m\_vs\_11m\_myopic), where the sight range of each agent is reduced from 9 to 6, and the information of allies is removed from the observation of agents. These changes are adopted to increase the difficulty of coordination in the original maps. 
Results are shown in Figure \ref{Fig:smac_result}. In general, MACPF
outperforms the baselines in all six maps. In hard maps, MACPF outperforms the baselines mostly in convergence speed, while in super-hard maps, MACPF outperforms other algorithms in either convergence speed or performance. Especially in corridor, when other value decomposition algorithms fail to learn any meaningful joint policies, MACPF obtains a winning rate of almost 70\%. 
In the two more challenging maps, the margin between MACPF and the baselines becomes much larger than that in the original maps.
These results show that MACPF can better handle complex cooperative tasks and learn coordinated strategies by introducing dependency among agents even when the task requires stronger coordination.


We compare MACFP with the baselines in 18 maps totally. Their final performance is summarized in Appendix~\ref{app:moreexp}. \textbf{\textit{The win rate of MACFP is higher than or equivalent to the best baseline in 16 out of 18 maps, while QMIX, QPLEX, MAPPO, and FOP are respectively 7/18, 8/18, 9/18, and 5/18. }}

\textbf{Dependent and Independent Joint Policy.}
As discussed in Section \ref{policy extraction}, the learned independent joint policy of MACPF should not only enable decentralized execution but also match the performance of dependent joint policy, as verified in the matrix game. \textit{What about complex environments like SMAC?} As shown in Figure \ref{Fig:r_track}, we track the evaluation result of both $\pijtnodep$ and $\pijtdep$ during training. As we can see, their performance stays at the same level throughout training. 

\begin{figure*}[t]
\setlength{\abovecaptionskip}{5pt}
        \centering
  		\includegraphics[scale=0.23]{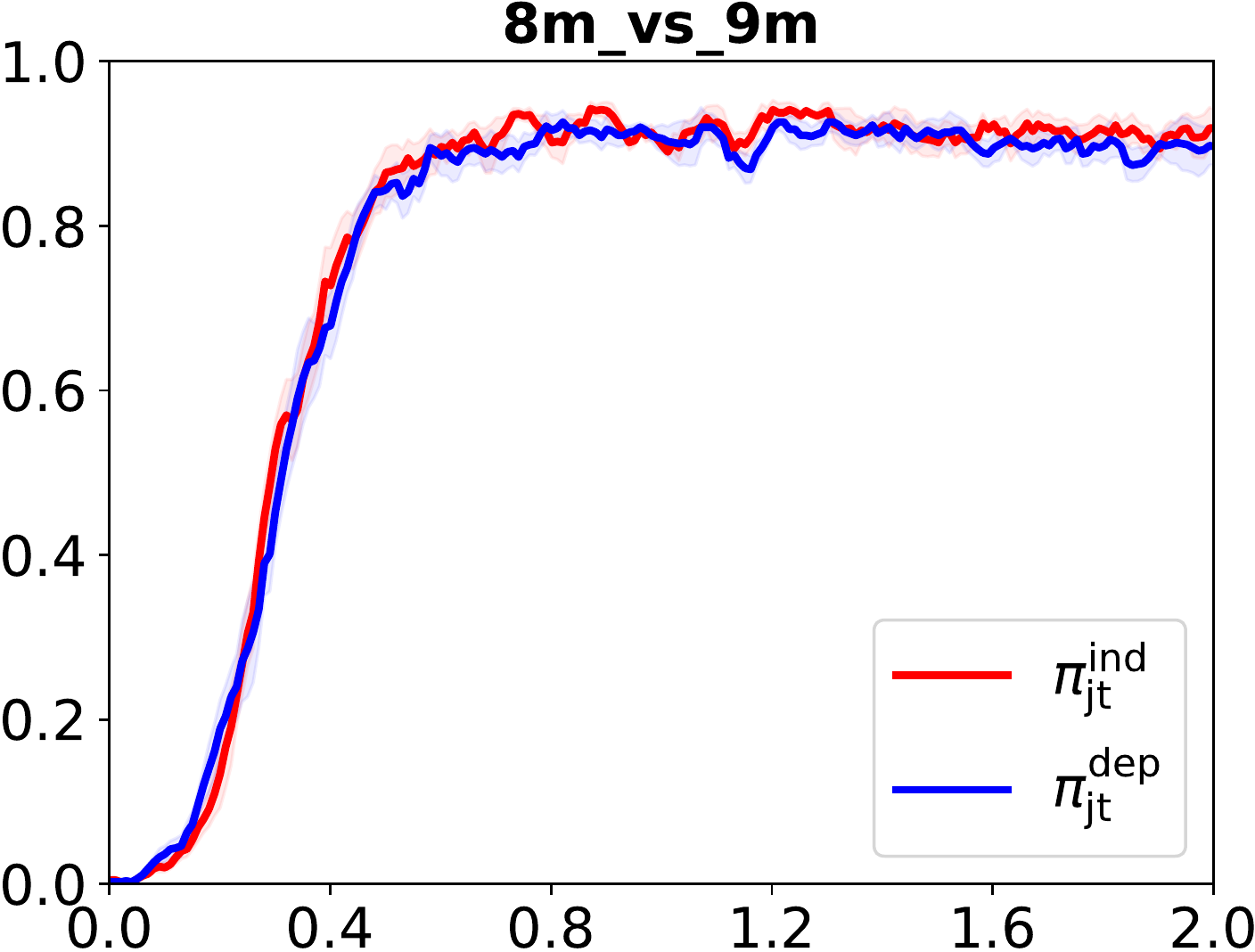}
  		\includegraphics[scale=0.235]{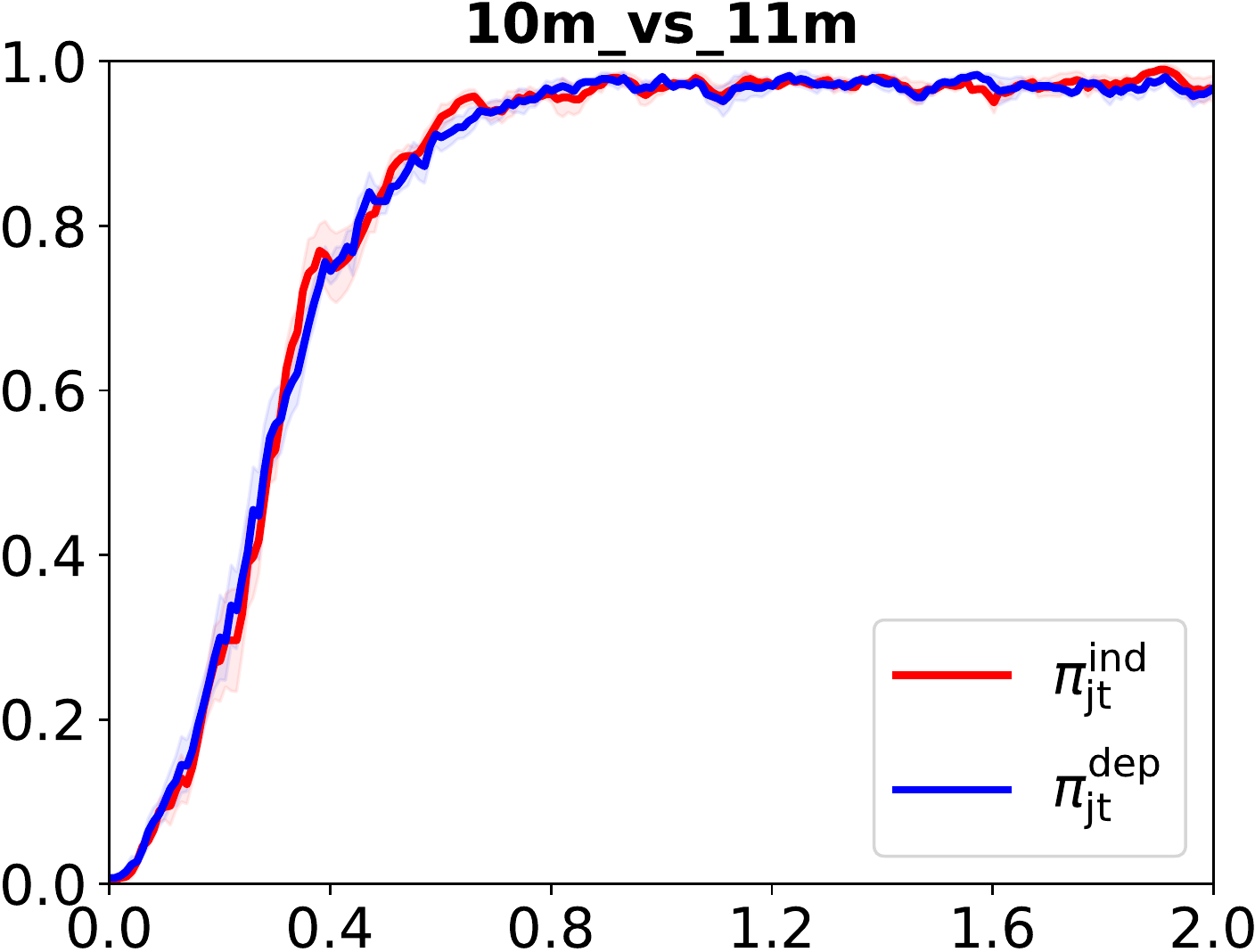}
  		\includegraphics[scale=0.23]{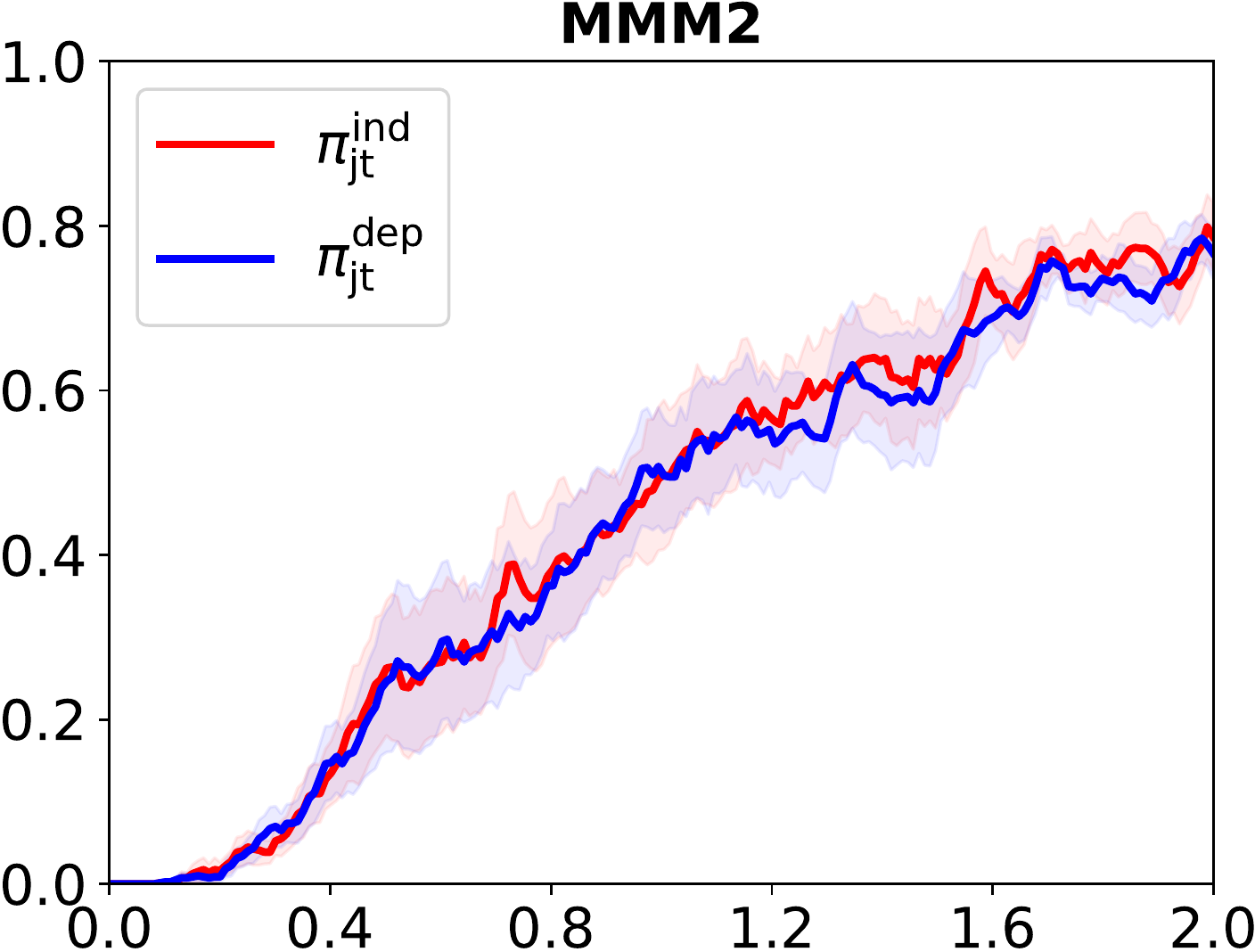}
  		\includegraphics[scale=0.23]{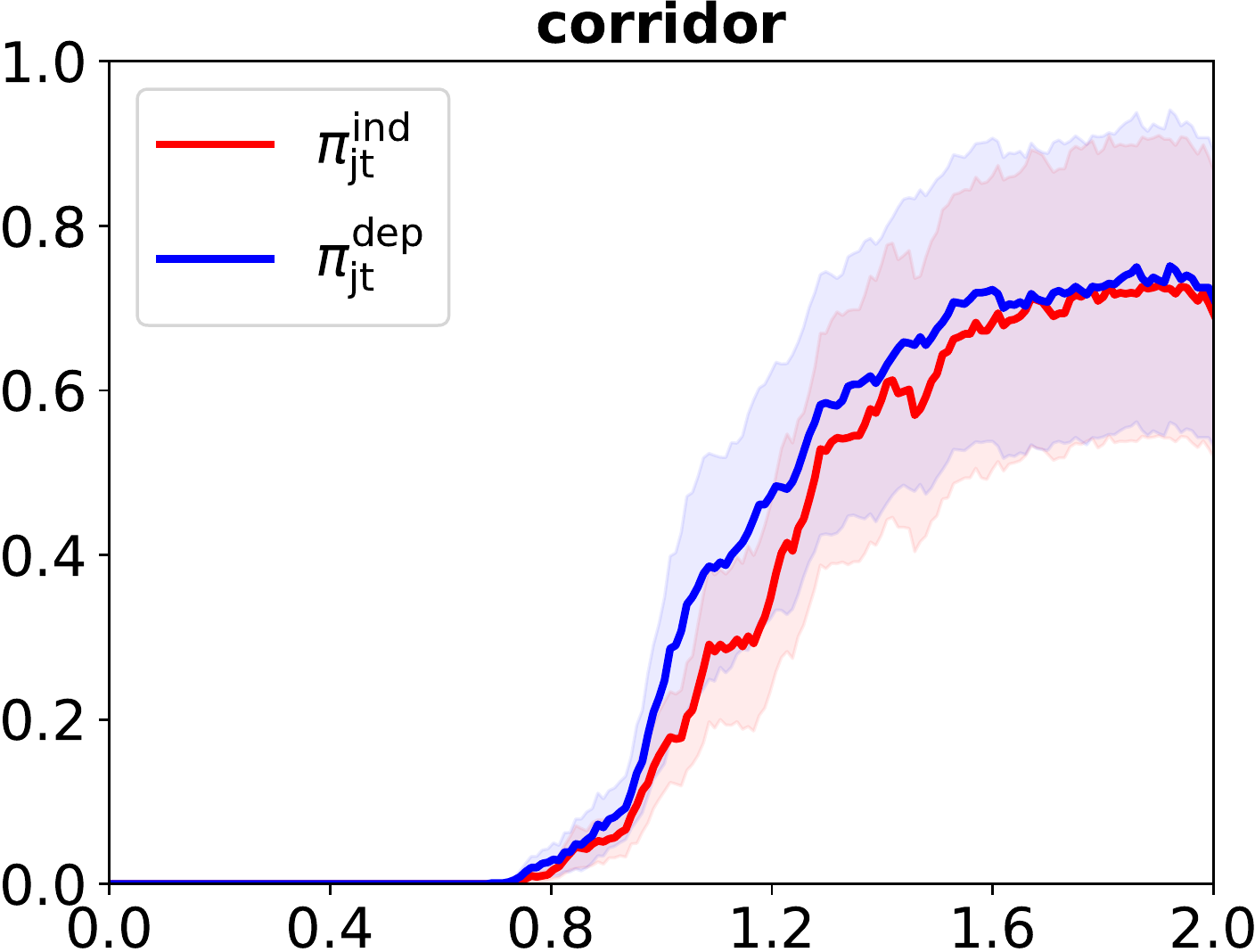}
  		\caption{Performance of $\pijtdep$ and $\pijtnodep$ during training in four maps of SMAC, where the unit of x-axis is 1M timesteps and y-axis represents the win rate of each map.}
        \label{Fig:r_track}
\vspace{-2mm}
\end{figure*}
\begin{figure*}[t!]
\setlength{\abovecaptionskip}{5pt}
        \centering
  		\includegraphics[scale=0.23]{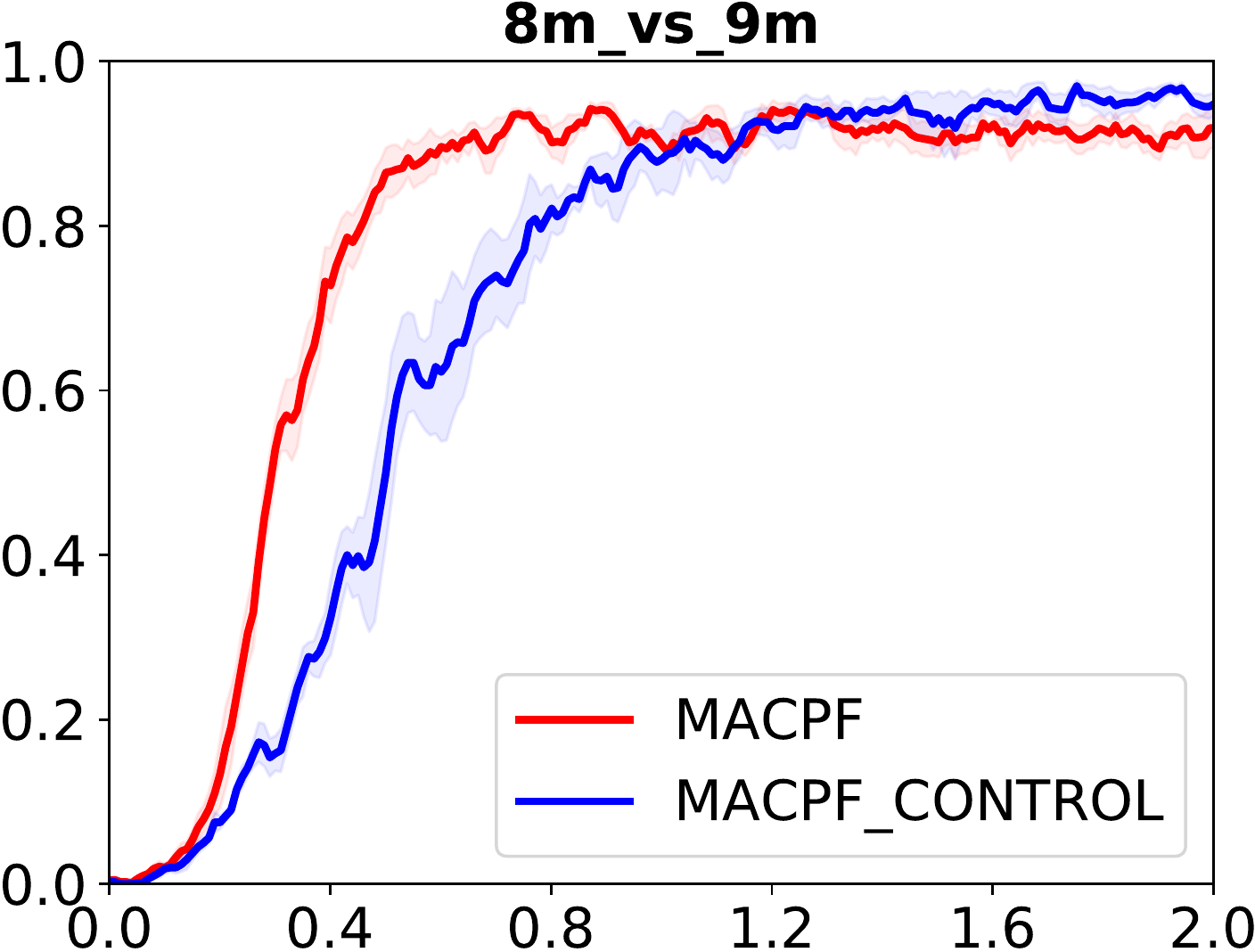}
  		\includegraphics[scale=0.23]{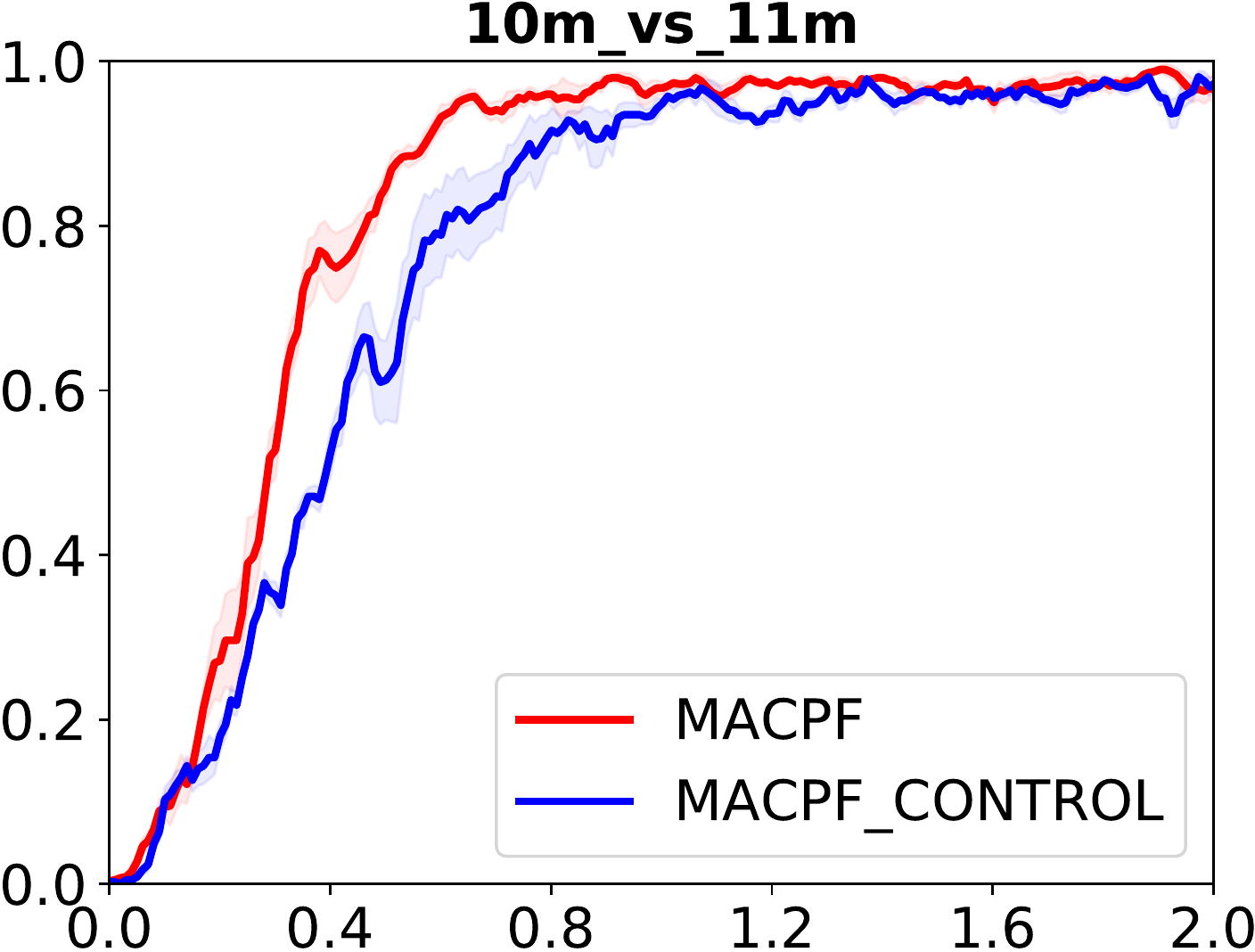} 
  		\includegraphics[scale=0.23]{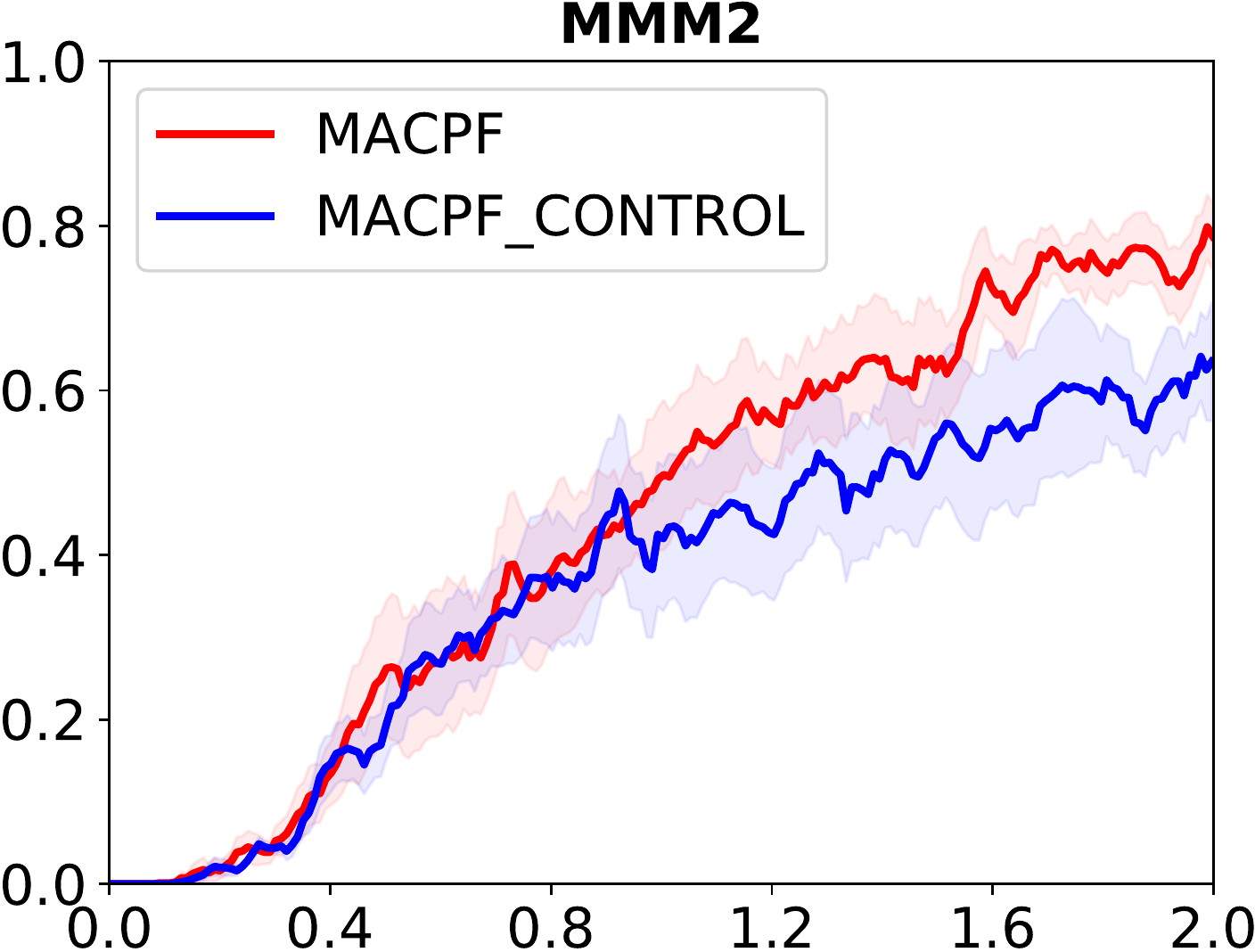}
  		\includegraphics[scale=0.23]{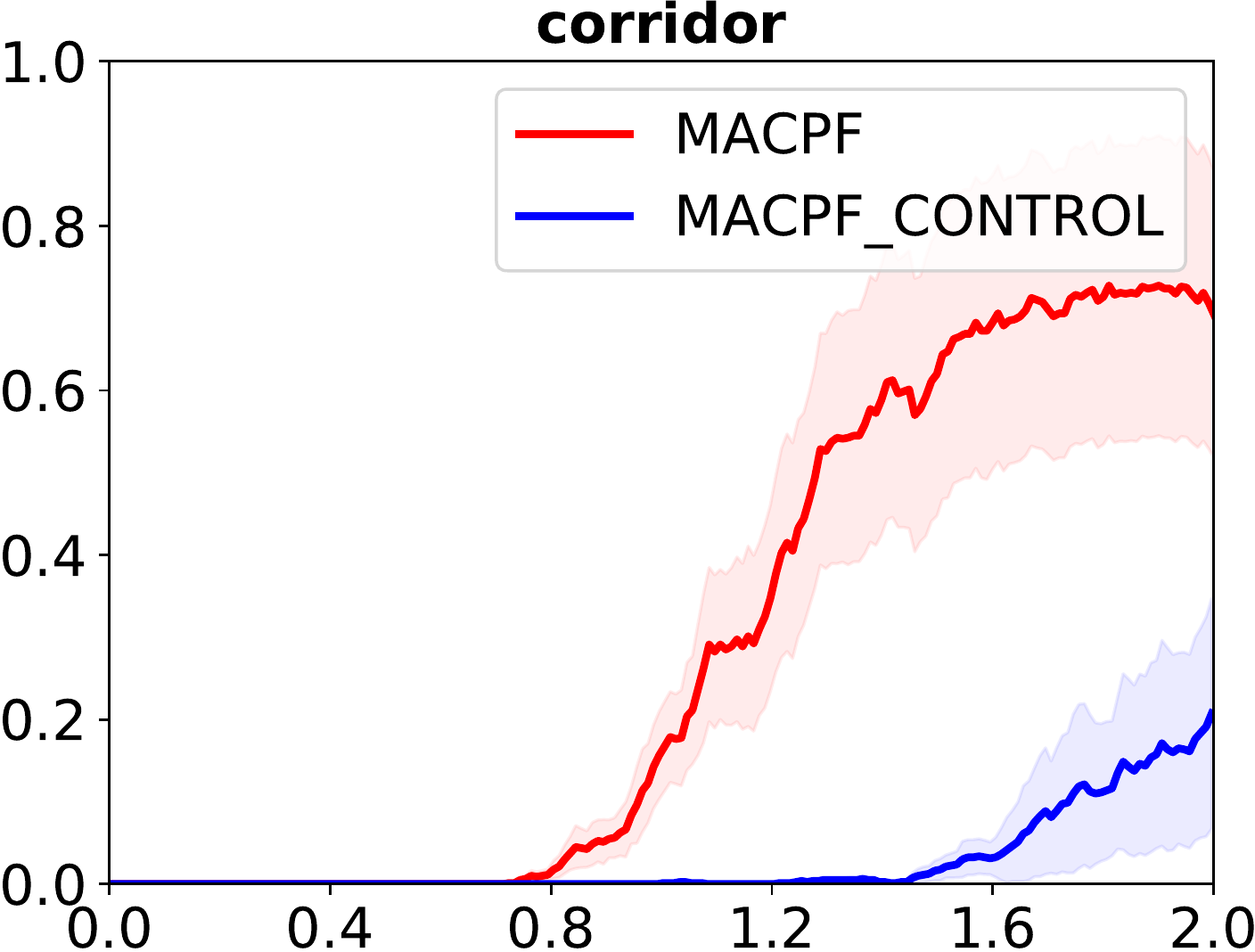}
  		\caption{Ablation study in four maps of SMAC, where the unit of x-axis is 1M timesteps and y-axis represents the win rate of each map.}
        \label{Fig:ablation}
\vspace{-4mm}
\end{figure*}

\textbf{Ablation Study.}
Without learning a dependent joint policy to interact with the environment, our algorithm degenerates to FOP. However, since our factorization of $\qjt$ is induced from the chain rule factorization of joint probability \eqref{chain rule}, we use a mixer network different from FOP (the reason is discussed and verified in Appendix~\ref{app:mixer}). Here we present an ablation study to further show that the improvement of MACPF is indeed induced by introducing the dependency among agents. In Figure \ref{Fig:ablation}, MACPF\_CONTROL represents an algorithm where all other perspectives are the same as MACPF, except no dependent joint policy is learned. As shown in Figure \ref{Fig:ablation}, MACPF outperforms MACPF\_CONTROL in all four maps, demonstrating that the performance improvement is indeed achieved by introducing the dependency among agents. 

\subsection{MPE}

We further evaluate MACPF on three MPE tasks, including simple spread, formation control, and line control \citep{agarwal2020learning}. As shown in Table \ref{tab:mpe reward}, MACPF outperforms the baselines in all three tasks. A large margin can be observed in simple spread, while only a minor difference can be observed in the other two. This result may indicate that these MPE tasks are not challenging enough for strong MARL algorithms.

\begin{table}[h!]
\vspace{-2mm}
\renewcommand{\arraystretch}{1.1}
\setlength{\tabcolsep}{2pt}
    \centering
    \caption{Average rewards per episode on three MPE tasks.}
    \label{tab:mpe reward}
    \vspace{-3mm}
    \begin{adjustbox}{width=.8\textwidth}
    \begin{tabular}{|c || c | c|  c| c| c|}
    \hline
    \diagbox{Tasks}{Algorithms} & MACPF & QMIX & QPLEX & FOP & MAPPO\\
        \hline\hline
        Simple Spread & \textbf{-118.24}$\pm$2.74 & -145.93$\pm$21.09 & -122.50$\pm$2.58 & -125.19$\pm$5.42 &-166.75$\pm$23.44\\
        \hline
        Formation Control & \textbf{-15.79}$\pm$0.16 & -16.11$\pm$0.30 & -16.10$\pm$0.28 & --15.84$\pm$0.19 & -21.71$\pm$1.69\\
        \hline
        Line Control & \textbf{-19.60}$\pm$0.33 & -20.12$\pm$0.21 & -20.17$\pm$0.26 & -19.78$\pm$0.27 &-24.47$\pm$2.54\\
       \hline
       \end{tabular}
    \end{adjustbox}
       \vspace{-2mm}
\end{table}

\section{Conclusion}
We have proposed MACPF, where dependency among agents is introduced to enable more centralized training. By conditional factorized soft policy iteration, we show that dependent local policies provably converge to the optimum. To fulfill decentralized execution, we represent dependent local policies as a combination of independent local policies and dependency policy corrections, such that independent local policies can achieve the same level of expected return as dependent ones. Empirically, we show that MACPF can obtain the optimal joint policy in a simple yet challenging matrix game while baselines fail and MACPF also outperforms the baselines in SMAC and MPE.

\subsubsection*{Acknowledgments}
This work was supported in part by NSFC under grant 62250068 and Tencent AI Lab. 
\bibliography{iclr2023_conference}
\bibliographystyle{iclr2023_conference}

\newpage
\appendix

\section{Proof of Theorem \ref{Soft Policy Iteration}}
\label{SPI proof}
In this subsection, we incorporate dependency among agents into the standard soft policy iteration and prove that this modified soft policy iteration converges to the optimal joint policy.

For \emph{soft policy evaluation}, we will repeatedly apply soft Bellman operator $\op_{\pijt}$ to $\qjt^{\pijt}$ until convergence, where:
\begin{align}
    \op_{\pijt}\qjt(\taujtproof_t, \ujt_t) := r_t + \gamma \E_{\taujtproof_{t+1}}[\vjt(\taujtproof_{t+1})] \\
    \vjt(\taujtproof_t) = \E_{\pijt}[\qjt(\taujtproof_t,\ujt_t) - \alpha \log \pijt(\ujt_t|\taujtproof_t)].
\end{align}
In this way, as shown in Lemma \ref{soft policy evaluation}, we can get $\qjt^{\pijt}$ for any joint policy $\pijt$.
\begin{lemma}[\textbf{Joint Soft Policy Evaluation}]
\label{soft policy evaluation}
Consider the modified soft Bellman backup operator $\op_{\pijt}$ and a mapping $\qjt^{0} : \taujtsetproof \times \ujtset \rightarrow \mathbb{R}$ with $|\ujtset| < \infty$, and define $\qjt^{k+1} = \op_{\pijt} \qjt^{k}$. Then, the sequence $\qjt^{k}$ will converge to the joint soft Q-function of $\pijt$ as $k \rightarrow \infty$.
\end{lemma}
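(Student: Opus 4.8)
The plan is to recognize $\op_{\pijt}$ as an ordinary Bellman backup operator for a suitably modified reward, and then invoke the Banach fixed-point theorem exactly as in the single-agent soft policy evaluation result of \citet{haarnoja2018soft}. The multi-agent structure plays no role in this lemma: $\pijt$ is simply a distribution over the finite joint action space $\ujtset$, irrespective of how it factorizes across agents via the chain rule, so the argument is a direct transcription of the scalar case.

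First I would define the entropy-augmented reward
\begin{align}
r_{\pijt}(\taujtproof_t, \ujt_t) := r_t + \gamma \,\E_{\taujtproof_{t+1}}\E_{\ujt_{t+1}\sim\pijt}\big[-\alpha \log \pijt(\ujt_{t+1}|\taujtproof_{t+1})\big],
\end{align}
so that unrolling the definitions of $\op_{\pijt}$ and $\vjt$ gives $\op_{\pijt}\qjt(\taujtproof_t,\ujt_t) = r_{\pijt}(\taujtproof_t,\ujt_t) + \gamma\,\E_{\taujtproof_{t+1},\,\ujt_{t+1}\sim\pijt}[\qjt(\taujtproof_{t+1},\ujt_{t+1})]$. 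Because $|\ujtset| < \infty$, the entropy term $\E_{\ujt\sim\pijt}[-\alpha\log\pijt(\ujt|\taujtproof)]$ is bounded (by $\alpha \log|\ujtset|$), and assuming $r$ is bounded, $r_{\pijt}$ is a bounded function on $\taujtsetproof\times\ujtset$. Hence $\op_{\pijt}$ maps the space $\mathcal{B}(\taujtsetproof\times\ujtset)$ of bounded real-valued functions, equipped with the sup-norm $\|\cdot\|_\infty$, into itself.

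Next I would show $\op_{\pijt}$ is a $\gamma$-contraction: for any $Q, Q' \in \mathcal{B}(\taujtsetproof\times\ujtset)$, the augmented-reward term cancels, so
\begin{align}
\|\op_{\pijt}Q - \op_{\pijt}Q'\|_\infty = \gamma\,\big\|\E_{\taujtproof',\,\ujt'\sim\pijt}[Q(\taujtproof',\ujt') - Q'(\taujtproof',\ujt')]\big\|_\infty \leq \gamma\,\|Q - Q'\|_\infty,
\end{align}
using that a conditional expectation is an averaging operator and is therefore non-expansive in $\|\cdot\|_\infty$. Since $(\mathcal{B}(\taujtsetproof\times\ujtset), \|\cdot\|_\infty)$ is a complete metric space and $\gamma \in [0,1)$, the Banach fixed-point theorem applies: $\op_{\pijt}$ has a unique fixed point, and the iterates $\qjt^{k+1} = \op_{\pijt}\qjt^{k}$ converge to it (in sup-norm, hence pointwise) from any bounded $\qjt^{0}$. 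Finally I would identify this fixed point with the joint soft Q-function $\qjt^{\pijt}$ of $\pijt$, since it is precisely the function satisfying the soft Bellman consistency equation that defines $\qjt^{\pijt}$.

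The only step that genuinely needs care — and the reason the finiteness hypothesis $|\ujtset| < \infty$ is stated — is the boundedness of the augmented reward: without a bounded policy-entropy term, $\op_{\pijt}$ need not preserve the space of bounded functions and the sup-norm contraction argument does not close. Everything else is routine, and in particular the dependency structure among the agents' policies is irrelevant to the argument.
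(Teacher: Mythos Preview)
Your proposal is correct and follows essentially the same approach as the paper: define an entropy-augmented reward so that $\op_{\pijt}$ becomes the ordinary Bellman backup for that reward, then invoke the standard policy-evaluation convergence result. The paper's proof stops at ``apply the standard convergence results for policy evaluation'' \citep{sutton2018reinforcement}, whereas you unpack that step into an explicit $\gamma$-contraction and Banach fixed-point argument, but the substance is identical.
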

\begin{proof}
First, define the entropy augmented reward as:
\begin{align*}
    r_{\pijt}(\taujtproof_t, \ujt_t) := r(\taujtproof_t, \ujt_t) + \E_{\taujtproof_{t+1}}[\entropy (\pijt(\cdot |\taujtproof_{t+1}))].
\end{align*}
Then, rewrite the update rule as:
\begin{align*}
    \qjt(\taujtproof_t, \ujt_t) \leftarrow r_{\pijt}(\taujtproof_t, \ujt_t) + \gamma \E_{\taujtproof_{t+1}, \ujt_{t+1} \sim \pijt}[\qjt(\taujtproof_{t+1}, \ujt_{t+1})].
\end{align*}
Last, apply the standard convergence results for policy evaluation \citep{sutton2018reinforcement}.
\end{proof}
After we get $\qjt^{\pijt}$, we will make a one-step improvement for the joint policy. First, we restrict the local policy $\pi_{i}$ of each agent $i$ to some set of policies $\Pi_{i}$ and update the local policy according to the following optimization problem:
\begin{align}
    \pinew &= \argmin_{\pi_i' \in \Pi_{i}} \underset{J_{\piold, \pai}(\pi_i'(a_i | \tauproof, \pai))}{\underbrace{\underset{\pai \sim \pipanew}{\E} \left[\kl \left(\pi_i'(a_i | \tauproof, \pai) \|\exp\Big(\frac{1}{\alpha_{i}}\big(Q_{i}^{\piold}(\tauproof, \pai, a_i) - V_{i}^{\piold}(\tauproof, \pai)\big)\Big) \right) \right]}}.\label{eq:optimizer}
\end{align}
Based on \emph{individual conditional soft policy improvement}, we will show that the newly projected joint soft policy has a higher state-action value than the old joint soft policy with respect to the maximum-entropy RL objective.
\begin{lemma}[\textbf{Individual Conditional Soft Policy Improvement}]
\label{soft policy improvement}
Let $\piold \in \Pi_{i}$ and $\pinew$ be the optimizer of the minimization problem in \eqref{eq:optimizer}. Then, we have $\qjt^{\pijtnew}(\taujtproof_t, \ujt_t) \geq \qjt^{\pijtold}(\taujtproof_t, \ujt_t)$ for all $(\taujtproof_t, \ujt_t) \in \taujtsetproof \times \ujtset$ with $|\ujtset| < \infty$, where $\pijtold(\ujt|\taujtproof) = \prod_{i=1}^{N}\piold(a_i|\tauproof,\pai)$ and $\pijtnew(\ujt|\taujtproof) = \prod_{i=1}^{N}\pinew(a_i|\tauproof,\pai)$.
\end{lemma}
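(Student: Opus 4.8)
The plan is to mirror the single-agent soft policy improvement argument of \citet{haarnoja2018soft}, lifted to the conditional chain-rule factorization. I would organize it in three moves: (i) turn the optimality of each $\pinew$ in the minimization \eqref{eq:optimizer} into a per-agent ``soft improvement'' inequality; (ii) aggregate the $N$ per-agent inequalities into one joint-level inequality stating that the soft value target under $\pijtnew$ dominates $\vjt^{\pijtold}$; (iii) substitute that joint inequality into the soft Bellman equation for $\qjt^{\pijtold}$ and iterate until $\qjt^{\pijtnew}$ appears, exactly as in the single-agent case.

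For (i), I would fix an agent $i$ and use that $\piold$ is feasible in \eqref{eq:optimizer} while $\pinew$ is the minimizer, so $J_{\piold,\pai}(\pinewfull)\le J_{\piold,\pai}(\pioldfull)$. Expanding both KL divergences, the baseline $\tfrac1{\alpha_i}V_i^{\piold}(\tauproof,\pai)$ is constant in $a_i$ and cancels from the two sides, and, writing $V_i^{\piold}(\tauproof,\pai):=\E_{a_i\sim\piold}[Q_i^{\piold}(\tauproof,\pai,a_i)-\alpha_i\log\pioldfull]$ for the soft local value of $\piold$ against $Q_i^{\piold}$, the $\piold$-side collapses to $\E_{\pai\sim\pipanew}[V_i^{\piold}(\tauproof,\pai)]$; rearranging yields, for every $i$,
\[
\E_{\pai\sim\pipanew}\,\E_{a_i\sim\pinew}\!\big[\,Q_i^{\piold}(\tauproof,\pai,a_i)-\alpha_i\log\pinewfull\,\big]\;\ge\;\E_{\pai\sim\pipanew}\big[\,V_i^{\piold}(\tauproof,\pai)\,\big].
\]
For (ii), I would multiply the $i$-th inequality by $\tfrac{\alpha}{\alpha_i}$ and sum over $i$. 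Since the $i$-th summand depends only on $(\taujtproof,a_{\le i})$, the nested expectation $\E_{\pai\sim\pipanew}\E_{a_i\sim\pinew}$ coincides with $\E_{\ujt\sim\pijtnew}$ of that summand; the chain rule \eqref{chain rule} collapses $\alpha\sum_i\log\pinewfull$ into $\alpha\log\pijtnew(\ujt|\taujtproof)$, and the value decomposition \eqref{value decomposition}, in the form $\sum_i\tfrac{\alpha}{\alpha_i}[Q_i^{\piold}-V_i^{\piold}]=\qjt^{\pijtold}-\vjt^{\pijtold}$, lets the $\sum_i\tfrac{\alpha}{\alpha_i}V_i^{\piold}$ contributions cancel between the two sides while extracting $\qjt^{\pijtold}-\vjt^{\pijtold}$. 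What remains, after pulling out the $\ujt$-independent $\vjt^{\pijtold}$, is
\[
\E_{\ujt\sim\pijtnew}\!\big[\,\qjt^{\pijtold}(\taujtproof,\ujt)-\alpha\log\pijtnew(\ujt|\taujtproof)\,\big]\;\ge\;\vjt^{\pijtold}(\taujtproof)\;=\;\E_{\ujt\sim\pijtold}\!\big[\,\qjt^{\pijtold}(\taujtproof,\ujt)-\alpha\log\pijtold(\ujt|\taujtproof)\,\big],
\]
the last equality being the definition of $\vjt$.

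For (iii), I would start from the soft Bellman identity $\qjt^{\pijtold}(\taujtproof_t,\ujt_t)=r_t+\gamma\,\E_{\taujtproof_{t+1}}[\vjt^{\pijtold}(\taujtproof_{t+1})]$, replace $\vjt^{\pijtold}(\taujtproof_{t+1})$ by the lower bound from (ii), expand the $\qjt^{\pijtold}$ that reappears, and repeat; because $|\ujtset|<\infty$ keeps rewards and per-step entropies bounded and $\gamma<1$, the iteration converges and telescopes to $\qjt^{\pijtnew}(\taujtproof_t,\ujt_t)$, so $\qjt^{\pijtnew}\ge\qjt^{\pijtold}$ on all of $\taujtsetproof\times\ujtset$ --- this last part is verbatim the monotonicity argument of \citet{haarnoja2018soft}.

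I expect Steps (i) and (iii) to be essentially routine once the single-agent template is in hand; the hard part will be Step (ii). The delicate point is aligning the per-agent objects --- each conditioned on its own prefix $\pai$ and carrying its own temperature $\alpha_i$ --- with the joint objects at temperature $\alpha$, so that summing the per-agent bounds and invoking both \eqref{chain rule} and \eqref{value decomposition} makes every cross term (in particular the $\sum_i\tfrac{\alpha}{\alpha_i}V_i^{\piold}$ contributions) cancel and deposits precisely $\vjt^{\pijtold}$ on the right-hand side; this is exactly what the $\tfrac{\alpha}{\alpha_i}$ weights in \eqref{value decomposition} are calibrated to achieve, and it is the only place where one must be careful that the $Q_i^{\piold}, V_i^{\piold}$ entering \eqref{value decomposition} are the same ones appearing in Step (i). I would also need to check the bookkeeping that ``$\E_{\pai\sim\pipanew}\E_{a_i\sim\pinew}$ of an $(\taujtproof,a_{\le i})$-measurable quantity equals $\E_{\ujt\sim\pijtnew}$ of it'', i.e., that marginalizing out the later agents is harmless. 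A minor worry is the apparent circularity that agent $i$'s update in \eqref{eq:optimizer} is taken in expectation over $\pipanew$; this is innocuous because the improvement is a single synchronous step with the prefix marginal of the KL target held fixed, and the same $\pipanew$ is used consistently throughout the aggregation.
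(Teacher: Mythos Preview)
Your proposal is correct and follows essentially the same route as the paper: per-agent KL comparison $\Rightarrow$ inequality \eqref{eq: KL ineq}, aggregation via the $\tfrac{\alpha}{\alpha_i}$-weighted decomposition \eqref{value decomposition} and the chain rule \eqref{chain rule} to obtain $\E_{\ujt\sim\pijtnew}[\qjt^{\pijtold}-\alpha\log\pijtnew]\ge\vjt^{\pijtold}$, and then the standard soft Bellman unrolling. Your treatment of the bookkeeping (marginalizing out later agents, the role of $\pipanew$ in \eqref{eq:optimizer}) is in fact more explicit than the paper's, but the argument is the same.
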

\begin{proof}
Let $Q_{i}^{\piold}$ and $V_{i}^{\piold}$ be the corresponding soft state-action value and soft state value of individual policy $\piold$. First, considering that $J_{\piold, \pai}(\pinewfull) \leq J_{\piold, \pai}(\pioldfull)$. Then, we have:
\begin{equation}
\label{eq:l_pi_o}
\begin{aligned}
    &\E_{a_i \sim \pinew, \pai \sim \pipanew}[\alpha_{i} \log \pinewfull - Q_{i}^{\piold}(\tauproof, \pai, a_i) + V_{i}^{\piold}(\tauproof, \pai)] \\
    &\leq \E_{a_i \sim \piold, \pai \sim \pipanew}[\alpha_{i} \log \pioldfull - Q_{i}^{\piold}(\tauproof, \pai, a_i) + V_{i}^{\piold}(\tauproof, \pai)].
\end{aligned}
\end{equation}
Since $V_i^{\piold}$ depends only on $\tauproof$ and $\pai$, where:
\begin{align}
\label{eq:v_to_q}
    \E_{\pai \sim \pipanew}[V_{i}^{\piold}(\tauproof, \pai)] = \E_{\pai \sim \pipanew, a_i \sim \piold}[Q_{i}^{\piold}(\tauproof, \pai, a_i) - \alpha_i \log \pioldfull].
\end{align}
By deducing \eqref{eq:v_to_q} from both sides of \eqref{eq:l_pi_o}, we have:
\begin{equation}
\label{eq: KL ineq}
\begin{aligned}
    &\E_{a_i \sim \pinew, \pai \sim \pipanew}[Q_{i}^{\piold}(\tauproof, \pai, a_i) - \alpha_{i} \log \pinewfull] \geq \E_{\pai \sim \pipanew} [V_{i}^{\piold}(\tauproof, \pai)].
\end{aligned}
\end{equation}
And since
\begin{align*}
    &\pijtnew = \exp\left(\frac{1}{\alpha}\left(\qjt^{\pijtold}(\taujtproof, \ujt) - \vjt^{\pijtold}(\taujtproof)\right)\right) \\
    &\pinew = \exp\left(\frac{1}{\alpha}\left(Q_{i}^{\piold}(\tauproof, \pai, a_i) - V_{i}^{\piold}(\tauproof, \pai)\right)\right)\\
    &\pijtnew(\ujt | \taujtproof) = \prod_{i=1}^{N}\pinewfull,
\end{align*}
we can have:
\begin{align*}
   \qjt^{\pijtold}(\taujtproof, \ujt) = \sum_{i=1}^{N} \frac{\alpha}{\alpha_i}[Q_{i}^{\piold}(\tauproof, \pai, a_i) - V_{i}^{\pijtold}(\tauproof, \pai)] + \vjt^{\pijtold}(\taujtproof).
\end{align*}
Then we have
\begin{align}
    &\E_{\ujt \sim \pijtnew}[\qjt^{\pijtold}(\taujtproof, \ujt) - \alpha \log \pijtnew(\ujt | \taujtproof)] \notag \\
    &= \E_{\ujt \sim \pijtnew}\left[\sum_{i=1}^{N} \frac{\alpha}{\alpha_i}[Q_{i}^{\piold}(\tauproof, \pai, a_i) - V_{i}^{\piold}(\tauproof, \pai)] + \vjt^{\pijtold}(\taujtproof) - \alpha \log \pijtnew(\ujt|\taujtproof)\right] \notag\\
    &= \sum_{i=1}^{N} \E_{\ujt \sim \pijtnew, \pai \sim \pipanew}\left[\frac{\alpha}{\alpha_i}[Q_{i}^{\piold}(\tauproof, \pai, a_i) - V_{i}^{\piold}(\tauproof, \pai) - \alpha_{i} \log \pinewfull]\right] + \vjt^{\pijtold}(\taujtproof) \notag \\
    &\geq \vjt^{\pijtold}(\taujtproof), \label{eq: one step improvement}
\end{align}
where the inequality is from plugging in \eqref{eq: KL ineq}.

Last, considering the soft bellman equation, the following holds:
\begin{align*}
    \qjt^{\pijtold}(\taujtproof_t, \ujt_t) &= r_t + \gamma \E_{\taujtproof_{t+1}}[\vjt^{\pijtold}(\taujtproof)] \\
    &\leq r_t + \gamma \E_{\taujtproof_{t+1}}[\E_{\ujt \sim \pijtnew}[\qjt^{\pijtold}(\taujtproof_{t+1}, \ujt_{t+1}) - \alpha \log \pijtnew(\ujt_{t+1}|\taujtproof_{t+1})]]\\
    &\vdots\\
    & \leq \qjt^{\pijtnew}(\taujtproof_t, \ujt_t),
\end{align*}
where we have repeatedly expanded $\qjt^{\pijtold}$ on the RHS by applying the soft Bellman equation and the bound in \eqref{eq: one step improvement}.
\end{proof}
\setcounter{theorem}{0}
\emph{Conditional factorized soft policy iteration} alternates between joint soft policy evaluation and individual conditional soft policy improvement, and provably converges to the global optimum, as shown in Theorem \ref{Soft Policy Iteration}.
\begin{theorem}[\textbf{Conditional Factorized Soft Policy Iteration}]
For any joint policy $\pijt$, if we repeatedly apply joint soft policy evaluation and individual conditional soft policy improvement from $\pi_i \in \Pi_{i}$. Then the joint policy $\pijt(\ujt|\taujtproof) =  \prod_{i=1}^{n}\pi_{i}(a_i|\tauproof, \pai)$ will eventually converge to $\pijto$, such that $\qjt^{\pijto}(\taujtproof, \ujt) \geq \qjt^{\pijt}(\taujtproof, \ujt)$ for all $\pijt$, assuming $|\ujtset| < \infty$.
\end{theorem}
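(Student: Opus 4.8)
The plan is to mimic the classical soft policy iteration convergence argument from \citet{haarnoja2018soft}, now with the two building blocks already established: joint soft policy evaluation (Lemma~\ref{soft policy evaluation}) and individual conditional soft policy improvement (Lemma~\ref{soft policy improvement}). First I would fix the alternating scheme: starting from an arbitrary tuple $(\pi_i)_{i=1}^N$ with each $\pi_i \in \Pi_i$, at iteration $k$ we run joint soft policy evaluation to obtain $\qjt^{\pijt^{k}}$ exactly (in the limit), then perform one round of individual conditional soft policy improvement to produce $\pi_i^{k+1}$ for every $i$, which defines the next joint policy $\pijt^{k+1}(\ujt|\taujtproof) = \prod_{i=1}^N \pi_i^{k+1}(a_i|\tauproof,\pai)$ via the chain rule. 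The core observation is that Lemma~\ref{soft policy improvement} gives $\qjt^{\pijt^{k+1}}(\taujtproof,\ujt) \ge \qjt^{\pijt^{k}}(\taujtproof,\ujt)$ pointwise, so the sequence $\{\qjt^{\pijt^{k}}\}$ is monotone nondecreasing in $k$ at every $(\taujtproof,\ujt)$.

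Next I would argue boundedness and hence convergence. Since $|A| < \infty$ and rewards (plus the entropy bonus, which is bounded because the action set is finite) are bounded, the soft Q-values are uniformly bounded, so the monotone pointwise-nondecreasing sequence $\qjt^{\pijt^{k}}$ converges to some limit $\qjt^{*}$. Correspondingly the induced joint policies converge to some $\pijto(\ujt|\taujtproof) = \prod_{i=1}^N \pi_i^{*}(a_i|\tauproof,\pai)$ (passing to a subsequence if needed, using compactness of the finite-dimensional simplex of policies). At the fixed point, each $\pi_i^{*}$ must be a minimizer of the KL objective in \eqref{eq:optimizer} against its own soft Boltzmann target; otherwise a strict improvement would be possible, contradicting that $\qjt^{*}$ is the limit. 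I would then show $\qjt^{*} = \qjt^{\pijto}$ and that it satisfies the soft Bellman optimality-type fixed point for this factorized class.

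Finally I would establish global optimality: for any competing joint policy $\pijt$, I want $\qjt^{\pijto}(\taujtproof,\ujt) \ge \qjt^{\pijt}(\taujtproof,\ujt)$. The key point is that the chain-rule factorization \eqref{chain rule} is \emph{lossless} — every joint policy over a finite joint action space can be written as $\prod_i \pi_i(a_i|\tauproof,\pai)$ for suitable conditional policies $\pi_i \in \Pi_i$ — so the factorized policy class is exactly the full class of joint policies, not a strict subset (this is the crucial difference from the IGO-restricted FOP class). Hence, applying Lemma~\ref{soft policy improvement} one more time with $\piold = \pi_i^{*}$ and $\pinew$ the improvement step taken from an arbitrary $\pijt$ re-expressed in factorized form, together with the fact that $\pijto$ is a fixed point of the improvement operator, yields the desired inequality; this is the same closing move as in the single-agent soft policy iteration proof.

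\textbf{Main obstacle.} The delicate step is the global-optimality conclusion: I must be careful that the per-agent improvement step \eqref{eq:optimizer}, which optimizes each $\pi_i$ against a target that depends on the \emph{old} values and on $\pipanew$ (the already-updated earlier agents), genuinely searches over enough of the joint policy space to dominate an arbitrary $\pijt$. Making this rigorous requires the ``lossless factorization'' remark above plus a careful argument that a fixed point of the sequential (agent-by-agent) improvement operator is also a fixed point of the soft Bellman optimality operator over all joint policies — i.e., that the coordinate-wise optimality implied by the KL minimizations aggregates to joint optimality of $\qjt^{\pijto}$. I would handle this by combining \eqref{eq: one step improvement} (which already shows $\E_{\ujt\sim\pijtnew}[\qjt^{\pijtold}(\taujtproof,\ujt) - \alpha\log\pijtnew(\ujt|\taujtproof)] \ge \vjt^{\pijtold}(\taujtproof)$ for the sequentially-improved policy) with the observation that at the fixed point this inequality becomes an equality and the RHS equals $\max$ over all joint policies of the soft value, closing the loop.
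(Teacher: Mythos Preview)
Your proposal is correct and follows essentially the same approach as the paper: monotone improvement of $\qjt^{\pijt^{k}}$ via Lemma~\ref{soft policy improvement}, boundedness of soft Q-values from finite $|A|$ and bounded reward/entropy, convergence to a fixed point $\pijto$, and then the same ``iterative argument as in Lemma~\ref{soft policy improvement}'' to conclude $\qjt^{\pijto} \ge \qjt^{\pijt}$ for all $\pijt$. You are in fact more careful than the paper about two points it treats tersely --- the passage from Q-value convergence to policy convergence (you invoke compactness, the paper simply asserts it) and the step from per-agent KL optimality at the fixed point to the joint inequality $J_{\pijto}(\pijto) \le J_{\pijto}(\pijt)$ for all $\pijt$ (you make the lossless chain-rule factorization explicit, which is exactly what is needed and what the paper leaves implicit).
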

\begin{proof}

First, by Lemma \ref{soft policy improvement}, the sequence $\{\pi_{\operatorname{jt}}^{k}\}$ monotonically improves with $\qjt^{\pi_{\operatorname{jt}}^{k+1}}\geq \qjt^{\pi_{\operatorname{jt}}^{k}}$. Since both the reward and entropy are bounded, then $\qjt^{\pi_{\operatorname{jt}}^{k}}$ is bounded. Thus, this sequence must converge to some $\pijto$.
Then, at convergence, we have the following inequality:
\begin{align*}
    J_{\pijto}(\pijto(\cdot|\taujtproof)) \leq J_{\pijto}(\pijt(\cdot | \taujtproof)), \forall \pijt \neq \pijto.
\end{align*}
Using the same iterative argument as in the proof of Lemma \ref{soft policy improvement}, we get $\qjt^{\pijto}(\taujtproof, \ujt) \geq \qjt^{\pijt}(\taujtproof, \ujt)$ for all $(\taujtproof, \ujt) \in \taujtsetproof \times \ujtset$. That
is, the soft value of any other policy $\pijt$ is lower than that of the converged policy $\pijto$. Therefore, $\pijto$ is optimal in $\Pi_1 \times \cdots \times \Pi_N$.
\end{proof}

\section{Proof of Theorem \ref{IVT}}
\label{IVT proof}
\begin{theorem}
For any dependent joint policy $\pijtdep$ that involves dependency among agents, there exists an independent joint policy $\pijtnodep$ that does not involve dependency among agents, such that $V_{\pijtdep}(\taujtproof) = V_{\pijtnodep}(\taujtproof)$ for any state $\taujtproof \in \taujtsetproof$.
\end{theorem}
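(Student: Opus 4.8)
The plan is to prove the statement by exhibiting, for each state $\taujtproof$, an explicit independent joint policy that reproduces the same soft value. The key observation from Section \ref{policy extraction} is that $\vjt^{\pijtdep}(\taujtproof)$ depends on the dependent joint policy only through the distribution it induces over joint actions, together with the shared soft $\qjt$. So the task reduces to: given a distribution $p(\ujt)$ over the finite joint action set $A$, find a product distribution $q(\ujt) = \prod_{i=1}^N q_i(a_i)$ with the same value $\E_{q}[\qjt - \alpha \log q]$. Since $|A| < \infty$, I would work per-state and treat $\qjt(\taujtproof, \cdot)$ as a fixed function.

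First I would set up the soft value as a functional of the action distribution: for a distribution $\mu$ on $A$, define $F_{\taujtproof}(\mu) = \E_{\ujt \sim \mu}[\qjt(\taujtproof, \ujt) - \alpha \log \mu(\ujt)] = \sum_{\ujt} \mu(\ujt)\qjt(\taujtproof,\ujt) + \alpha\,\entropy(\mu)$. Because $\pijtdep(\cdot|\taujtproof)$ is a Boltzmann distribution over $\qjt$ values, equal probabilities force equal $\qjt$ values (the implication highlighted in the text), so the support of $\pijtdep(\cdot|\taujtproof)$ decomposes into "level sets" on which $\qjt$ is constant. Next I would use an intermediate-value / continuity argument (this is presumably why the appendix label is \texttt{IVT}): consider the family of independent policies obtained by interpolating the marginals, and show that $F_{\taujtproof}$ ranges continuously over an interval that contains $F_{\taujtproof}(\pijtdep(\cdot|\taujtproof))$. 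The cleanest route: the fully-uniform independent policy and a near-deterministic independent policy concentrated on a maximizing joint action give values that bracket $\vjt^{\pijtdep}(\taujtproof)$ from below and above (uniform may be too low, a greedy product too high, or vice versa depending on $\alpha$), so by continuity of $F_{\taujtproof}$ along a path of product distributions there is an intermediate product distribution hitting the target value exactly. One must check the target is genuinely sandwiched; if $\pijtdep$ already attains the max of $F_{\taujtproof}$ over all distributions, then the (independent) soft-optimal policy equals it and we are done trivially, so the interesting case is the strict one, where bracketing holds.

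Having matched the value at every state $\taujtproof$ by an independent $\pijtnodep(\cdot|\taujtproof)$, I would finish by propagating the equality through the soft Bellman recursion: with $\vjt^{\pijtnodep}(\taujtproof) = \vjt^{\pijtdep}(\taujtproof)$ for all $\taujtproof$, the soft $\qjt$ functions agree, hence so do the state values — formally a fixed-point uniqueness argument as in Lemma \ref{soft policy evaluation}, since both value functions satisfy the same soft Bellman equation once the per-state action distributions produce the same one-step soft value. This gives $V_{\pijtdep}(\taujtproof) = V_{\pijtnodep}(\taujtproof)$ for all $\taujtproof \in \taujtsetproof$.

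The main obstacle I anticipate is the bracketing/continuity step: making precise that a continuous path of \emph{product} distributions connects a policy whose soft value under-shoots the target to one that over-shoots it, and that $F_{\taujtproof}$ is continuous and takes the intermediate value, while also handling the case where the soft value of $\pijtdep$ already sits at the upper extreme. A secondary subtlety is that this only constructs a \emph{stationary} independent policy matching values state-by-state; one must confirm that plugging these per-state marginals back does not disturb the value at other states — which is exactly what the Bellman fixed-point argument in the last step secures, provided the construction is done simultaneously for all states and the one-step soft values are preserved.
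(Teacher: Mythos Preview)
Your overall architecture --- find, state by state, a product distribution matching the one-step value of $\pijtdep$, then propagate through the Bellman recursion --- is exactly the route the paper takes, including the appeal to an intermediate value theorem over the (path-connected) set of product measures. Where you diverge is in the \emph{target functional}: you work with the soft value $F_{\taujtproof}(\mu)=\E_\mu[\qjt]-\alpha\,\E_\mu[\log\mu]$, whereas the paper (and the statement of the theorem, read together with the preliminaries where $\qjt$ is defined as the ordinary discounted return) matches the \emph{standard} value $\E_\mu[\qjt]$ with no entropy term. This is not a cosmetic difference; it is the source of a genuine gap in your argument.

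With the entropy term included, the functional $F_{\taujtproof}$ is uniquely maximized over \emph{all} distributions by the Boltzmann law $\mu^\star(\ujt)\propto\exp(\qjt(\taujtproof,\ujt)/\alpha)$. In the setting of the paper $\pijtdep(\cdot\mid\taujtproof)$ is exactly this Boltzmann distribution, and the whole motivation for the theorem is that in general it is \emph{not} a product measure (this is precisely the failure of IGO). Hence $F_{\taujtproof}(\pijtdep)$ sits strictly above $F_{\taujtproof}(q)$ for every product $q$, and no independent policy can match the soft value. Your escape hatch --- ``if $\pijtdep$ already attains the max of $F_{\taujtproof}$ then the (independent) soft-optimal policy equals it and we are done trivially'' --- is the step that fails: attaining the global maximum only forces $\pijtdep$ to equal the Boltzmann law, not to be a product, so there is nothing trivial to conclude. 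Consequently your bracketing never closes in the case that actually matters.

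If you drop the entropy term, your plan goes through and essentially coincides with the paper's proof: the paper brackets $V_{\pijtdep}(\taujtproof)=\E_{\pijtdep}[Q_{\pijtdep}]$ between $\min_{\ujt}Q_{\pijtdep}$ and $\max_{\ujt}Q_{\pijtdep}$, hits both endpoints with \emph{deterministic} product policies (each agent plays its coordinate of the joint argmin or argmax), invokes the intermediate value theorem on the connected set of product distributions, and then unrolls via the ordinary Bellman equation exactly as you describe. Note that your proposed endpoints (uniform versus near-deterministic) would not bracket the standard expectation either, since the uniform product yields the \emph{average} of $Q_{\pijtdep}$ rather than its minimum; the paper's choice of deterministic argmin/argmax is what makes the sandwich clean.
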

\begin{proof}
For a dependent joint policy $\pijtdep$ that involves dependency among agents, let $\max_{\ujt}Q_{\pijtdep} = A$ and $\min_{\ujt}Q_{\pijtdep} = B$, we have $ A \leq V_{\pijtdep}(\taujtproof) \leq B$. Then, we can construct the following independent joint policy $\pijtnodep$:
\begin{align*}
    &\pijtnodep = \prod_{i=1}^{N}\pi_i = \prod_{i=1}^{N} \bm{1} [a_i=\argmax Q_{\pijtdep}[i]].
\end{align*}
For such an independent joint policy $\pijtnodep$, we have $\sum_{\ujt}\pijtnodep Q_{\pijtdep} = A$. Similarly, we can also construct another independent joint policy, such that $\sum_{\ujt}\pijtnodep Q_{\pijtdep} = B$. Based on the generalized intermediate value theorem \citep{munkres2000topology}, We can have that for any dependent joint policy $\pijtdep$, there exist an independent joint policy $\pijtnodep$ such that:
\begin{align*}
    & V_{\pijtdep} = \sum_{\ujt}\pijtdep Q_{\pijtdep} = \sum_{\ujt}\pijtnodep Q_{\pijtdep} = \E_{\ujt_t \sim \pijtnodep}[Q_{\pijtdep}].
\end{align*}
Thus, we can have:
\begin{align*}
    V_{\pijtdep}(\taujtproof_t) &= \E_{\ujt_t \sim \pijtnodep}[Q_{\pijtdep}(\taujtproof_t, \ujt_t)] \\
                      &= \E_{\ujt_t \sim \pijtnodep, s_{t+1} \sim P}[r(\taujtproof_t, \ujt_t) + \gamma V_{\pijtdep}(s_{t+1})] \\
                      & = \E_{(\ujt_t, \ujt_{t+1}) \sim \pijtnodep, s_{t+1} \sim P}[r(\taujtproof_t, \ujt_t) + \gamma Q_{\pijtdep}(\taujtproof_t, \ujt_t)] \\
                      &\quad \vdots\\
                      &= \E_{\ujt_{t:\infty} \sim \pijtnodep, s_{t:\infty} \sim P}[r(\taujtproof_t, \ujt_t) + \gamma r(s_{t+1}, \ujt_{t+1}) + \cdots] \\
                      &=V_{\pijtnodep}(\taujtproof_t),
\end{align*}
which concludes the proof.
\end{proof}

\section{Experiment Settings and Implementation Details}
\label{exp_detail}
\subsection{Matrix Game}
In the matrix game, we use a learning rate of $3 \times 10^{-4}$ for all algorithms. For FOP and MACPF, $\alpha$ decays from 1 to 0.5, with a decay rate of $0.999$ per episode. For QMIX and QPLEX, $\epsilon$ decays from 1 to 0.01, with a decay rate of $0.999$ per episode. The batch size used in the experiment is 64 for FOP, MACPF, QMIX, and QPLEX, and 32 for MAPPO as it is an on-policy learning algorithm. All critics and actors used in the experiments consist of one hidden layer of 64 units with ReLU non-linearity. For the Mixer network, QMIX and MACPF both use hypernetwork, except ELU non-linearity is used for QMIX and no non-linearity is used for MACPF. FOP and QPLEX both use attention network for their mixer network. The environment and model are implemented in Python. All models are built on PyTorch and are trained on a machine with 1 Nvidia GPU (RTX 1060) and 8 AMD CPU Cores.

\subsection{SMAC}
In StarCraft II, for MACPF, we use a learning rate of $5 \times 10^{-4}$. The critic network and policy network of MACPF consist of three layers, a fully-connected layer with 64 units activated by ReLU, followed by a 64 bit GRU, and followed by another fully-connected layer. The policy correction network and critic correction network consist of two layers, one fully-connected layer with 64 units activated by ELU, followed by another fully-connected layer. The target networks are updated after every 200 training episodes. The temperature parameters $\alpha$ and $\alpha_i$ are annealed from 0.5 to 0.05 over 200k time steps for all easy and hard maps and fixed as 0.001 for all super-hard maps. For QMIX, QPLEX, FOP, and MAPPO, we use their default setting of each map. The environment and model are implemented in Python. All models are built on PyTorch and are trained on a machine with 4 Nvidia GPUs (A100) and 224 Intel CPU Cores. For 3s5z\_vs\_3s6z, all models are built on PyTorch and are trained on a machine with 1 Nvidia GPU (RTX 2080 TI) and 16 Intel CPU Cores. Our implementation of MACPF is based on PyMARL \citep{samvelyan19smac} with MIT license. It worth noting that, although we assume full observability for the rigorousness of proof, the trajectory of each agent is used to replace state $s$ for each agent as input to settle the partial observability in all SMAC experiments.

\subsection{MPE}

In MPE (MIT license), we use the default settings of MAPPO. For QMIX, QPLEX, FOP, and MACPF, we use a learning rate of $5 \times 10^{-4}$. For FOP and MACPF, $\alpha$ decays from 0.5 to 0.05 over 50k time steps. For QMIX and QPLEX, $\epsilon$ decays from 1 to 0.05 over 50k time steps. The batch size used in the experiment is 64. All critics and actors used in the experiments consist of hidden layers of 64 units with ReLU non-linearity and 64 bit GRU. For the Mixer network, QMIX and MACPF both use hypernetwork, except ELU non-linearity is used for QMIX and no non-linearity is used for MACPF. FOP and QPLEX both use attention network for their mixer network. The environment and model are implemented in Python. All models are built on PyTorch and are trained on a machine with 1 Nvidia GPU (RTX 2080 TI) and 16 Intel CPU Cores. We also use the trajectory of each agent as input to settle the partial observability in all MPE experiments.

\section{More Experiments on SMAC}
\label{app:moreexp}

\subsection{More Maps}
\begin{figure*}[htb]
        \centering
  		\includegraphics[width=0.295\linewidth]{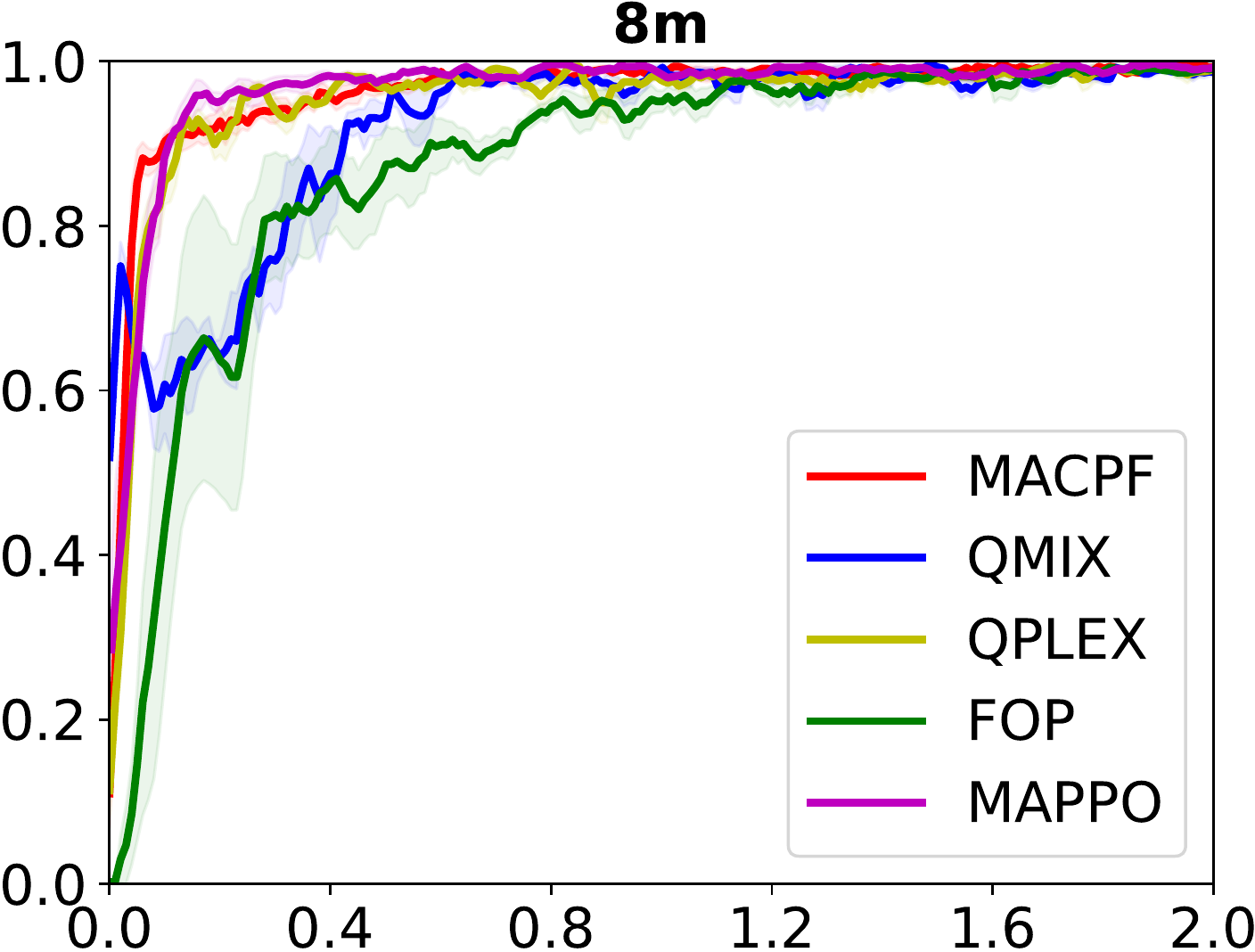}
  		\hspace{2mm}
  		\includegraphics[width=0.295\linewidth]{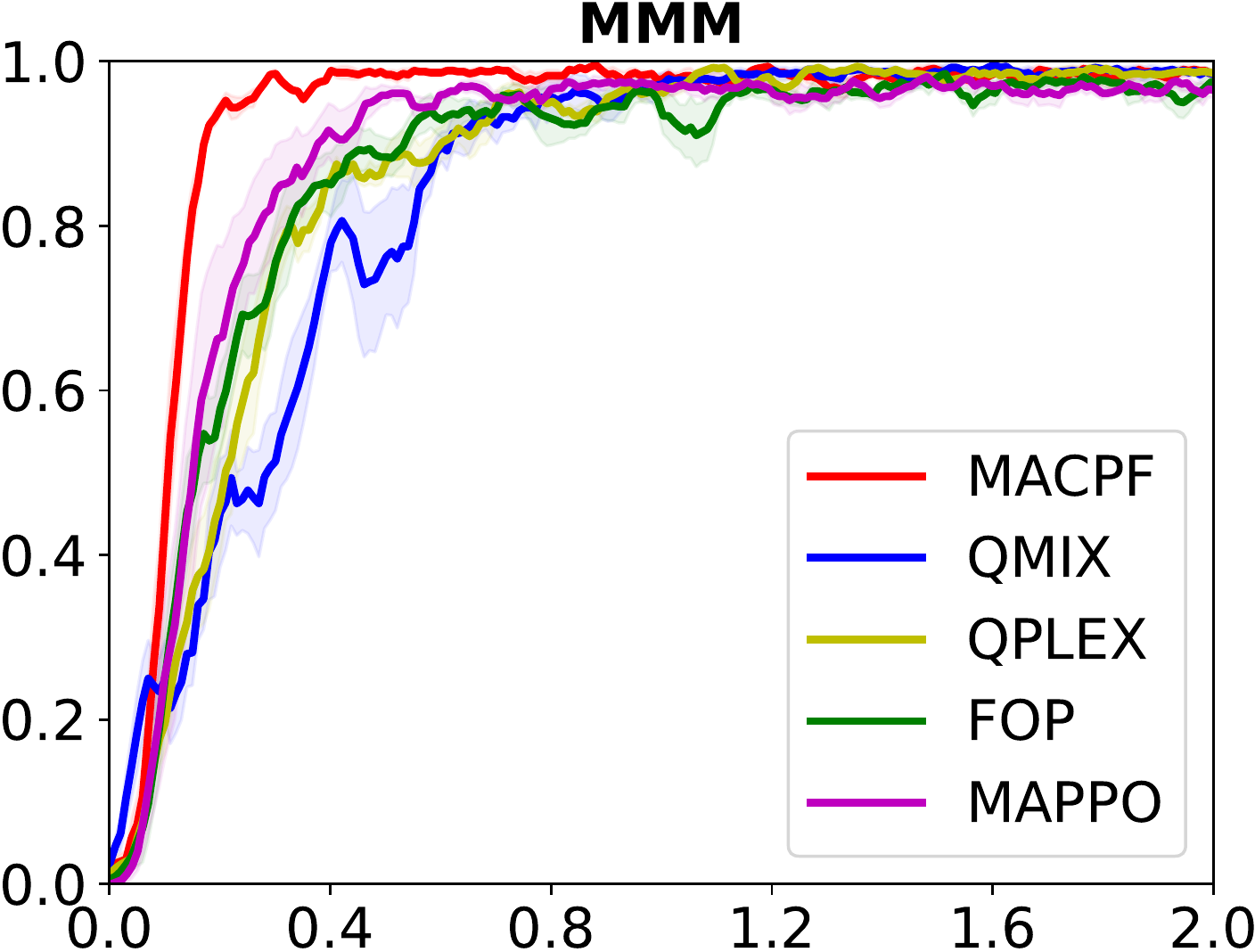}
  		\hspace{2mm}
  		\includegraphics[width=0.295\linewidth]{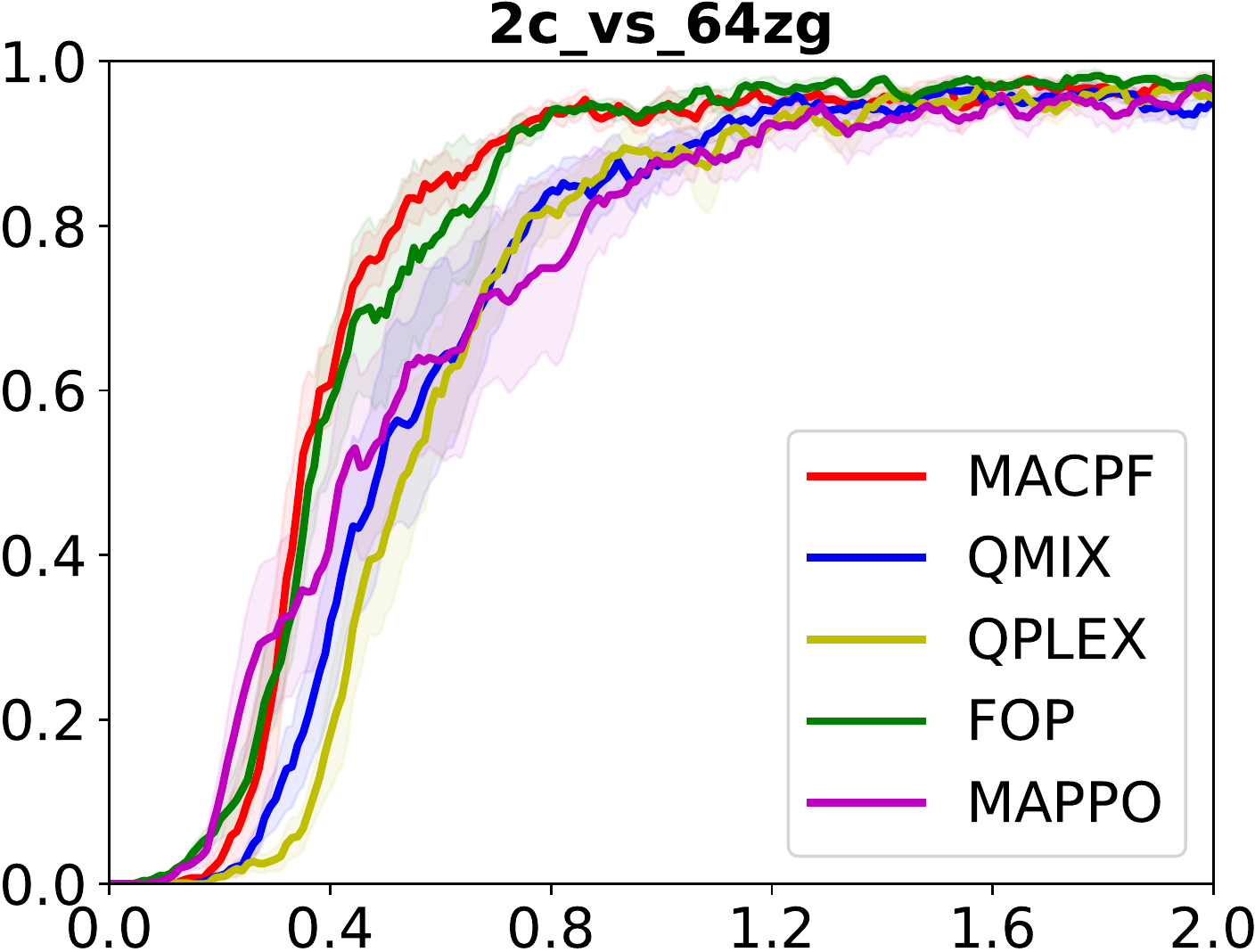}\\
  		\vspace{1mm}
  		\includegraphics[width=0.295\linewidth]{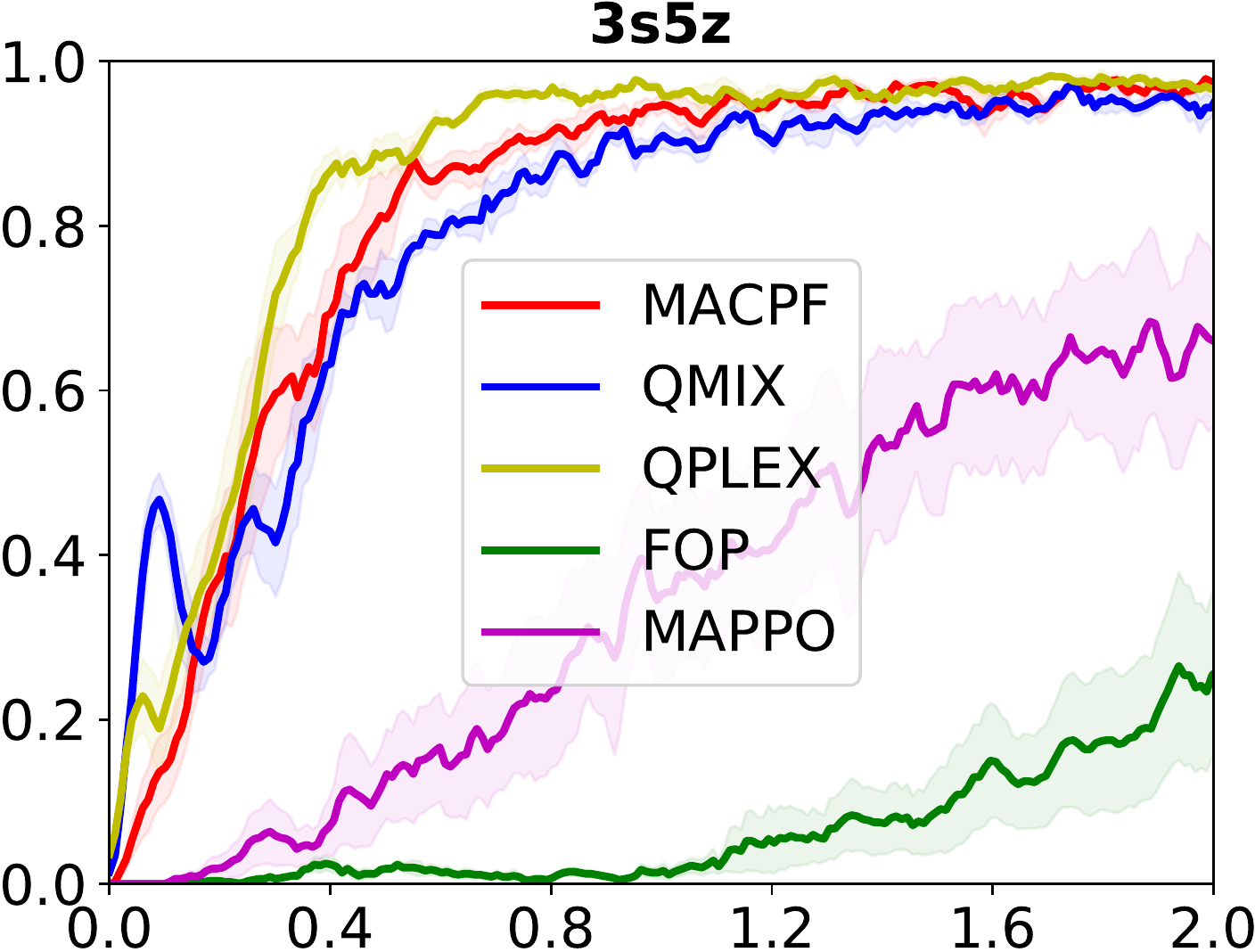}
  		\hspace{2mm}
  		\includegraphics[width=0.295\linewidth]{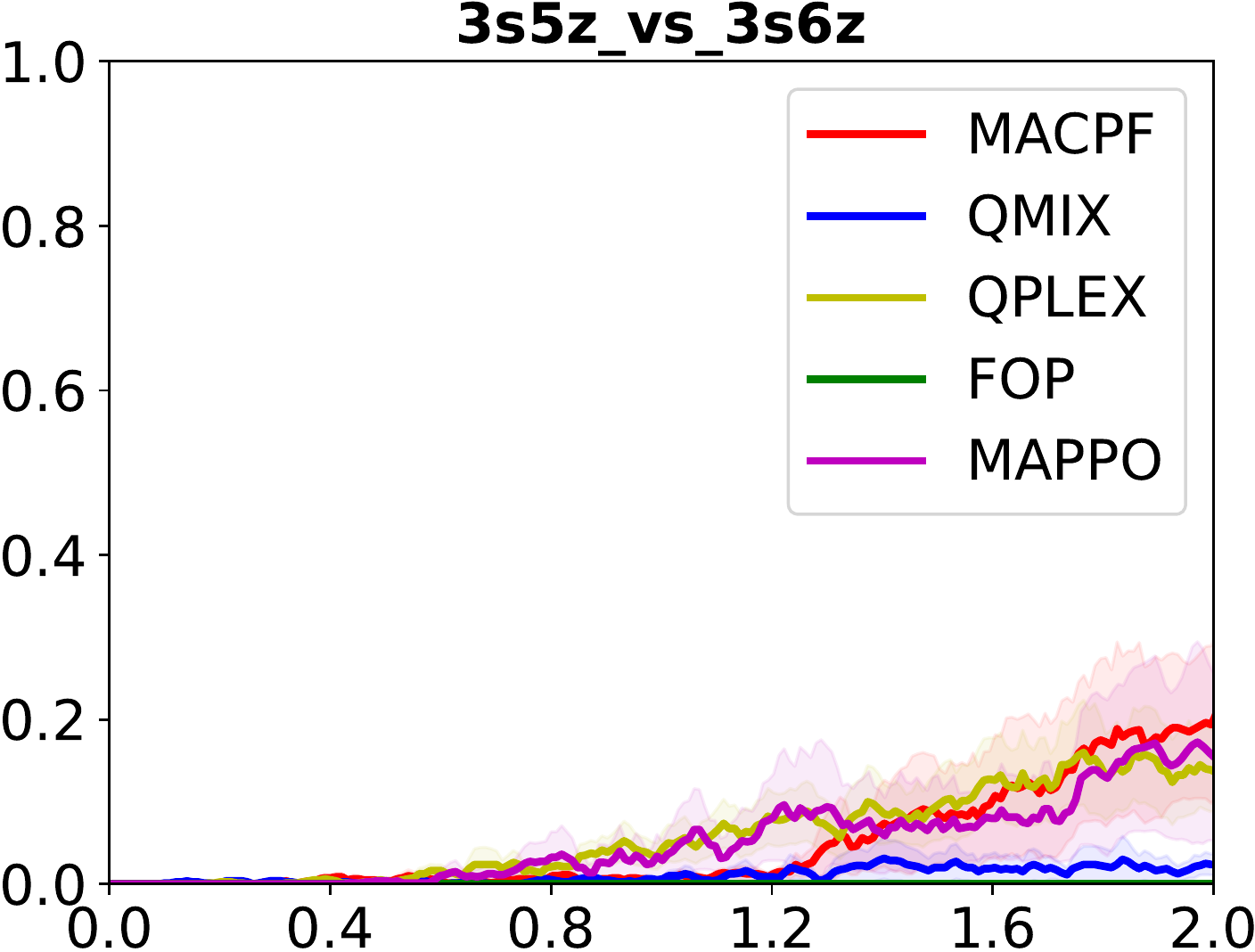}
  		\hspace{2mm}
  		\includegraphics[width=0.295\linewidth]{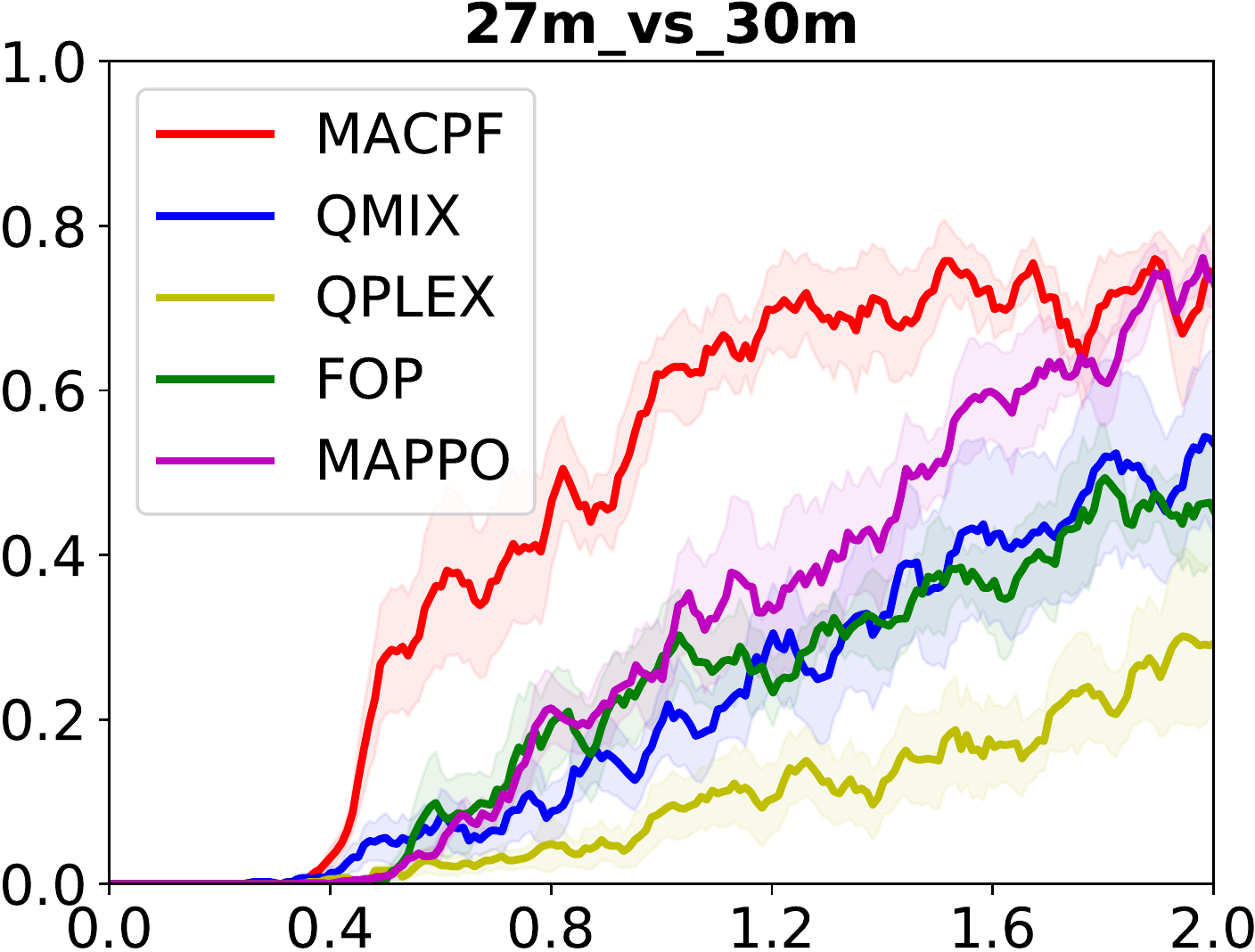}
  		\\
  		\vspace{1mm}
  		\includegraphics[width=0.295\linewidth]{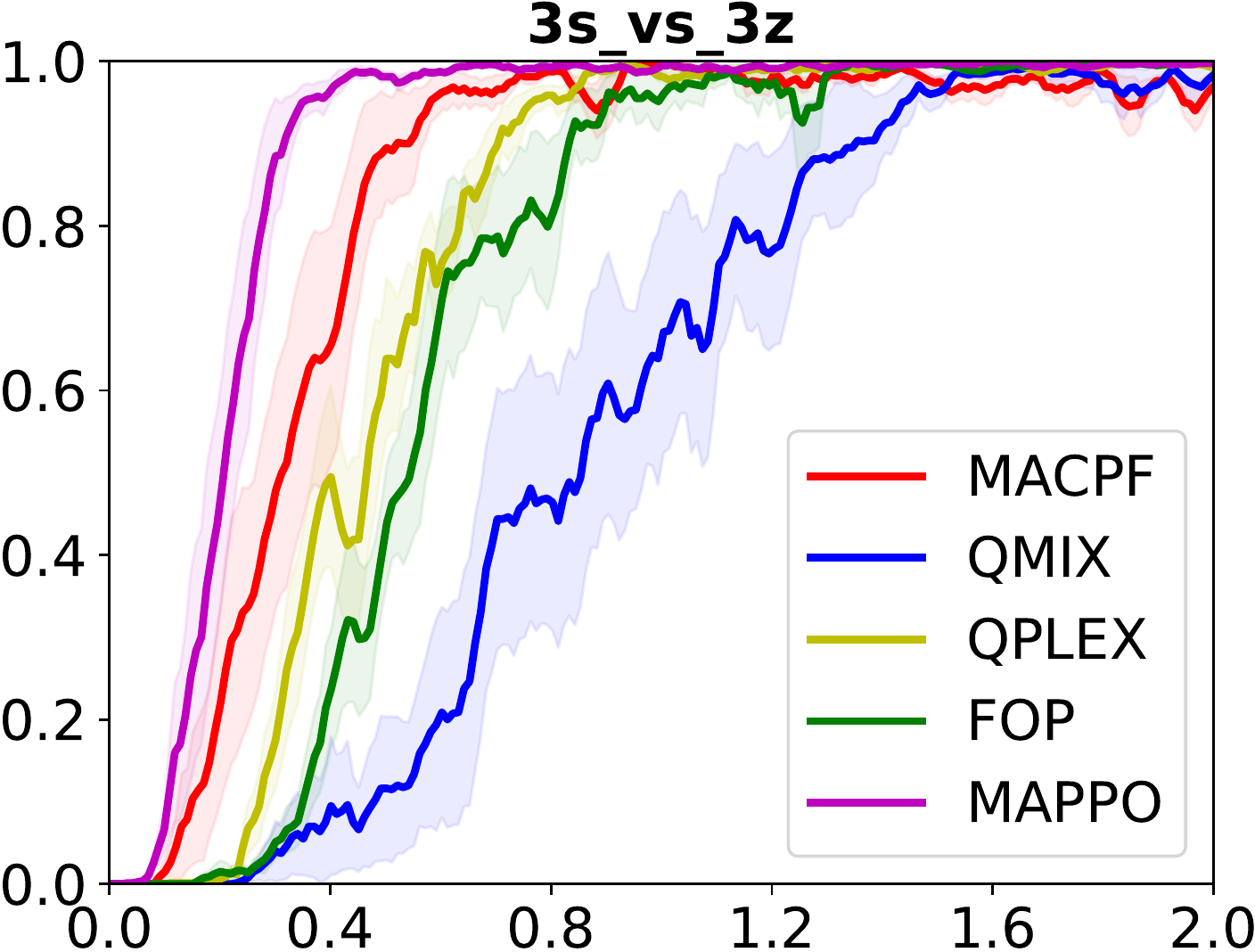}
  		\hspace{2mm}
  		\includegraphics[width=0.295\linewidth]{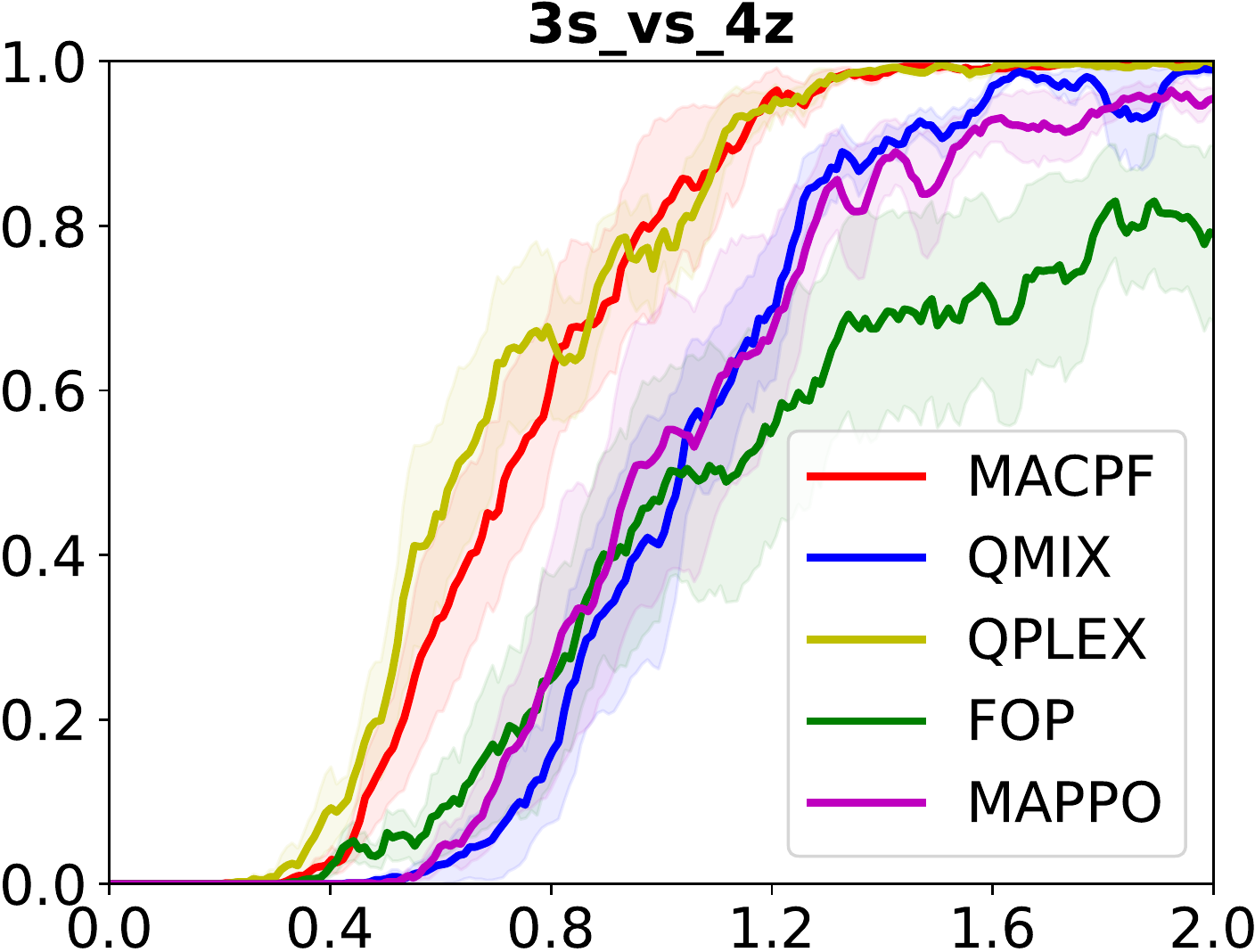}
  		\hspace{2mm}
  		\includegraphics[width=0.295\linewidth]{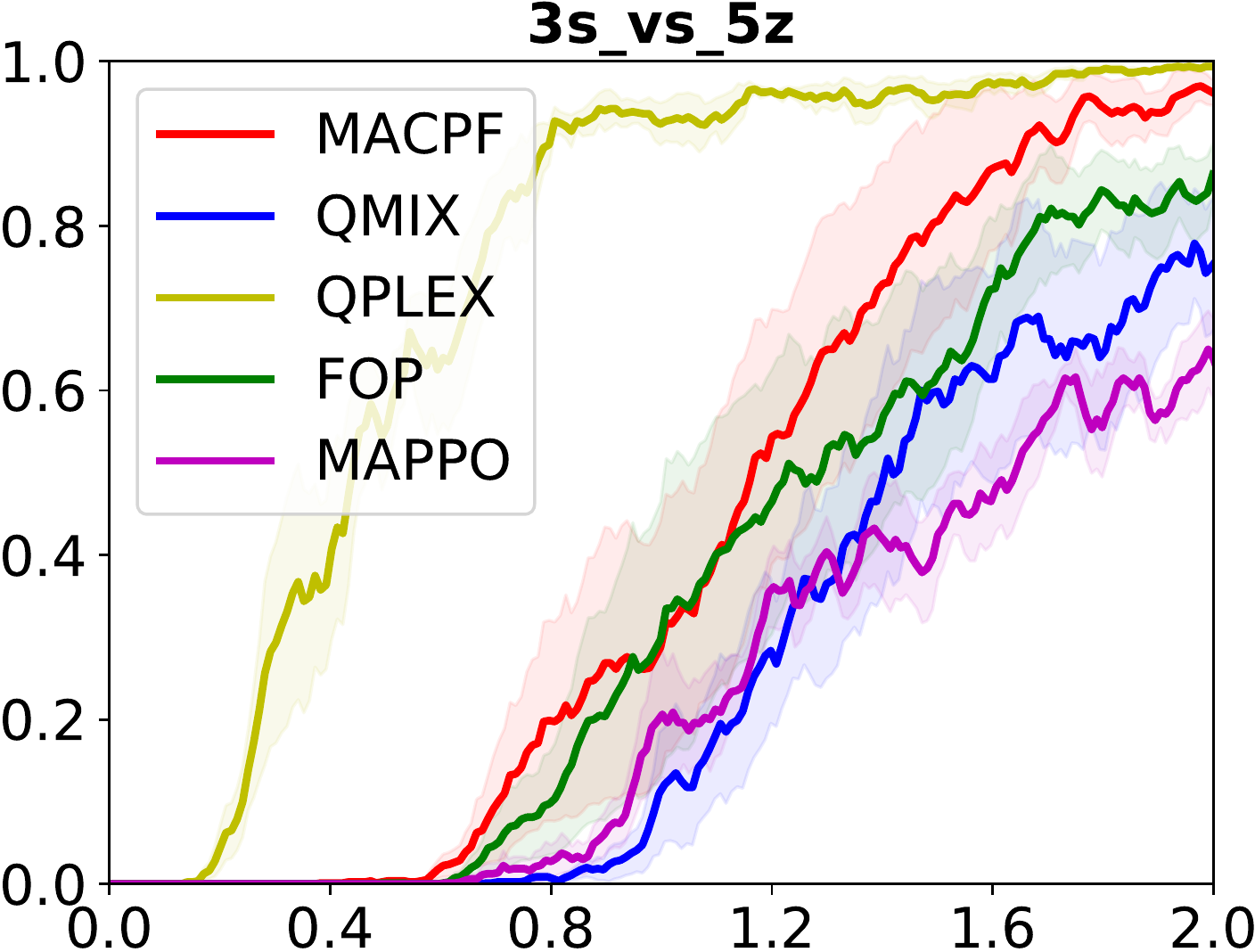}
  		\\
  		\vspace{1mm}
  		\includegraphics[width=0.295\linewidth]{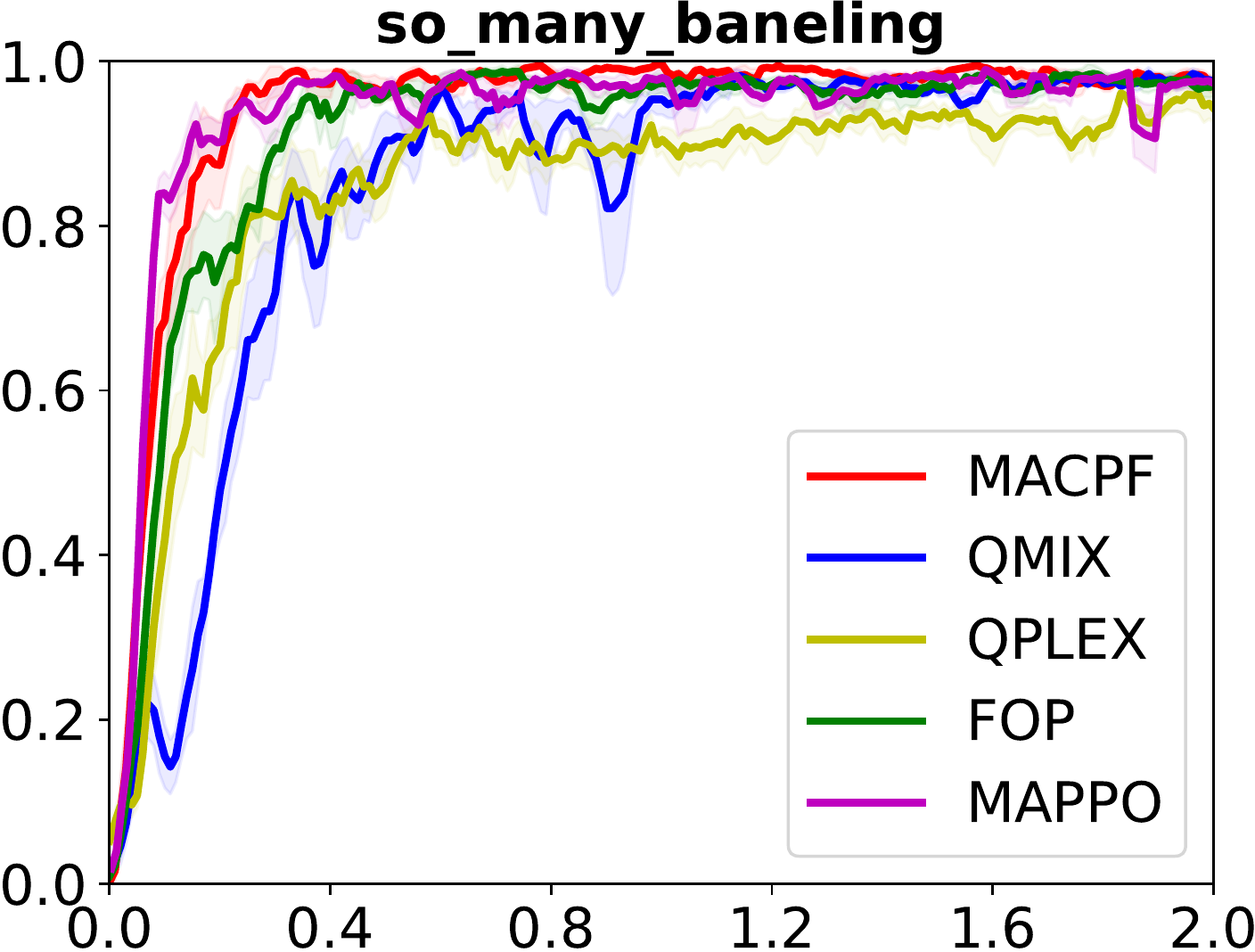}
  		\hspace{2mm}
  		\includegraphics[width=0.295\linewidth]{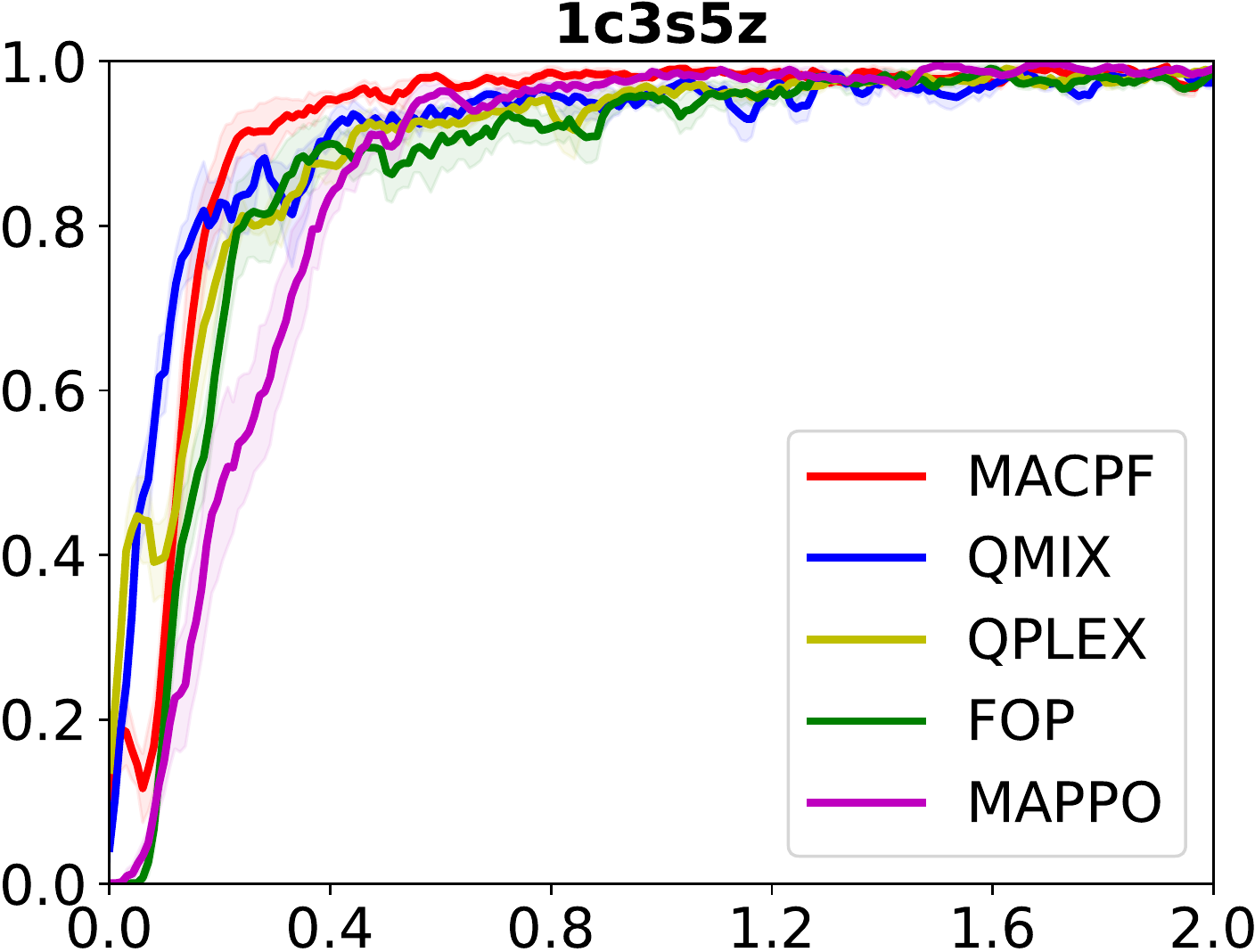}
  		\hspace{2mm}
  		\includegraphics[width=0.295\linewidth]{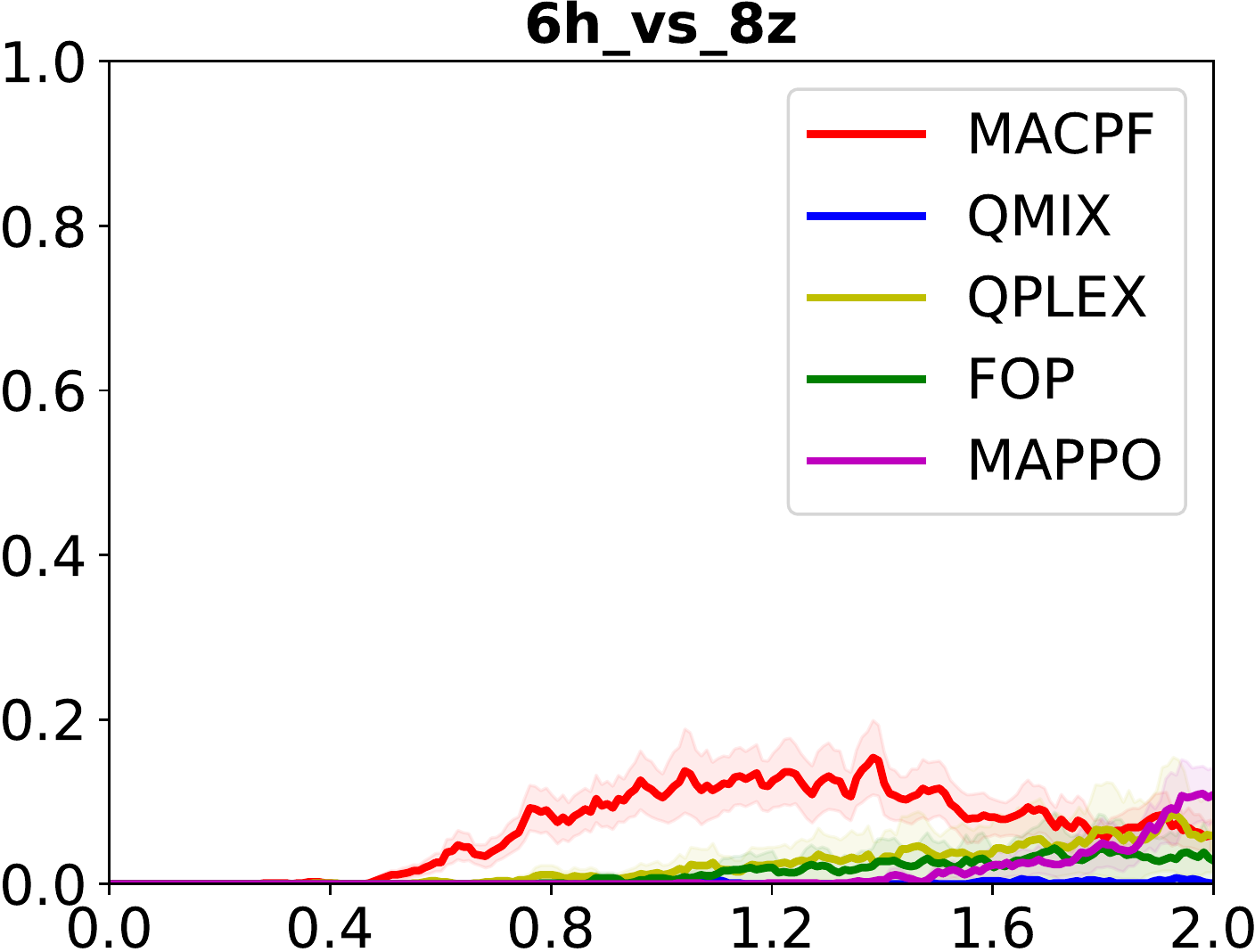}
  		\caption{Learning curves of all the methods in twelve maps of SMAC, where the unit of x-axis is 1M timesteps and y-axis represents the win rate of each map.}
        \label{Fig:more_smac_result}
\vspace{-2mm}
\end{figure*}

We additionally evaluate MACPF on more SMAC maps. The maps used here include six easy maps (8m, MMM, 3s\_vs\_3z, 3s\_vs\_4z, so\_many\_baneling, 1c3s5z), three hard maps (3s5z, 2c\_vs\_64zg, 3s\_vs\_5z) and three super-hard maps (3s5z\_vs\_3s6z, 27m\_vs\_30m, 6h\_vs\_8z). Results are shown in Figure \ref{Fig:more_smac_result}. In general, MACPF matches or slightly outperforms the best performance of the baselines on all twelve maps.

\subsection{Summary of SMAC Final Performance}
\label{app:smac_summary}
In this section, we provide the summary of SMAC experiments in terms of final performance. All results are achieved by 2M training timesteps. As shown in Table \ref{tab:smac summary}, MACPF outperforms or at least matches the best performance of the baselines on all twelve maps.

\begin{table}[h!]
\small
\renewcommand{\arraystretch}{1.2}
\setlength{\tabcolsep}{2pt}
    \centering
    \caption{Final performance on all SMAC maps. We bold all values within one standard deviation of the best mean performance for each map.}
       \label{tab:smac summary}
    \begin{tabular}{|c || c | c|  c| c| c|}
    \hline
    \diagbox{Tasks}{Algorithms} & MACPF & QMIX & QPLEX & FOP & MAPPO\\
        \hline\hline
        8m (easy) &\textbf{0.994}$\pm$0.006&0.986$\pm$0.011&0.99$\pm$0.01&0.992$\pm$0.005&\textbf{0.997}$\pm$0.003\\
        \hline
        MMM (easy) &\textbf{0.984}$\pm$0.015&\textbf{0.984}$\pm$0.01&\textbf{0.985}$\pm$0.011&\textbf{0.975}$\pm$0.017&0.962$\pm$0.035\\
        \hline
        2c\_vs\_64zg (hard) &\textbf{0.972}$\pm$0.031&0.946$\pm$0.013&0.954$\pm$0.031&\textbf{0.976}$\pm$0.011&0.945$\pm$0.037\\
       \hline
        3s5z (hard) &\textbf{0.976}$\pm$0.008&0.955$\pm$0.017&\textbf{0.969}$\pm$0.018&0.26$\pm$0.212&0.715$\pm$0.215\\
       \hline
        8m\_vs\_9m (hard) &\textbf{0.919}$\pm$0.045&\textbf{0.916}$\pm$0.039&0.798$\pm$0.021&0.571$\pm$0.314&0.85$\pm$0.095\\
       \hline
        10m\_vs\_11m (hard) &\textbf{0.965}$\pm$0.035&\textbf{0.939}$\pm$0.032&\textbf{0.95}$\pm$0.016&0.545$\pm$0.265&0.774$\pm$0.106\\
       \hline       
        MMM2 (super-hard) &\textbf{0.788}$\pm$0.083&\textbf{0.709}$\pm$0.162&0.224$\pm$0.231&0.506$\pm$0.144&0.679$\pm$0.054\\
       \hline  
        3s5z\_vs\_3s6z (super-hard) &\textbf{0.209}$\pm$0.202&\textbf{0.024}$\pm$0.031&\textbf{0.135}$\pm$0.090&0.0$\pm$0.0&\textbf{0.144}$\pm$0.175\\
       \hline 
        corridor (super-hard) &\textbf{0.691}$\pm$0.349&0.0$\pm$0.0&0.002$\pm$0.005&0.0$\pm$0.0&\textbf{0.58}$\pm$0.184\\
       \hline        
        27m\_vs\_30m (super-hard) &\textbf{0.726}$\pm$0.094&0.532$\pm$0.23&0.294$\pm$0.159&0.45$\pm$0.143&\textbf{0.78}$\pm$0.095\\
       \hline   
        8m\_vs\_9m (myopic) &\textbf{0.855}$\pm$0.069&0.675$\pm$0.127&0.716$\pm$0.075&0.338$\pm$0.329&\textbf{0.81}$\pm$0.119\\
       \hline
        10m\_vs\_11m (myopic) &\textbf{0.888}$\pm$0.188&\textbf{0.702}$\pm$0.129&0.664$\pm$0.089&0.384$\pm$0.372&0.514$\pm$0.253\\
       \hline
        3s\_vs\_3z (easy) &0.974$\pm$0.019&0.988$\pm$0.014&0.994$\pm$0.004&\textbf{0.999}$\pm$0.002&\textbf{0.997}$\pm$0.003\\
       \hline
        3s\_vs\_4z (easy) &\textbf{0.995}$\pm$0.005&0.99$\pm$0.008&\textbf{0.997}$\pm$0.003&0.789$\pm$0.22&0.957$\pm$0.022\\
       \hline
        3s\_vs\_5z (hard) &0.959$\pm$0.033&0.759$\pm$0.153&\textbf{0.992}$\pm$0.006&0.862$\pm$0.076&0.576$\pm$0.063\\
       \hline
        so\_many\_baneling (easy) &\textbf{0.969}$\pm$0.019&\textbf{0.974}$\pm$0.009&0.941$\pm$0.037&\textbf{0.97}$\pm$0.025&\textbf{0.979}$\pm$0.012\\
       \hline
        1c3s5z (easy) &\textbf{0.984}$\pm$0.006&0.98$\pm$0.013&\textbf{0.985}$\pm$0.003&\textbf{0.984}$\pm$0.005&\textbf{0.989}$\pm$0.007\\
       \hline
        6h\_vs\_8z (super-hard) &\textbf{0.059}$\pm$0.038&0.001$\pm$0.002&\textbf{0.059}$\pm$0.09&0.028$\pm$0.055&\textbf{0.13}$\pm$0.074\\
       \hline
       \end{tabular}
\end{table}




\section{Mixer Selection}
\label{app:mixer}

As mentioned in Section \ref{exp:smac}, we use a hypernetwork without non-linearity as our mixer network, which differs from QMIX, QPLEX, and FOP. In QPLEX and FOP, weighted summation is used to reflect the relationship between $\qjt$ and $Q_i$, where the weight is a function of both state and agent actions, such that the dependency among agents is implicitly considered. However, this implicit dependency may contradict our explicit dependency model in $Q_{i}^{\operatorname{dep}}$ and decrease the performance of both $\qjtdep$ and $\qjtnodep$.   

Another choice is to use a hypernetwork with non-linearity to reflect the relationship between $\qjt$ and $Q_i$, which is used in QMIX. However, due to the existence of the non-linearity unit, two joint actions with the same $\qjt$ value may not be properly decomposed into two sets of $Q_i$ with the same sum. Thus, their joint probability may not be the same, and the dependency among agents is distorted.

Therefore, the only option left for MACPF is to use a hypernetwork without non-linearity, which is equivalent to weighted summation where the weight is just a function of state. 

As shown in Figure \ref{Fig:mixer_smac_result}, MACPF\_NONLINEAR and MACPF\_ATT represent algorithms where all other aspects are the same as MACPF, except using a hypernetwork with non-linearity and a weighted summation with actions as input as their mixer networks, respectively. MACPF\_NONLINEAR achieves similar performance as MACPF in the easy and hard maps, indicating that even distorted dependency can still benefit the learning. However, in the super-hard maps, MACPF outperforms MACPF\_NONLINEAR, demonstrating the importance of accurate modeling of dependency among agents. MACPF\_ATT is outperformed by both MACPF and MACPF\_NONLINEAR by a large margin in all the maps, which verifies that the implicit dependency model in the mixer network of MACPF\_ATT conflicts with the explicit dependency model in $Q_{i}^{\operatorname{dep}}$.

\begin{figure*}[htb]
        \centering
  		\includegraphics[width=0.23\linewidth]{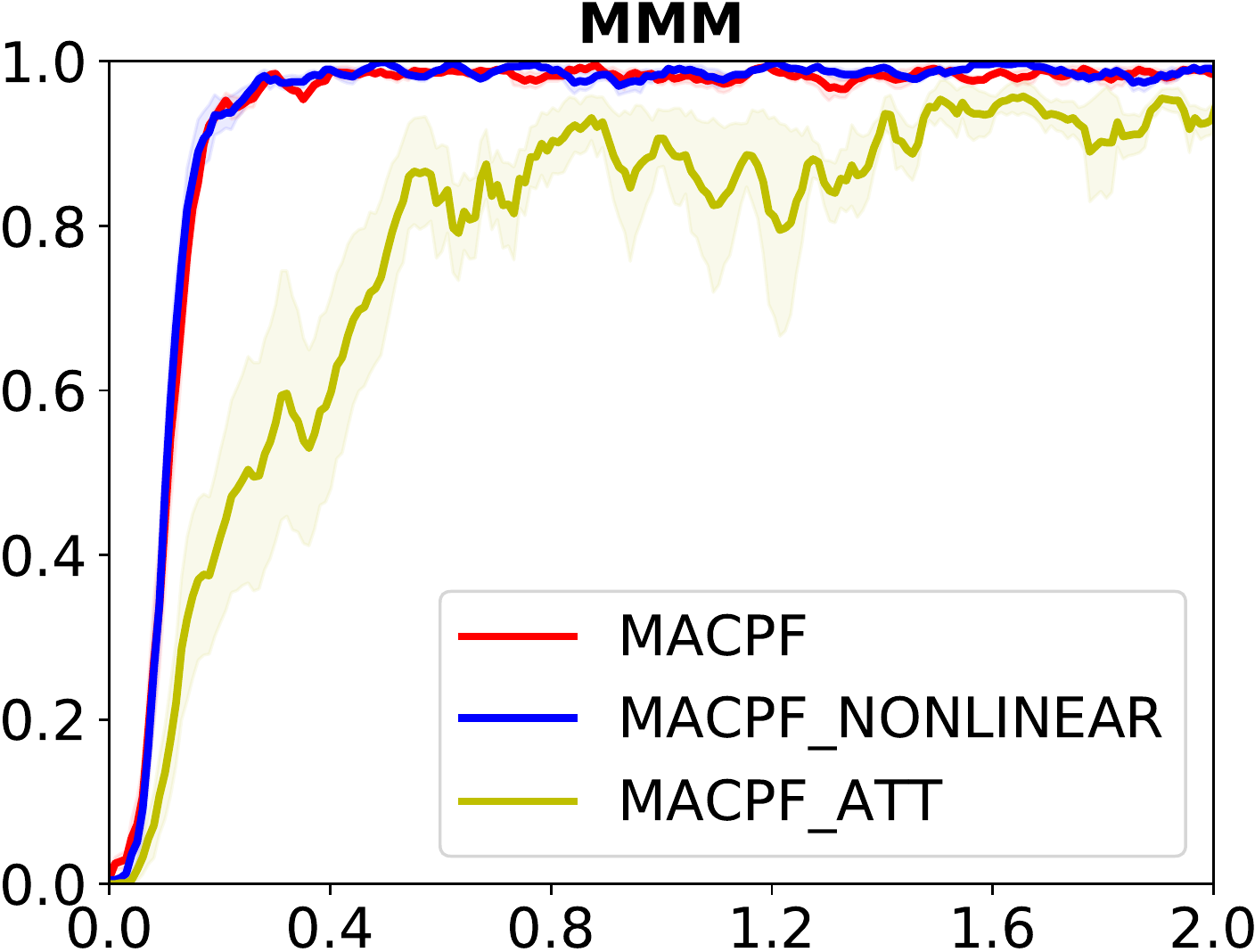}
  		\hspace{2mm}
  		\includegraphics[width=0.23\linewidth]{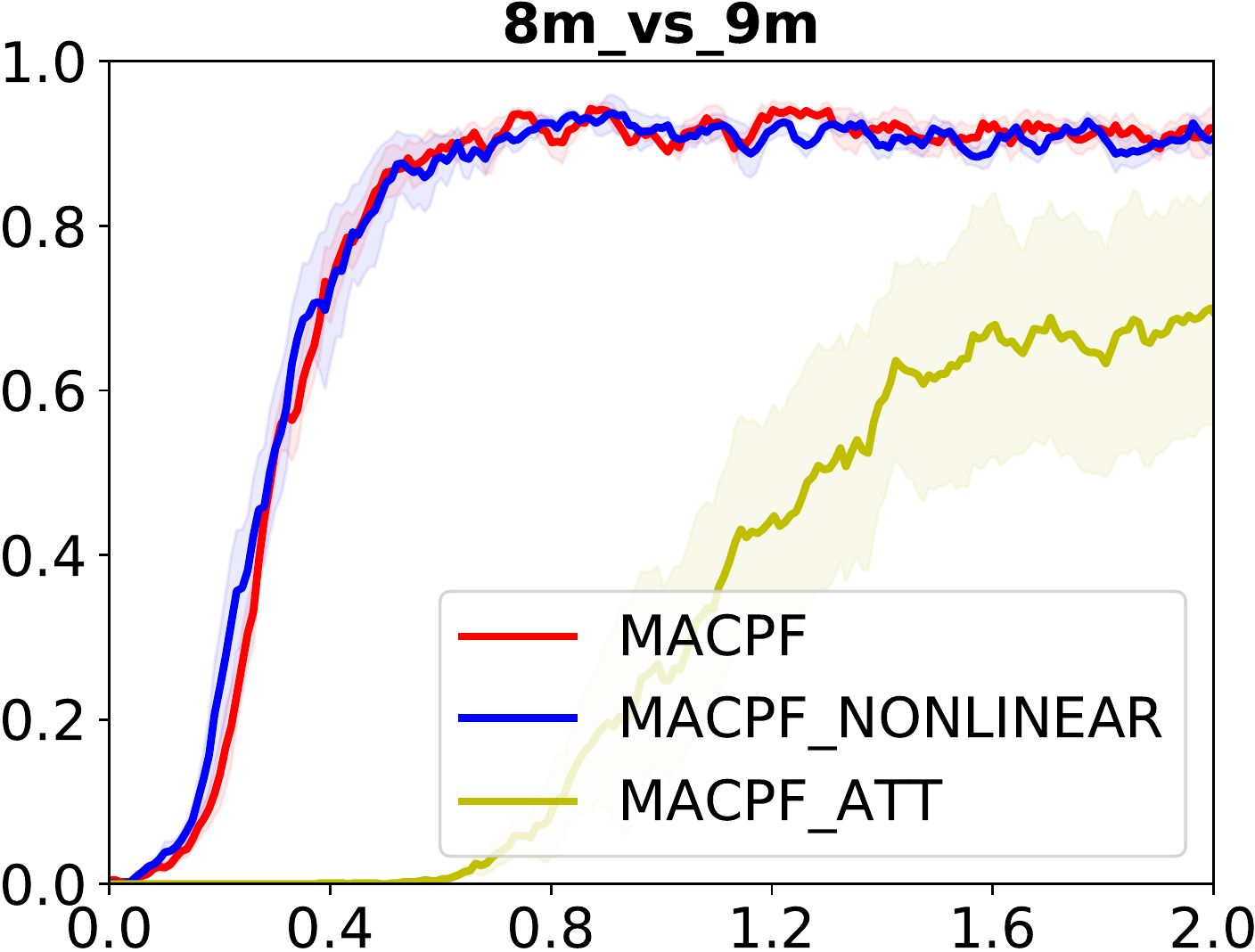}
  		\hspace{2mm}
  		\includegraphics[width=0.23\linewidth]{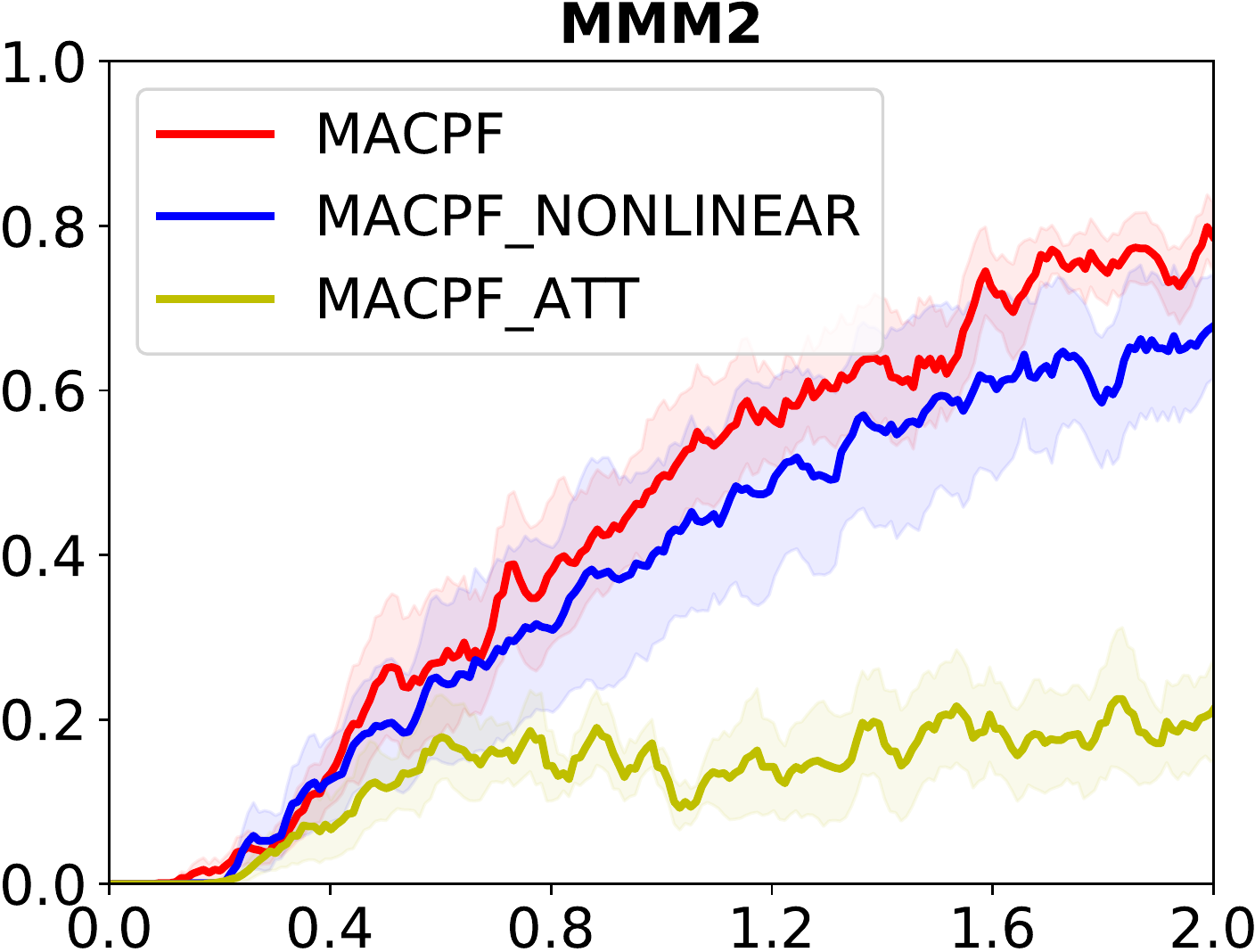}
  		\hspace{2mm}
  		\includegraphics[width=0.23\linewidth]{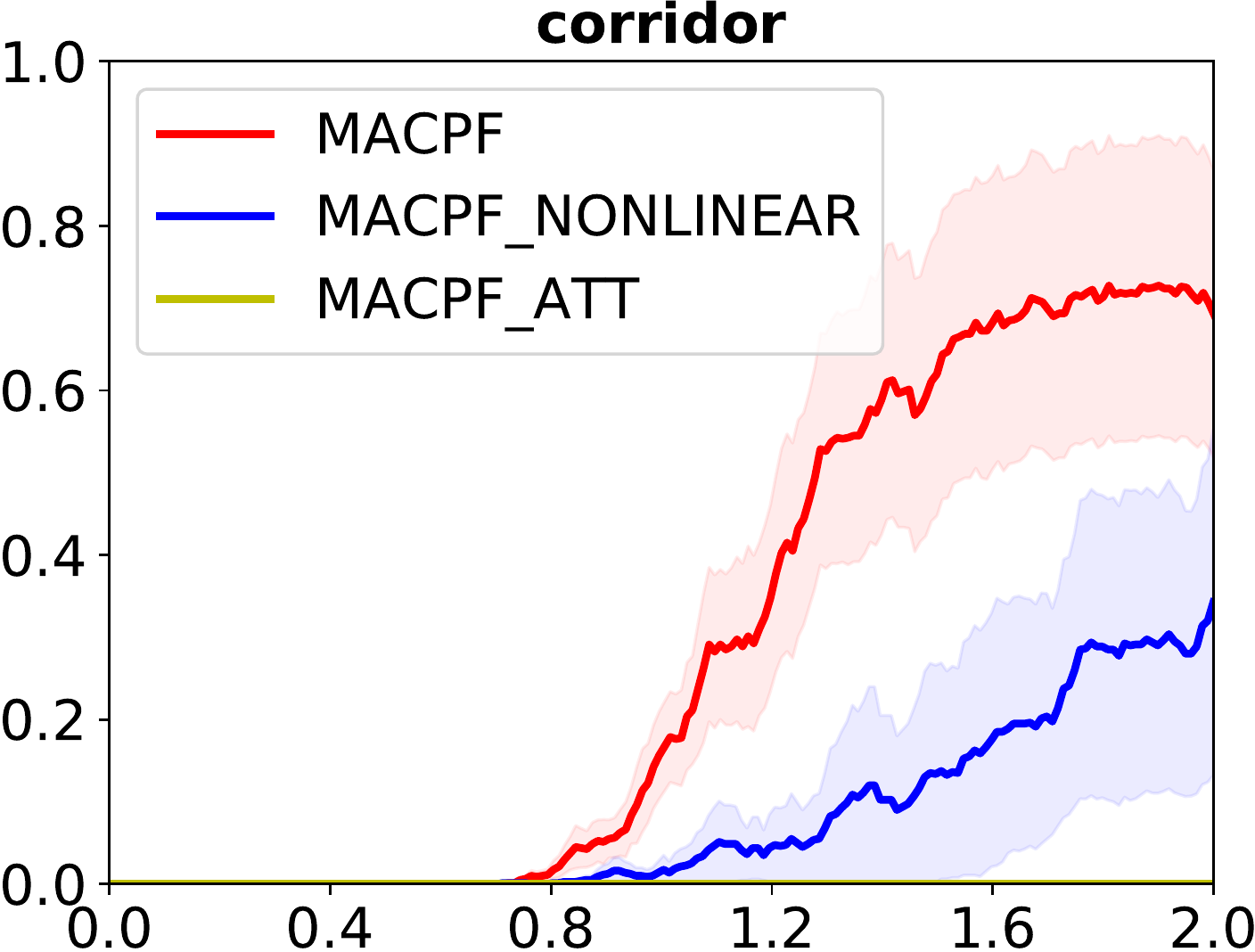}
  		\caption{Ablation study of the mixer selection of MACPF on four maps of SMAC, including one easy map (MMM), one hard map (8m\_vs\_9m) and two super-hard maps (MMM2, corridor), where the unit of x-axis is 1M timesteps and y-axis represents the win rate of each map.}
        \label{Fig:mixer_smac_result}
\vspace{-2mm}
\end{figure*}

\section{Future Work}
\label{limit and future}

One limitation of our work is the sequential decision-making process in training. Since the dependent local policy $\pidep$ takes as input the joint action of all agents whose indices are smaller than agent $i$, agents have to make decisions one by one. This makes the whole decision process be $O(N)$. There is not much difference when $N$ is small. However, when $N$ is large, it slows down the training process. One approximate solution is to divide agents into groups, such that agents can make decisions group by group instead of one by one. However, such a mechanism may raise a new question about how to group agents, which will be considered in future work.

\end{document}